\renewcommand{\paragraph}{%
  \@startsection{paragraph}{4}%
  {\z@}{2.25ex \@plus 1ex \@minus .2ex}{-1em}%
  {\normalfont\normalsize\bfseries}%
}
\newcommand\restr[2]{{
  \left.\kern-\nulldelimiterspace 
  #1 
  \vphantom{\big|} 
  \right|_{#2} 
  }}
\newtheorem{theorem}{Theorem}[section]
\newtheorem{lemma}[theorem]{Lemma}
\newtheorem{corollary}[theorem]{Corollary} 
\theoremstyle{definition}
\newtheorem{definition}[theorem]{Definition}
\newtheorem{assumption}[theorem]{Assumption}
\theoremstyle{plain}
\DeclareMathOperator{\Var}{Var}
\newcommand{\norm}[1]{\left \lVert #1 \right \rVert}
\mathchardef\mhyphen="2D
\newtheoremstyle{break}
  {\topsep}{\topsep}%
  {}{}%
  {\bfseries}{}%
  {\newline}{}%
\theoremstyle{break}
\DeclareMathOperator*{\argmin}{argmin}
\newcommand{\RR}{\mathbb{R}}
\newcommand{\EE}{\mathbb{E}}
\DeclareMathOperator{\Cov}{Cov}
\DeclareMathOperator{\poly}{poly}
\DeclareMathOperator{\TV}{TV}
\newcommand{\xor}{\triangle}
\newcommand{\Gradient}{\nabla}
\theoremstyle{remark}
\newtheorem{remark}{Remark}
\newcommand{\ignore}[1]{{}}
\renewcommand{\paragraph}{%
  \@startsection{paragraph}{4}%
  {\z@}{1.25ex \@plus 1ex \@minus .2ex}{-1em}%
  {\normalfont\normalsize\bfseries}%
}
\title{Robust Generalized Method of Moments:\\ A Finite Sample Viewpoint}
\author{Dhruv Rohatgi \thanks{This work was partially done while the first author was an intern at Microsoft Research New England.} \\ MIT \\ \texttt{drohatgi@mit.edu} \And Vasilis Syrgkanis \\ Microsoft Research, New England \\ \texttt{vasy@microsoft.com}}
\begin{document}
\maketitle

\begin{abstract}
For many inference problems in statistics and econometrics, the unknown parameter is identified by a set of moment conditions. A generic method of solving moment conditions is the Generalized Method of Moments (GMM). However, classical GMM estimation is potentially very sensitive to outliers. Robustified GMM estimators have been developed in the past, but suffer from several drawbacks: computational intractability, poor dimension-dependence, and no quantitative recovery guarantees in the presence of a constant fraction of outliers. In this work, we develop the first computationally efficient GMM estimator (under intuitive assumptions) that can tolerate a constant $\epsilon$ fraction of adversarially corrupted samples, and that has an $\ell_2$ recovery guarantee of $O(\sqrt{\epsilon})$. To achieve this, we draw upon and extend a recent line of work on algorithmic robust statistics for related but simpler problems such as mean estimation, linear regression and stochastic optimization. As two examples of the generality of our algorithm, we show how our estimation algorithm and assumptions apply to instrumental variables linear and logistic regression. Moreover, we experimentally validate that our estimator outperforms classical IV regression and two-stage Huber regression on synthetic and semi-synthetic datasets with corruption.
\end{abstract}

\section{Introduction}

Econometric and causal inference methodologies are increasingly being incorporated in automated large scale decision systems. Inevitably these systems need to deal with the plethora of practical issues that arise from automation. One important aspect is being able to deal with corrupted or irregular data, either due to poor data collection, the presence of outliers, or adversarial attacks by malicious agents. Even more classical applications of econometric methods in social science studies, can greatly benefit from robust inference so as not to draw conclusions solely driven by a handful of samples, as was recently highlighted in \cite{broderick2021automatic}.

Recent work in statistical machine learning has enabled robust estimation for regression problems and more generally estimation problems that reduce to the minimization of a stochastic loss. However, many estimation methods in causal inference and econometrics do not fall under this umbrella. A more general statistical framework that encompasses the most widely used estimation techniques in econometrics and causal inference is the framework of estimating models defined via \emph{moment conditions}. In this paper we offer a robust estimation algorithm that extends prior recent work in robust statistics to this more general estimation setting.

For a family of distributions $\{\mathcal{D}_\theta: \theta \in \Theta\}$, identifying the parameter $\theta$ is often equivalent to solving 
\begin{equation} \EE_{X \sim \mathcal{D}_\theta}[g(X, \theta)] = 0, \label{eq:moment-conditions} \end{equation}
for an appropriate problem-specific vector-valued function $g$. This formalism encompasses such problems as linear regression (with covariates $X$, response $Y$, and moment $g((X, Y), \theta) = X(Y - X^T\theta)$) and instrumental variables linear regression (with covariates $X$, response $Y$, instruments $Z$, and moment $g((X,Y,Z), \theta) = Z(Y - X^T\theta)$).

Under simple identifiability assumptions, moment conditions are statistically tractable, and can be solved by the \emph{Generalized Method of Moments} (GMM) \cite{hansen1982large}. Given independent observations $X_1,\dots,X_n \sim \mathcal{D}_\theta$, the GMM estimator is \[\hat{\theta} = \argmin_{\theta \in \Theta} \left(\frac{1}{n} \sum_{i=1}^n g(X_i, \theta)\right)^T W \left(\frac{1}{n} \sum_{i=1}^n g(X_i, \theta)\right)\] for a positive-definite weight matrix $W$. Of course, for general functions $g$, finding $\hat{\theta}$ (the global minimizer of a potentially non-convex function) may be computationally intractable. Under stronger assumptions, all approximate \emph{local} minima of the above function are near the true parameter, in which case the GMM estimator is efficiently approximable. For instrumental variables (IV) linear regression, these assumptions follow from standard non-degeneracy assumptions.

Due to its flexibility, the GMM estimator is widely used in practice (or heuristic variants, in models where it is computationally intractable). Unfortunately, like most other classical estimators in statistics, the GMM estimator suffers from a lack of robustness: a single outlier in the observations can arbitrarily corrupt the estimate.

\paragraph{Robust statistics} Initiated by Tukey and Huber in the 1960s, robust statistics is a broad field studying estimators which have provable guarantees even in the presence of outliers \cite{huber2004robust}. Outliers can be modelled as samples from a heavy-tailed distribution, or even as adversarially and arbitrarily corrupted data. Classically, robustness of an estimator against arbitrary outliers is measured by breakdown point (the fraction of outliers which can be tolerated without causing the estimator to become unbounded \cite{hampel1971general}) and influence (the maximum change in the estimator under an infinitesimal fraction of outliers \cite{hampel1974influence}). These metrics have spurred development and study of numerous statistical estimators which are often used in practice to mitigate the effect of outliers (e.g. Huber loss for mean estimation, linear regression, and other problems \cite{huber1992robust}).

Unfortunately, classical robust statistics suffers from a number of limitations due to emphasis on statistical efficiency and low-dimensional statistical problems. In particular, until the last few years, most high-dimensional statistical problems lacked robust estimators satisfying the following basic properties (see e.g. \cite{diakonikolas2019robust} for discussion in the setting of learning Gaussians and mixtures of Gaussians):
\begin{enumerate}
    \item Computational tractability (i.e. evading the curse of dimensionality)
    \item Robustness to a constant fraction of arbitrary outliers
    \item Quantitative error guarantees without dimension dependence.
\end{enumerate}

In a revival of robust statistics within the field of theoretical computer science, estimators with the above properties have been developed for various fundamental problems in high-dimensional statistics, including mean and covariance estimation \cite{diakonikolas2019robust, diakonikolas2017being}, linear regression \cite{diakonikolas2019efficient, bakshi2021robust}, and stochastic optimization \cite{diakonikolas2019sever}. However, practitioners in econometrics and applied statistics often employ more sophisticated inference methods such as GMM and IV regression, for which computationally and statistically efficient robust estimators are still lacking.

\paragraph{Our contribution} In this work, we address the aforementioned lack. Extending the \textsc{Sever} algorithm for robust stochastic optimization \cite{diakonikolas2019sever}, we develop a computationally efficient and provably robust GMM estimator under intuitive deterministic assumptions about the uncorrupted data. We instantiate this estimator for two special cases of GMM---instrumental variables linear regression and instrumental variables logistic regression---under distributional assumptions about the covariates, instruments, and responses (and in fact our algorithm also applies to the IV generalized linear model under certain conditions on the link function).

We corroborate the theory with experiments solving IV linear regression on corrupted synthetic and semi-synthetic data, which demonstrate that our algorithm outperforms non-robust IV as well as Huberized IV.

\paragraph{Techniques and Relation to [DKKLSS19]}
Our robust GMM algorithm builds upon the \textsc{Sever} algorithm and framework introduced in \cite{diakonikolas2019sever} for stochastic optimization. In this section, we briefly outline the relation. The \textsc{Sever} algorithm robustly finds an approximate critical point for the empirical mean of input functions $f_1,\dots,f_n: \RR^d \to \RR$, i.e. for convex functions, approximately and robustly solves \[\frac{1}{n} \sum_{i=1}^n \Gradient f_i(w^*) = 0.\]
The approach is to alternate between (a) finding an approximate critical point $\hat{w}$ of the current sample set, and (b) filtering the sample set by $\Gradient f_i(\hat{w})$, until convergence (i.e. when no samples are filtered out). Filtering ensures that at convergence, the mean of $\Gradient f_i(\hat{w})$ over the current sample set (which is small by criticality) is near the mean over the uncorrupted samples, so $\hat{w}$ is an approximate critical point for the uncorrupted samples, as desired.

Any moment condition which is the gradient of some function can be interpreted as a critical-point finding problem, and solved in the above way. An example is linear regression, where the moment $g(w) = X(Y - X^T W)$ is the gradient of the squared-loss $f(w) = \norm{Y - X^T w}_2^2$. However, $g(w) = Z(Y - X^T w)$ is not a gradient, so IV linear regression cannot directly be solved by \textsc{Sever}. In general, we need a way to robustly find an approximate solution to \[\frac{1}{n}\sum_{i=1}^n g(w^*) = 0.\]
Our approach is to alternate approximately minimizing \[\norm{\frac{1}{|S|} \sum_{i \in S} g(w)}_2^2,\] where $S$ is the current sample set, with a filtering step. However, it is not sufficient to filter by $g(\hat{w})$, because the minimization step does not necessarily output $\hat{w}$ for which $\frac{1}{|S|} \sum_{i \in S} g(w)$ is small (unlike for \textsc{Sever}, where $g = \Gradient f$, and so an approximate zero of $\frac{1}{|S|} \sum_{i \in S} \Gradient f_i(w)$ can always be found, for an arbitrary set of functions $\{f_i\}_{i \in S}$).

To fix this, we introduce a second filtering step based on $\Gradient g$. Under an identifiability condition for the uncorrupted samples (which is needed even in the absence of corruption), we show that the above situation, where $\frac{1}{|S|} \sum_{i \in S} g(w)$ is large, can be detected by the gradient filtering step, so that at convergence the empirical moment is in fact small.

\paragraph{Further related work} The generalized method of moments and instrumental variables regression have indeed been studied in the context of robust statistics \cite{amemiya1982two, freue2013natural, krasker1986two, ronchetti2001robust}. However, the resulting estimators face the same nearly ubiquitous issues described above. For instance, \cite{amemiya1982two} presents a variant of two-stage least squares which uses least absolute deviations. The resulting estimator performs well under the metric of bounded influence, but an arbitrary outlier can still cause arbitrary changes in the estimator. The estimator proposed by \cite{freue2013natural} modifies the closed-form solution to IV linear regression using robust mean and covariance estimators. These have attractive theoretical properties but are computationally intractable, and the heuristics by which they are implemented in practice have no associated theoretical guarantees. The robust GMM estimator presented in \cite{ronchetti2001robust} has bounded influence but is not robust to a constant fraction of outliers.

\section{Preliminaries}

For random variables $\{\xi_i\}_{i \in S}$ indexed by a set $S$, we use the notation $\EE_S[\xi_i]$ for the sample expectation $\frac{1}{|S|} \sum_{i \in S} \xi_i$. Similarly, if $\xi_i$ are scalars, then we define the sample variance $\Var_S(\xi_i) = \EE_S(\xi_i - \EE_S \xi_i)^2$. If $\xi_i$ are vectors then we define the sample covariance matrix $\Cov_S(\xi_i) = \EE_S (\xi_i - \EE_S \xi_i)(\xi_i - \EE_S \xi_i)^T$. A random vector $X$ is $(4,2,\tau)$-hypercontractive if $\EE(\langle X,u\rangle)^4 \leq \tau (\EE(\langle X,u\rangle)^2)^2$ for all vectors $u$.

\begin{definition}
For a closed set $\mathcal{H}$, a function $f: \mathcal{H} \to \RR$, and $\gamma > 0$, a $\gamma$-approximate critical point of $f$ (in $\mathcal{H}$) is some $x \in \mathcal{H}$ such that for any vector $v$ with $x+\delta v \in \mathcal{H}$ for arbitrarily small $\delta>0$, it holds that $v \cdot \Gradient f(x) \geq -\gamma \norm{v}_2$.
\end{definition}

\begin{definition}
For a closed set $\mathcal{H}$, a $\gamma$-approximate learner $\mathcal{L}_\mathcal{H}$ is an algorithm which, given a differentiable function $f: \mathcal{H} \to \RR$ returns a $\gamma$-approximate critical point of $f$.
\end{definition}

\begin{definition}
The (unscaled) \emph{logistic function} $G: \RR \to \RR$ is defined by $G(x) = 1/(1 + e^{-x})$.
\end{definition}




\paragraph{Outline}
In Section~\ref{section:assumptions}, we describe the robust GMM problem, and we describe deterministic assumptions on a set of corrupted sample moments, under which we'll be able to efficiently estimate the parameter which makes the uncorrupted moments small. In Section~\ref{section:filter}, we describe a key subroutine of our robust GMM algorithm, which is commonly known in the literature as \emph{filtering}. In Section~\ref{section:gmm-algorithm}, we describe the robust GMM algorithm and prove a recovery guarantee under the assumptions from Section~\ref{section:assumptions}. In Section~\ref{section:iv}, we apply this algorithm to instrumental variable linear and logistic regression, proving that under reasonable stochastic assumptions on the uncorrupted data, arbitrarily $\epsilon$-corrupted moments from these models satisfy the desired deterministic assumptions with high probability. Finally, in Section~\ref{section:experiments}, we evaluate the performance of our algorithm on two corrupted datasets.

\section{Robust GMM Model}\label{section:assumptions}

In this section, we formalize the model in which we will provide a robust GMM algorithm. Classically, the goal of GMM estimation is to identify $\theta \in \Theta$ given data $X_1,\dots,X_n \sim \mathcal{D}_\theta$, using the moment condition $\EE_{X \sim \mathcal{D}_\theta}[g(X,\theta)] = 0$. We consider the added challenge of the \emph{$\epsilon$-strong contamination model}, in which an adversary is allowed to inspect the data $X_1,\dots,X_n$ and replace $\epsilon n$ samples with arbitrary data, before the algorithm is allowed to see the data. This corruption model encompasses most reasonable sources of outliers.

For our main theorem, we do not make stochastic assumptions about $\{\mathcal{D}_\theta: \theta \in \Theta\}$. Instead, we make deterministic assumptions (strong identifiability, boundedness, and so forth) about the moments of the corrupted data $\{g_i(\theta) := g(X_i, \theta)\}_{i=1}^n$ (for IV linear and logistic regression, we will prove that the assumptions hold with high probability under reasonable distributional assumptions).

Concretely, since only $\epsilon n$ samples were corrupted, there is a set of $(1-\epsilon)n$ uncorrupted samples. This set is unknown, but all we need is that it exists. More specifically, we assume that there exists a large subset of the data, of size at least $(1-\epsilon)n$, which satisfies finite-sample analogues of various distributional identifiability and boundedness conditions.




\begin{assumption}\label{assumption:gmm}
Given differentiable moments $g_1,\dots,g_n: \RR^d \to \RR$, a corruption parameter $\epsilon>0$, well-conditionedness parameters $\lambda$ and $L$, a Lipschitzness parameter $L_g$, and a noise level parameter $\sigma^2$, there is a set $I_\text{good} \subseteq [n]$ with $|I_\text{good}| \geq (1-\epsilon)n$ (the ``uncorrupted samples''), a vector $w^* \in \RR^d$ (the ``true parameter''), and a radius $R_0 \geq \norm{w^*}_2$ with the following properties:
\begin{itemize}[leftmargin=1em]
    \item \textbf{Strong identifiability.} $\sigma_\text{min}(\EE_{I_\text{good}} \Gradient g(w^*)) \geq \lambda$
    \item \textbf{Bounded-variance gradient.} $\EE_{I_\text{good}} (u^T\Gradient g(w^*)v)^2 \leq L^2$ for all unit-vectors $u \in \RR^p$, $v \in \RR^d$
    \item \textbf{Bounded-variance noise.} $\EE_{I_\text{good}} (v\cdot g(w^*))^2 \leq \sigma^2 L$ for all unit vectors $v$
    \item \textbf{Well-specification.} $\norm{\EE_{I_\text{good}} g(w^*)}_2 \leq \sigma\sqrt{L\epsilon}$
    \item \textbf{Lipschitz gradient.} $\norm{\EE_{I_\text{good}}\Gradient g(w) - \EE_{I_\text{good}}\Gradient g(w^*)}_\text{op} \leq L_g\norm{w-w^*}_2$ for all $w \in B_{2R_0}(0)$
    \item \textbf{Stability of gradient.} $R_0 < \lambda/(8L_g)$.
\end{itemize}
\end{assumption}

The stability of the gradient condition essentially states that the radius of the ball containing $w^*$ is sufficiently small that $\EE \Gradient g(w)$ cannot change much in this ball. Note that if the gradient is constant in $w$, as for IV linear regression, then the Lipschitz gradient assumption is satisfied with $L_g = 0$, and the stability assumption is vacuous. For non-linear moment problems, such as our logistic IV regression problem, this condition requires that the $\ell_2$-norm of the parameters be sufficiently small, such that the logits do not approach the flat region of the logistic function, a condition that is natural to avoid loss of gradient information and extreme propensities.

We state several consequences of Assumption~\ref{assumption:gmm} which will be used later (proof in Appendix~\ref{lemma:assumption-corollaries-appendix}).

\begin{lemma}\label{lemma:assumption-corollaries}
Under Assumption~\ref{assumption:gmm}, the following bounds hold for all $w \in B_{2R_0}(0)$:
\begin{itemize}
  \setlength\itemsep{-.2em}

    \item $\EE_{I_\text{good}}(u^T \Gradient g(w)v)^2 \leq 2L^2$ for all unit vectors $u \in \RR^p$ and $v \in \RR^d$
    \item $\Cov_{I_\text{good}}(g(w)) \preceq 2\sigma^2 L + 4L^2 \norm{w-w^*}_2^2I$
    \item $\sigma_\text{min}(\EE_{I_\text{good}} \Gradient g(w)) \geq \lambda/2$
    \item $\norm{\EE_{I_\text{good}} g(w)}_2 \leq \sigma\sqrt{L\epsilon} + 2L\norm{w-w^*}_2$
    \item $\norm{\EE_{I_\text{good}} \Gradient g(w)}_\text{op} \leq L\sqrt{2}$
\end{itemize}
\end{lemma}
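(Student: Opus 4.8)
The plan is to derive each of the five bounds by comparing the relevant quantity at a general $w \in B_{2R_0}(0)$ with its value at $w^*$, using triangle inequalities (in operator norm, in $\ell_2$, and in $L^2(I_\text{good})$), and exploiting that the stability condition $R_0 < \lambda/(8L_g)$ keeps every $w$ in the ball close to $w^*$. Three elementary observations will be used repeatedly: (i) for $w \in B_{2R_0}(0)$ one has $\norm{w - w^*}_2 \le \norm{w}_2 + \norm{w^*}_2 \le 3R_0$, so $L_g\norm{w-w^*}_2 \le 3L_g R_0 < 3\lambda/8$ by the stability condition; (ii) $\lambda \le L$, because by Jensen $\norm{\EE_{I_\text{good}}\Gradient g(w^*)}_\text{op} = \sup_{\norm{u}_2=\norm{v}_2=1}\EE_{I_\text{good}}[u^T\Gradient g(w^*)v] \le \sup_{u,v}\sqrt{\EE_{I_\text{good}}(u^T\Gradient g(w^*)v)^2} \le L$ by bounded-variance gradient, while $\lambda = \sigma_\text{min}(\EE_{I_\text{good}}\Gradient g(w^*)) \le \norm{\EE_{I_\text{good}}\Gradient g(w^*)}_\text{op}$; and (iii) for $t \in [0,1]$ the point $w_t := w^* + t(w-w^*)$ lies in $B_{2R_0}(0)$ by convexity, so the Lipschitz-gradient hypothesis applies all along the segment from $w^*$ to $w$.

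I would establish the first bound first, since the other four reduce to it. Fix unit vectors $u,v$, write $u^T\Gradient g(w)v = u^T\Gradient g(w^*)v + u^T(\Gradient g(w)-\Gradient g(w^*))v$, and apply Minkowski's inequality in $L^2(I_\text{good})$: the root-mean-square of $u^T\Gradient g(w)v$ over $I_\text{good}$ is at most $\sqrt{\EE_{I_\text{good}}(u^T\Gradient g(w^*)v)^2} + \sqrt{\EE_{I_\text{good}}(u^T(\Gradient g(w)-\Gradient g(w^*))v)^2} \le L + L_g\norm{w-w^*}_2 < L + 3\lambda/8 \le 11L/8$, using bounded-variance gradient, the Lipschitz-gradient hypothesis, and facts (i) and (ii). Since $11/8 < \sqrt 2$, squaring gives $\EE_{I_\text{good}}(u^T\Gradient g(w)v)^2 \le 2L^2$. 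The fifth bound is then immediate from the first by Jensen, exactly as in fact (ii): $\norm{\EE_{I_\text{good}}\Gradient g(w)}_\text{op} = \sup_{u,v}\EE_{I_\text{good}}[u^T\Gradient g(w)v] \le \sup_{u,v}\sqrt{\EE_{I_\text{good}}(u^T\Gradient g(w)v)^2} \le \sqrt 2 L$. The third bound follows because the smallest singular value is $1$-Lipschitz in operator norm: $\sigma_\text{min}(\EE_{I_\text{good}}\Gradient g(w)) \ge \sigma_\text{min}(\EE_{I_\text{good}}\Gradient g(w^*)) - \norm{\EE_{I_\text{good}}\Gradient g(w) - \EE_{I_\text{good}}\Gradient g(w^*)}_\text{op} \ge \lambda - L_g\norm{w-w^*}_2 > \lambda - 3\lambda/8 = 5\lambda/8 \ge \lambda/2$, by strong identifiability, Lipschitz gradient, and fact (i).

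For the remaining two bounds I would pass from $g$ to line integrals of $\Gradient g$: for each $i$, $g_i(w) - g_i(w^*) = \int_0^1 \Gradient g_i(w_t)(w-w^*)\,dt$. Averaging over $I_\text{good}$, moving $\EE_{I_\text{good}}$ inside the integral, and bounding $\norm{\EE_{I_\text{good}}\Gradient g(w_t)}_\text{op} \le \sqrt 2 L$ via the fifth bound (valid by fact (iii)) gives $\norm{\EE_{I_\text{good}}g(w)}_2 \le \norm{\EE_{I_\text{good}}g(w^*)}_2 + \sqrt 2 L\norm{w-w^*}_2 \le \sigma\sqrt{L\epsilon} + 2L\norm{w-w^*}_2$ by well-specification and $\sqrt 2 < 2$, which is the fourth bound. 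For the second bound, fix a unit vector $v$; then $v^T\Cov_{I_\text{good}}(g(w))v = \Var_{I_\text{good}}(v\cdot g(w)) \le \EE_{I_\text{good}}(v\cdot g(w))^2 \le 2\EE_{I_\text{good}}(v\cdot g(w^*))^2 + 2\EE_{I_\text{good}}(v\cdot(g(w)-g(w^*)))^2$. The first term is at most $2\sigma^2 L$ by bounded-variance noise; for the second, writing $v\cdot(g_i(w)-g_i(w^*)) = \int_0^1 v^T\Gradient g_i(w_t)(w-w^*)\,dt$, Cauchy--Schwarz in $t$ followed by the first bound applied at each $w_t$ gives $\EE_{I_\text{good}}(v\cdot(g(w)-g(w^*)))^2 \le \int_0^1\EE_{I_\text{good}}(v^T\Gradient g(w_t)(w-w^*))^2\,dt \le 2L^2\norm{w-w^*}_2^2$. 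Hence $v^T\Cov_{I_\text{good}}(g(w))v \le 2\sigma^2 L + 4L^2\norm{w-w^*}_2^2$ for every unit $v$, which is the claimed PSD bound.

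The main obstacle is the first bound. The rest is a fairly mechanical chain of triangle inequalities and one use of the fundamental theorem of calculus once it is in hand, but the first bound is where the numerical constants actually have to close ($11/8 < \sqrt 2$), where one must invoke both $\lambda \le L$ and the stability condition to make the perturbation small relative to $L$, and---importantly---where the Lipschitz-gradient hypothesis has to be used in a mean-square (per-sample) sense: a bound on $\norm{\EE_{I_\text{good}}\Gradient g(w) - \EE_{I_\text{good}}\Gradient g(w^*)}_\text{op}$ alone would only control the mean $\EE_{I_\text{good}}[u^T\Gradient g(w)v]$, not its second moment, so care is needed that the hypothesis is applied at the level of $\EE_{I_\text{good}}(u^T(\Gradient g(w)-\Gradient g(w^*))v)^2$.
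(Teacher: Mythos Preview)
Your proof follows essentially the same route as the paper's: the same integral-of-gradient trick for claims 2 and 4, the same Jensen argument for claim 5, the same singular-value perturbation for claim 3, and for claim 1 you use Minkowski where the paper uses a Cauchy--Schwarz expansion, but both rest on bounding $\EE_{I_\text{good}}(u^T(\Gradient g(w)-\Gradient g(w^*))v)^2$ by $L_g^2\norm{w-w^*}_2^2$. You rightly flag that this last step reads the Lipschitz-gradient hypothesis in a per-sample mean-square sense rather than as literally stated (a bound on the averaged gradient); the paper makes exactly the same implicit move without comment.
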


\section{The \textsc{Filter} Algorithm}\label{section:filter}

In many robust statistics algorithms, an important subroutine is a \emph{filtering} algorithm for robust mean estimation. In this section we describe a filtering algorithm used in numerous prior works, including e.g. \cite{diakonikolas2019sever}. Given a set of vectors $\{\xi_i: i \in S\}$ and a threshold $M$, the algorithm returns a subset, by thresholding outliers in the direction of largest variance. Formally, see Algorithm~\ref{alg:filter}.

\begin{algorithm}
    \caption{\textsc{Filter}}
    \label{alg:filter}
    \begin{algorithmic}[1]
        \Procedure{Filter}{$\{\xi_i: i \in S\}, M$}
            \State $\hat{\xi} \gets \EE_S[\xi_i]$, $\Cov_S(\xi_i) = \EE_S[(\xi_i - \hat{\xi})(\xi_i - \hat{\xi})^T]$
            \State $v \gets $ largest eigenvector of $\Cov_S(\xi_i)$
            \State $\tau_i \gets (v \cdot (\xi_i - \hat{\xi}))^2$ for $i \in S$
            \If{$\frac{1}{|S|} \sum_{i \in S} \tau_i \leq 24M$}
                \State \textbf{return} $S$
            \Else
                \State Sample $T \gets \text{Unif}([0,\max \tau_i])$
                \State \textbf{return} $S \setminus \{i \in S: \tau_i > T\}$
            \EndIf
        \EndProcedure
    \end{algorithmic}
\end{algorithm}


This algorithm has two important properties. First, if it does not filter any samples, then the sample mean is provably stable, i.e. it cannot have been affected much by the corruptions, so long as the uncorrupted samples had bounded variance (proof in Appendix~\ref{lemma:filter-correctness-appendix}).

\begin{lemma}\label{lemma:filter-correctness}
Suppose that \textsc{Filter} does not filter out any samples. Then \[\norm{\EE_S \xi - \EE_I \xi}_2 \leq 3\sqrt{48}\sqrt{(M+\norm{\Cov_I(\xi)}_\text{op})\epsilon}\] for any $I \subseteq [n]$ and $\epsilon>0$ such that $|S|,|I| \geq (1-\epsilon)n$.
\end{lemma}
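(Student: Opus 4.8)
The plan is to prove Lemma~\ref{lemma:filter-correctness} by the standard one-dimensional reduction, the key simplifying observation being that the no-filter termination test directly bounds the operator norm of the sample covariance. First I would note that when \textsc{Filter} returns $S$ without deleting anything, the guard in the algorithm gives $\tfrac1{|S|}\sum_{i\in S}\tau_i \le 24M$; but $\tau_i = (v\cdot(\xi_i-\hat\xi))^2$ and $v$ is the \emph{top} eigenvector of $\Cov_S(\xi)$, so $\tfrac1{|S|}\sum_{i\in S}\tau_i = v^\top\Cov_S(\xi)v = \norm{\Cov_S(\xi)}_\text{op}$. Hence $\norm{\Cov_S(\xi)}_\text{op}\le 24M$. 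Throughout I would work in the regime where $\epsilon$ is below a small absolute constant, so that $S$ and $I$ genuinely overlap; this is the only regime in which the statement is meaningful.

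Next I would reduce to scalars. Since $\norm{\EE_S\xi - \EE_I\xi}_2 = \sup_{\norm{u}_2=1} u\cdot(\EE_S\xi-\EE_I\xi)$, it suffices to bound $|\EE_S y - \EE_I y|$ uniformly over unit vectors $u$, where $y_i := u\cdot\xi_i$. For any such $u$ we have $\Var_S(y) = u^\top\Cov_S(\xi)u \le \norm{\Cov_S(\xi)}_\text{op}\le 24M$, and similarly $\Var_I(y)\le\norm{\Cov_I(\xi)}_\text{op}$.

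The core estimate is an elementary subset-mean inequality: if $T\subseteq U$ with $\alpha := |T|/|U|$, then the between/within decomposition of the empirical variance over $U$ gives $\Var_U(y)\ge \alpha(1-\alpha)(\bar y_T - \bar y_{U\setminus T})^2$, while $\bar y_T - \bar y_U = (1-\alpha)(\bar y_T - \bar y_{U\setminus T})$; combining these yields $(\bar y_T - \bar y_U)^2 \le \tfrac{1-\alpha}{\alpha}\Var_U(y)$. I would apply this with $U=S$, $T = w:=S\cap I$ and again with $U=I$, $T=w$. Because $|S|,|I|\ge(1-\epsilon)n$, we get $|S\setminus w|,|I\setminus w|\le\epsilon n$, so $1-|w|/|S|\le\epsilon/(1-\epsilon)$ and $1-|w|/|I|\le\epsilon/(1-\epsilon)$, which for small $\epsilon$ keeps both ratios $\tfrac{1-\alpha}{\alpha}$ at most $2\epsilon$. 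Then the triangle inequality gives $|\EE_S y - \EE_I y| \le |\bar y_S - \bar y_w| + |\bar y_w - \bar y_I| \le \sqrt{2\epsilon\cdot 24M} + \sqrt{2\epsilon\,\norm{\Cov_I(\xi)}_\text{op}} \le \sqrt{48\epsilon}\big(\sqrt{M}+\sqrt{\norm{\Cov_I(\xi)}_\text{op}}\big) \le \sqrt{96}\,\sqrt{(M+\norm{\Cov_I(\xi)}_\text{op})\epsilon}$, comfortably within the claimed $3\sqrt{48}$ factor. Taking the supremum over $u$ completes the argument.

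The proof is essentially routine; the only real care is bookkeeping. The one place that genuinely matters is that one must keep $\alpha$ bounded away from $0$ so that the $\tfrac{1-\alpha}{\alpha}$ factor scales like $\epsilon$ rather than like a constant — this is exactly why the tight variance-decomposition form of the subset-mean inequality is needed instead of a crude Cauchy–Schwarz bound $(\bar y_T-\bar y_U)^2\le \tfrac1\alpha\Var_U(y)$, and it is where the small-$\epsilon$ assumption is used. The generous constant $3\sqrt{48}$ leaves ample slack to absorb the $1/(1-O(\epsilon))$ corrections I glossed over above.
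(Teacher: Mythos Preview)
Your proof is correct. The first step---reading off $\norm{\Cov_S(\xi)}_\text{op}\le 24M$ from the no-filter guard---is exactly what the paper does. For the second step the paper takes a slightly different route: it invokes a general coupling lemma (Lemma~\ref{lemma:key}), which says that two distributions with bounded covariance and small total variation distance have close means, applied to the uniform distributions on $S$ and $I$. Your argument instead works directly through the intersection $w=S\cap I$ and the between/within variance decomposition, bounding $|\bar y_S-\bar y_w|$ and $|\bar y_w-\bar y_I|$ separately. These are really the same idea in different packaging (the maximal coupling underlying Lemma~\ref{lemma:key} \emph{is} the intersection construction), but your version is more self-contained and in fact yields the tighter constant $\sqrt{96}$ in place of $3\sqrt{48}$; the paper's version is more modular since Lemma~\ref{lemma:key} is reused elsewhere.
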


Second, if the threshold is chosen appropriately (based on the variance of the uncorrupted samples), then the filtering step always in expectation removes at least as many corrupted samples as uncorrupted samples. Equivalently, the size of the symmetric difference between the current sample set and the uncorrupted samples (i.e. the number of corrupted samples in the current set plus the number of uncorrupted samples which have been filtered out of the current set) always decreases in expectation (proof in Appendix~\ref{lemma:filter-soundness-appendix}).

\begin{lemma}\label{lemma:filter-soundness}
Consider an execution of \textsc{Filter} with sample set $S$ of size $|S| \geq 2n/3$, and vectors $\{\xi_i: i \in S\}$, and bound $M$. Let $S'$ be the sample set after this iteration's filtering. Let $I_\text{good} \subseteq [n]$ satisfy $|I_\text{good}| \geq (5/6)n$. Suppose that $\Cov_{I_\text{good}}(\xi_i) \preceq MI$, then \[\EE|S' \triangle I_\text{good}| \leq \EE|S \triangle I_\text{good}|,\]
where the expectation is over the random threshold.
\end{lemma}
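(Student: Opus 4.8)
The plan is to reduce the claim to a deterministic inequality comparing the total ``outlier scores'' of the good versus the bad points, and then establish that inequality from the bounded-covariance hypothesis. If \textsc{Filter} returns $S$ unchanged there is nothing to prove (the two sides are equal), so assume the \textbf{else} branch is taken; then $\frac1{|S|}\sum_{i\in S}\tau_i > 24M$, and in particular $\tau_{\max} := \max_{i\in S}\tau_i > 0$ (if every $\tau_i$ were zero the else branch would not run). Write $S_{\text{good}} = S\cap I_\text{good}$, $S_{\text{bad}} = S\setminus I_\text{good}$, and let $R = \{i\in S:\tau_i>T\}$ be the removed set, so each $i\in S$ satisfies $\Pr[i\in R]=\tau_i/\tau_{\max}$. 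Since $R\subseteq S$, deleting $R$ never changes $I_\text{good}\setminus S$ and only shrinks $S_{\text{bad}}$, which yields the set identity
\[
|S'\triangle I_\text{good}| = |S\triangle I_\text{good}| - |R\cap S_{\text{bad}}| + |R\cap S_{\text{good}}| .
\]
Taking expectations over $T$ (and recalling $|S\triangle I_\text{good}|$ is deterministic), the desired inequality becomes $A/\tau_{\max}\le B/\tau_{\max}$ where $A := \sum_{i\in S_{\text{good}}}\tau_i$ and $B := \sum_{i\in S_{\text{bad}}}\tau_i$; so it suffices to prove $A\le B$.

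To bound $A$, set $\hat\mu := \EE_{S_{\text{good}}}\xi_i$ and $\mu := \EE_{I_\text{good}}\xi_i$, and split $\xi_i-\hat\xi = (\xi_i-\hat\mu)+(\hat\mu-\hat\xi)$; since $\hat\mu$ is the mean over $S_{\text{good}}$ the cross term vanishes, so $A = \sum_{i\in S_{\text{good}}}(v\cdot(\xi_i-\hat\mu))^2 + |S_{\text{good}}|(v\cdot(\hat\mu-\hat\xi))^2$. The first sum is at most $\sum_{i\in S_{\text{good}}}(v\cdot(\xi_i-\mu))^2 \le \sum_{i\in I_\text{good}}(v\cdot(\xi_i-\mu))^2 = |I_\text{good}|\,v^T\Cov_{I_\text{good}}(\xi_i)v \le Mn$, using that $\hat\mu$ minimizes the squared-deviation sum over $S_{\text{good}}$, that $S_{\text{good}}\subseteq I_\text{good}$, that $\|v\|_2=1$, and that $\Cov_{I_\text{good}}(\xi_i)\preceq MI$. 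For the second term, a short computation gives $\hat\mu-\hat\xi = \tfrac{|S_{\text{bad}}|}{|S_{\text{good}}|}(\hat\xi-\EE_{S_{\text{bad}}}\xi_i)$, and $(v\cdot(\hat\xi-\EE_{S_{\text{bad}}}\xi_i))^2 \le \EE_{S_{\text{bad}}}(v\cdot(\xi_i-\hat\xi))^2 = B/|S_{\text{bad}}|$ by Jensen, hence $|S_{\text{good}}|(v\cdot(\hat\mu-\hat\xi))^2 \le \tfrac{|S_{\text{bad}}|}{|S_{\text{good}}|}B$. Altogether $A \le Mn + \tfrac{|S_{\text{bad}}|}{|S_{\text{good}}|}B$.

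To finish, plug in cardinalities: $|I_\text{good}|\ge(5/6)n$ gives $|S_{\text{bad}}|\le n/6$, and then $|S|\ge 2n/3$ gives $|S_{\text{good}}|\ge n/2$, so $|S_{\text{bad}}|/|S_{\text{good}}|\le 1/3$; and the else-branch condition with $|S|\ge 2n/3$ gives $A+B = \sum_{i\in S}\tau_i > 24M|S| \ge 16Mn$. Combining $A\le Mn+\tfrac13 B$ with $A+B>16Mn$, a one-line case check (if $A>B$ then $2A>16Mn$ so $A>8Mn$, but also $A<Mn+\tfrac13A$ so $A<\tfrac32Mn$ --- contradiction) yields $A\le B$, which is exactly what we needed.

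The argument is elementary, so I do not anticipate a genuine obstacle; the part that needs the most care is the constant bookkeeping in the last step --- one must check that the $24$ in the filter threshold, the $2/3$ lower bound on $|S|$, and the $5/6$ lower bound on $|I_\text{good}|$ fit together so that the ``good variance budget'' $Mn$ is comfortably dominated by the triggered score mass $A+B$ even after subtracting the $\tfrac{|S_{\text{bad}}|}{|S_{\text{good}}|}B$ leakage term. (A minor separate point is ruling out the degenerate case $\tau_{\max}=0$, but as noted this cannot occur once the else branch runs.)
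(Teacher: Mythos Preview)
Your proof is correct and follows essentially the same route as the paper: reduce to $\sum_{i\in S_{\text{good}}}\tau_i \le \sum_{i\in S_{\text{bad}}}\tau_i$, decompose the good sum via the mean shift $\hat\mu-\hat\xi$, bound the variance part by the covariance hypothesis on $I_{\text{good}}$, and finish with the cardinality ratios. The one (minor) difference is in how the mean-shift term is handled: the paper does a case split on whether $(v\cdot(\hat\mu-\hat\xi))^2$ exceeds $8M$, whereas you bound that term directly in terms of $B$ via Jensen (through the identity $\hat\mu-\hat\xi=\tfrac{|S_{\text{bad}}|}{|S_{\text{good}}|}(\hat\xi-\EE_{S_{\text{bad}}}\xi_i)$), yielding the unified inequality $A\le Mn+\tfrac{|S_{\text{bad}}|}{|S_{\text{good}}|}B$ and avoiding the case analysis altogether.
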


\section{The \textsc{Iterated-Gmm-Sever} Algorithm}\label{section:gmm-algorithm}

In this section, we describe and analyze an algorithm \textsc{Iterated-Gmm-Sever} for robustly solving moment conditions under Assumption~\ref{assumption:gmm}. The key subroutine is the algorithm \textsc{Gmm-Sever}, which given an initial estimate $w_0$ and a radius $R$ such that the true parameter is contained in $B_R(w_0)$, returns a refined estimate $w$ such that (with large probability) the radius bound can be decreased by a constant factor. 

Like the algorithm \textsc{Sever} \cite{diakonikolas2019sever}, our algorithm \textsc{Gmm-Sever} alternates (a) finding a critical point of a function associated to the current samples, and (b) filtering out ``outlier'' samples. Unlike \textsc{Sever}, the function we optimize is not simply an empirical mean over the samples, but rather the squared-norm of the sample moments. Moreover, we need two filtering steps: the moments as well as directional derivatives of the moments, in a carefully chosen direction. See Algorithm~\ref{alg:gmm-sever} for the complete description.

\begin{algorithm}[t]
    \caption{\textsc{Gmm-Sever}}
    \label{alg:gmm-sever}
    \begin{algorithmic}[1] 
        \Procedure{GMM-Sever}{$\{g_1,\dots,g_n\}, w_0, R, \gamma, \epsilon, L, \sigma$}
            \State $S \gets [n]$
            \Repeat
                \State Compute a $\gamma$-approximate critical point $w \gets \mathcal{L}_{\gamma, B_R(w_0)}(\norm{\EE_S(g_i(\cdot))}_2^2)$
                \State $u \gets \EE_S g_i(w)$
                \State $S' \gets \textsc{Filter}(\{\Gradient g_i(w) \cdot u: i \in S\}, L^2 \norm{u}_2^2)$
                \If{$S' \neq S$}
                    \State $S \gets S'$
                    \State Return to line 4
                \EndIf
                \State $S'' \gets \textsc{Filter}(\{g_i(w): i \in S\}: i \in S\}, \sigma^2 L + 4L^2 R^2)$
                \If{$S'' \neq S$}
                    \State $S \gets S''$
                    \State Return to line 4
                \EndIf
            \Until{$S'' = S$}
            \State \textbf{return} $(w, S)$
        \EndProcedure
    \end{algorithmic}
\end{algorithm}

We will only prove a constant failure probability for \textsc{Gmm-Sever}. However, we will show that it can be amplified to an arbitrarily small failure probability $\delta$. We call the resulting algorithm \textsc{Amplified-Gmm-Sever}; see Algorithm~\ref{alg:amplified-gmm-sever}. The algorithm \textsc{Iterated-Gmm-Sever} then consists of iteratively calling \textsc{Amplified-Gmm-Sever} to refine the parameter estimate and bound the true parameter within successively smaller balls; see Algorithm~\ref{alg:iterated-gmm-sever}.

\begin{algorithm}[t]
    \caption{\textsc{Amplified-Gmm-Sever}}
    \label{alg:amplified-gmm-sever}
    \begin{algorithmic}[1]
        \Procedure{Amplified-GMM-Sever}{$\{g_1,\dots,g_n\}, w_0, R, \gamma, \epsilon, L, \sigma, \delta$}
            \State $t \gets 0$
            \Repeat
                \State $w, S \gets \textsc{Gmm-Sever}(\{g_1,\dots,g_n\}, w_0, R, \gamma, \epsilon, L, \sigma)$
                \State $t \gets t+1$
            \Until{$|S| \geq (1 - 11\epsilon)n$ \textbf{ or } $(1/10)^t \leq \delta$}
            \State \textbf{return} $w$
        \EndProcedure
    \end{algorithmic}
\end{algorithm}

\begin{algorithm}[t]
    \caption{\textsc{Iterated-Gmm-Sever}}
    \label{alg:iterated-gmm-sever}
    \begin{algorithmic}[1]
        \Procedure{Iterated-Gmm-Sever}{$\{g_1,\dots,g_n\}, R_0, \gamma, \epsilon, \lambda, L, \sigma, \delta$}
            \State $t \gets 1$, $w_1 \gets 0$, $R_1 \gets R_0$, $\delta' \gets c\delta/\log(R\sqrt{L}/(\sigma\sqrt{\epsilon})$, $\gamma = \sigma L^{3/2}\sqrt{\epsilon}$
            \Repeat
                \State $\hat{w}_t := \textsc{Amplified-Gmm-Sever}(\{g_1,\dots,g_n\}, w_t, R_t, \epsilon, L, \sigma, \gamma, \delta')$
                \State $R_{t+1} \gets R'_t \gets 2\gamma/\lambda^2 + C((L^2/\lambda^2)R_t\sqrt{\epsilon} + \sigma(L^{3/2}/\lambda^2)\sqrt{\epsilon})$
                \State $t \gets t+1$
            \Until{$R_t > R_{t-1}/2$}
            \State \textbf{return} $\hat{w}_{t-1}$
        \EndProcedure
    \end{algorithmic}
\end{algorithm}

We start by analyzing \textsc{Gmm-Sever}. In the next two lemmas, we show that if the algorithm does not filter out too many samples, then we can bound the distance from the output to $w^*$. First, we show a first-order criticality condition (in the direction $\hat{w} - w^*$) for the norm of the moments of the ``good" samples. If there was no corruption, then we would have an inequality of the form \[\frac{(\hat{w} - w^*)^T}{\norm{\hat{w} - w^*}_2} \EE_{I_\text{good}} \Gradient g(\hat{w})^T \EE_{I_\text{good}} g(\hat{w}) \leq \gamma.\]
With $\epsilon$-corruption, the algorithm is designed so that we can still show the following inequality, matching the above guarantee up to $O(\sqrt{\epsilon})$ (proof in Appendix~\ref{lemma:termination-bound-appendix}):

\begin{lemma}\label{lemma:termination-bound}
Under Assumption~\ref{assumption:gmm}, at algorithm termination, if $|S| \geq (1-10\epsilon)n$, then the output $\hat{w}$ of $\textsc{Gmm-Sever}$ satisfies \[\frac{(\hat{w}-w^*)^T}{\norm{\hat{w}-w^*}_2} \EE_{I_\text{good}} \Gradient g(\hat{w})^T \EE_{I_\text{good}} g(\hat{w}) \leq \gamma + 275\sigma L^{3/2}\sqrt{\epsilon} + 603L^2 R\sqrt{\epsilon}\]
\end{lemma}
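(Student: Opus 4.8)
The plan is to unpack the termination condition of \textsc{Gmm-Sever} and combine the criticality of $\hat w$ with the stability guarantees of the two filtering steps. At termination both \textsc{Filter} calls returned their input set unchanged, so Lemma~\ref{lemma:filter-correctness} applies to $\{\Gradient g_i(\hat w)\cdot u\}_{i\in S}$ with threshold $M = L^2\norm{u}_2^2$ and to $\{g_i(\hat w)\}_{i\in S}$ with threshold $M = \sigma^2 L + 4L^2R^2$, where $u = \EE_S g_i(\hat w)$. First I would record what the $\gamma$-approximate critical point gives us: since $w = \hat w$ minimizes $\norm{\EE_S g_i(\cdot)}_2^2$ over $B_R(w_0)$ up to $\gamma$, and $w^*\in B_R(w_0)$ (by the radius hypothesis feeding into the algorithm), the directional derivative in direction $v = (w^*-\hat w)/\norm{w^*-\hat w}_2$ is bounded below by $-\gamma$. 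Computing that gradient, $\Gradient_w \tfrac12\norm{\EE_S g_i(w)}_2^2 = (\EE_S\Gradient g_i(w))^T\,\EE_S g_i(w)$, so we get
\[
\frac{(\hat w - w^*)^T}{\norm{\hat w-w^*}_2}\,(\EE_S \Gradient g_i(\hat w))^T\,\EE_S g_i(\hat w) \leq \gamma.
\]

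The core of the argument is then to swap $\EE_S$ for $\EE_{I_\text{good}}$ in both factors, paying $O(\sqrt\epsilon)$ each time. For the moment term, Lemma~\ref{lemma:filter-correctness} with the $\Cov$ bound from Lemma~\ref{lemma:assumption-corollaries} ($\Cov_{I_\text{good}}(g(\hat w))\preceq 2\sigma^2 L + 4L^2\norm{\hat w-w^*}_2^2\,I$, and $\norm{\hat w-w^*}_2\le 2R$ on the ball of interest, or $\le R$ depending on centering) gives $\norm{\EE_S g_i(\hat w) - \EE_{I_\text{good}} g_i(\hat w)}_2 \lesssim \sqrt{(\sigma^2 L + L^2 R^2)\epsilon} \lesssim \sigma\sqrt{L\epsilon} + LR\sqrt\epsilon$; this also controls $\norm{u}_2$. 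For the directional-derivative term, the first \textsc{Filter} acts on the scalars $\Gradient g_i(\hat w)\cdot u$, whose $I_\text{good}$-variance is at most $\norm{u}_2^2\cdot \sup_{\|a\|=1}\EE_{I_\text{good}}(a^T\Gradient g(\hat w)\tfrac{u}{\|u\|})^2 \le 2L^2\norm{u}_2^2$ by Lemma~\ref{lemma:assumption-corollaries}, which matches the threshold $L^2\norm{u}_2^2$ up to the constant absorbed into the $3\sqrt{48}$; so $\norm{(\EE_S - \EE_{I_\text{good}})(\Gradient g_i(\hat w)\cdot u)}$ — a scalar, after projecting onto the chosen direction — is $\lesssim L\norm{u}_2\sqrt\epsilon$. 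Then I would telescope:
\[
(\hat w - w^*)^T(\EE_{I_\text{good}}\Gradient g(\hat w))^T\EE_{I_\text{good}} g(\hat w)
= (\hat w - w^*)^T(\EE_S\Gradient g(\hat w))^T\EE_S g(\hat w) + (\text{three swap error terms}),
\]
each swap error bounded by a product of an $O(\sqrt\epsilon)$ deviation with a factor of size $O(L)$ or $O(\norm{u}_2) = O(\sigma\sqrt{L} + LR)$ (using $\norm{\EE_{I_\text{good}} g(\hat w)}_2 \le \sigma\sqrt{L\epsilon} + 2L\norm{\hat w - w^*}_2$ and $\norm{\EE_{I_\text{good}}\Gradient g(\hat w)}_\text{op}\le L\sqrt2$ to bound the surviving factors). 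Collecting, the $\sigma L^{3/2}\sqrt\epsilon$ terms and the $L^2R\sqrt\epsilon$ terms assemble into the claimed bound $\gamma + 275\sigma L^{3/2}\sqrt\epsilon + 603 L^2 R\sqrt\epsilon$, with the explicit constants coming from carefully tracking $3\sqrt{48}$ and the $2,4$ factors through each product.

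The main obstacle is the bookkeeping in the cross terms: one must be careful that the direction $u = \EE_S g_i(\hat w)$ used inside the first \textsc{Filter} is the $S$-mean, not the $I_\text{good}$-mean, so when converting $(\EE_S\Gradient g(\hat w))^T\EE_S g(\hat w)$ into the $I_\text{good}$ quantities there is a genuine interdependence — the scalar filtering controls $\EE_S[\Gradient g_i(\hat w)]^Tu$ vs $\EE_{I_\text{good}}[\Gradient g_i(\hat w)]^Tu$ for this particular $u$, and only afterwards can one replace the remaining $u$ by $\EE_{I_\text{good}} g(\hat w)$ using the moment filter. Getting the order of substitutions right, and ensuring every surviving factor is bounded by a quantity controlled in Lemma~\ref{lemma:assumption-corollaries} rather than something that could itself be corrupted, is where the care is needed; the inequalities themselves are all elementary Cauchy–Schwarz once the substitution order is fixed.
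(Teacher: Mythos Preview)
Your proposal is correct and follows essentially the same route as the paper: use $\gamma$-criticality of $\hat w$ to bound the $S$-version of the quantity, then swap $\EE_S$ for $\EE_{I_\text{good}}$ in both the gradient and the moment factor using the two \textsc{Filter} termination guarantees together with the variance bounds from Lemma~\ref{lemma:assumption-corollaries}, and finally bound $\norm{u}_2$ via $\norm{\EE_{I_\text{good}} g(\hat w)}_2$ plus the moment-filter deviation. One small imprecision: the objects $\Gradient g_i(\hat w)^T u$ fed into the first \textsc{Filter} are $d$-vectors, not scalars, so Lemma~\ref{lemma:filter-correctness} gives an $\ell_2$ bound on $\EE_S\Gradient g(\hat w)^T u - \EE_{I_\text{good}}\Gradient g(\hat w)^T u$ directly (which is what you want); the paper then telescopes in two steps rather than three, but this is cosmetic.
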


Moreover, we can show that any point satisfying the first-order criticality condition must be close to $w^*$, using the least singular value bound on the gradient (proof in Appendix~\ref{lemma:criticality-appendix}).

\begin{lemma}\label{lemma:criticality}
Under Assumption~\ref{assumption:gmm}, suppose that $w \in B_R(w_0)$ satisfies \[(w-w^*)^T \EE_{I_\text{good}} \Gradient g(w)^T \EE_{I_\text{good}} g(w) \leq \kappa \norm{w-w^*}_2.\]
Then $\norm{w-w^*}_2 \leq 4(\kappa + \sigma L^{3/2}\sqrt{\epsilon})/\lambda^2$.
\end{lemma}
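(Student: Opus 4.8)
The plan is to lower-bound the left-hand side $(w-w^*)^T \EE_{I_\text{good}} \Gradient g(w)^T \EE_{I_\text{good}} g(w)$ in terms of $\norm{w-w^*}_2^2$ and then rearrange. Write $\Delta = w - w^*$ and abbreviate $G = \EE_{I_\text{good}} \Gradient g(w)$ (a $p \times d$ matrix) and $m = \EE_{I_\text{good}} g(w)$. The natural decomposition is to relate $m$ to $G\Delta$: by a first-order expansion (Taylor's theorem with the Lipschitz-gradient bound from Assumption~\ref{assumption:gmm}), $m = \EE_{I_\text{good}} g(w^*) + \left(\int_0^1 \EE_{I_\text{good}} \Gradient g(w^* + t\Delta)\, dt\right)\Delta$, and the integrated gradient differs from $G$ by at most $L_g\norm{\Delta}_2$ in operator norm (here I would use convexity of $B_{2R_0}(0)$ to stay in the domain, since $w, w^* \in B_R(w_0) \subseteq B_{2R_0}(0)$). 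So $m = G\Delta + e$ where $\norm{e}_2 \leq \norm{\EE_{I_\text{good}} g(w^*)}_2 + L_g\norm{\Delta}_2^2 \leq \sigma\sqrt{L\epsilon} + L_g\norm{\Delta}_2^2$, using well-specification.

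Now $\Delta^T G^T m = \Delta^T G^T(G\Delta + e) = \norm{G\Delta}_2^2 + \Delta^T G^T e \geq \norm{G\Delta}_2^2 - \norm{G\Delta}_2\norm{e}_2$. The hypothesis gives $\norm{G\Delta}_2^2 - \norm{G\Delta}_2\norm{e}_2 \leq \kappa\norm{\Delta}_2$. Treating this as a quadratic inequality in $\norm{G\Delta}_2$ yields $\norm{G\Delta}_2 \leq \norm{e}_2 + \sqrt{\kappa\norm{\Delta}_2}$, hence (using $\sqrt{a+b}\le\sqrt a+\sqrt b$ loosely, or just $\norm{G\Delta}_2 \le \norm{e}_2 + \sqrt{\kappa\norm{\Delta}_2}$ directly) a bound on $\norm{G\Delta}_2$. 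On the other hand, the strong-identifiability corollary in Lemma~\ref{lemma:assumption-corollaries} gives $\sigma_\text{min}(G) \geq \lambda/2$, so $\norm{G\Delta}_2 \geq (\lambda/2)\norm{\Delta}_2$. Combining, $(\lambda/2)\norm{\Delta}_2 \leq \sigma\sqrt{L\epsilon} + L_g\norm{\Delta}_2^2 + \sqrt{\kappa\norm{\Delta}_2}$.

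The remaining work is to solve this scalar inequality for $\norm{\Delta}_2$. The term $\sqrt{\kappa\norm{\Delta}_2}$ is sublinear in $\norm{\Delta}_2$, so it can be absorbed: if $\norm{\Delta}_2 \geq 16\kappa/\lambda^2$ then $\sqrt{\kappa\norm{\Delta}_2} \leq (\lambda/4)\norm{\Delta}_2$, which would have to be combined with the other terms. The term $L_g\norm{\Delta}_2^2$ is handled by the stability-of-gradient assumption $R_0 < \lambda/(8L_g)$: since $\norm{\Delta}_2 \leq \norm{w}_2 + \norm{w^*}_2$ and both lie in a ball of radius comparable to $R_0$, we get $L_g\norm{\Delta}_2 \leq \lambda/4$ (up to constants; this is exactly the regime the stability condition is designed to enforce), so $L_g\norm{\Delta}_2^2 \leq (\lambda/4)\norm{\Delta}_2$ can be moved to the left. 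After absorbing both nonlinear terms into $(\lambda/2)\norm{\Delta}_2$, one is left with something like $(\lambda/8)\norm{\Delta}_2 \lesssim \sigma\sqrt{L\epsilon} + \kappa/\lambda$ in the relevant regime, i.e. $\norm{\Delta}_2 \lesssim (\kappa + \sigma L^{3/2}\sqrt{\epsilon})/\lambda^2$ — I expect the $\sigma L^{3/2}\sqrt\epsilon$ scaling (rather than $\sigma\sqrt{L\epsilon}$) to come out of carefully tracking how $\sigma\sqrt{L\epsilon}$ interacts with the $\lambda/2$ and the constant, perhaps via an extra factor of $L$ from bounding $\norm{G}_\text{op} \leq L\sqrt 2$ somewhere in a cross term I've glossed over. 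The main obstacle is bookkeeping the constants so that the final bound has clean form $4(\kappa + \sigma L^{3/2}\sqrt\epsilon)/\lambda^2$; the conceptual content — first-order expansion plus least-singular-value lower bound, with the nonlinear correction controlled by the stability assumption — is straightforward.
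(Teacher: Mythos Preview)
Your plan is correct and uses the same core ingredients as the paper: the first-order expansion of $\EE_{I_\text{good}} g(w)$ around $w^*$, the least-singular-value bound $\sigma_{\min}(G)\ge\lambda/2$ from Lemma~\ref{lemma:assumption-corollaries}, and the stability assumption $R_0<\lambda/(8L_g)$ to absorb the $L_g\norm{\Delta}_2^2$ correction. The only substantive bookkeeping difference is how the $\EE_{I_\text{good}} g(w^*)$ term is handled. You fold it into the remainder $e$ and use $|\Delta^T G^T e|\le\norm{G\Delta}_2\,\norm{e}_2$, which forces you through a quadratic in $\norm{G\Delta}_2$ and leaves you with $\sigma\sqrt{L\epsilon}$ on the right. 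The paper instead separates this term and bounds it directly as
\[
\bigl|\Delta^T G^T \EE_{I_\text{good}} g(w^*)\bigr|\le\norm{\Delta}_2\cdot\norm{G}_\text{op}\cdot\norm{\EE_{I_\text{good}} g(w^*)}_2\le\sqrt{2}\,\sigma L^{3/2}\sqrt{\epsilon}\,\norm{\Delta}_2,
\]
which is exactly where the $L^{3/2}$ comes from (your guess about a $\norm{G}_\text{op}$ factor was right). With that separation, the paper gets in one step $\tfrac12\norm{G\Delta}_2^2\le(\kappa+\sqrt{2}\sigma L^{3/2}\sqrt{\epsilon})\norm{\Delta}_2$ and finishes via $\norm{G\Delta}_2\ge(\lambda/2)\norm{\Delta}_2$, avoiding your case analysis. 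Your route in fact produces the slightly sharper term $\sigma\sqrt{L\epsilon}/\lambda$ in place of $\sigma L^{3/2}\sqrt{\epsilon}/\lambda^2$; since $\lambda\le L$ this implies the bound stated in the lemma, so your uncertainty about the scaling is harmless---only the explicit constant $4$ is cleaner to obtain the paper's way.
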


Putting the above lemmas together, we immediately get the following bound on $\norm{\hat{w} - w^*}_2$.

\begin{lemma}\label{lemma:termination-bound-final}
Under Assumption~\ref{assumption:gmm}, at algorithm termination, if $|S| \geq (1-10\epsilon)n$, then the output $\hat{w}$ of \textsc{Gmm-Sever} satisfies \[\norm{\hat{w} - w^*}_2 \leq \frac{4\gamma}{\lambda^2} + 2412(L^2/\lambda^2)R\sqrt{\epsilon} + 1102\sigma (L^{3/2}/\lambda^2)\sqrt{\epsilon}.\]
\end{lemma}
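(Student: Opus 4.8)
The plan is simply to chain the two preceding lemmas, so the ``proof'' is a short composition. First I would invoke Lemma~\ref{lemma:termination-bound}: under Assumption~\ref{assumption:gmm}, and using the hypothesis $|S| \geq (1-10\epsilon)n$ at termination, the output $\hat{w}$ of \textsc{Gmm-Sever} satisfies
\[
\frac{(\hat{w}-w^*)^T}{\norm{\hat{w}-w^*}_2}\, \EE_{I_\text{good}} \Gradient g(\hat{w})^T\, \EE_{I_\text{good}} g(\hat{w}) \leq \kappa,
\qquad \kappa := \gamma + 275\sigma L^{3/2}\sqrt{\epsilon} + 603 L^2 R\sqrt{\epsilon}.
\]
Multiplying both sides by $\norm{\hat{w}-w^*}_2$ rewrites this as $(\hat{w}-w^*)^T \EE_{I_\text{good}} \Gradient g(\hat{w})^T \EE_{I_\text{good}} g(\hat{w}) \leq \kappa \norm{\hat{w}-w^*}_2$, which is exactly the hypothesis of Lemma~\ref{lemma:criticality} (and if $\hat{w}=w^*$ the target bound is trivial).

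Next I would check the one remaining hypothesis of Lemma~\ref{lemma:criticality}, that $\hat{w}\in B_R(w_0)$. This is immediate from the algorithm: in \textsc{Gmm-Sever} the iterate $w$ is produced by the $\gamma$-approximate learner $\mathcal{L}_{\gamma,B_R(w_0)}$ applied to $\norm{\EE_S(g_i(\cdot))}_2^2$, and by definition such a learner returns a point of $B_R(w_0)$; the returned $\hat w$ is the last such iterate. Applying Lemma~\ref{lemma:criticality} with the above $\kappa$ then gives
\[
\norm{\hat{w}-w^*}_2 \leq \frac{4\bigl(\kappa + \sigma L^{3/2}\sqrt{\epsilon}\bigr)}{\lambda^2}.
\]

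Finally I would expand and collect terms: $\kappa + \sigma L^{3/2}\sqrt{\epsilon} = \gamma + 276\,\sigma L^{3/2}\sqrt{\epsilon} + 603\,L^2 R\sqrt{\epsilon}$, so that
\[
\norm{\hat{w}-w^*}_2 \leq \frac{4\gamma}{\lambda^2} + 2412\,\frac{L^2}{\lambda^2}R\sqrt{\epsilon} + 1104\,\sigma\frac{L^{3/2}}{\lambda^2}\sqrt{\epsilon},
\]
which is the claimed bound up to the immaterial discrepancy in the last constant (absorbed either by rounding or by the slightly sharper coefficient that Lemma~\ref{lemma:termination-bound} actually supplies). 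I do not expect any obstacle here: this statement is a bookkeeping corollary, all of the substance having already been isolated — surviving $\epsilon$-corruption in the first-order criticality estimate is Lemma~\ref{lemma:termination-bound}, and inverting $\EE_{I_\text{good}} \Gradient g(\hat w)$ through its least singular value ($\geq\lambda/2$ by Lemma~\ref{lemma:assumption-corollaries}) is Lemma~\ref{lemma:criticality}. The only point requiring a moment's care is that both lemmas are being invoked under a common hypothesis set (Assumption~\ref{assumption:gmm} together with $|S|\geq(1-10\epsilon)n$), which they indeed share, so the composition is legitimate.
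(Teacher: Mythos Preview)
Your proposal is correct and matches the paper's approach exactly: the paper simply states that Lemma~\ref{lemma:termination-bound-final} follows immediately by ``putting the above lemmas together,'' i.e., composing Lemma~\ref{lemma:termination-bound} with Lemma~\ref{lemma:criticality}. Your observation about the minor constant discrepancy ($1104$ vs.\ $1102$) is apt and harmless---the paper explicitly disclaims optimizing constants.
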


It remains to bound the size of $S$ at termination. We follow the super-martingale argument from \cite{diakonikolas2019sever}, which uses Lemma~\ref{lemma:filter-soundness} (proof in Appendix~\ref{theorem:sever-appendix}).

\begin{theorem}\label{theorem:sever}
Suppose that the initial conditions $R$ and $w_0$ satisfy $B_R(w_0) \subseteq B_{2R_0}(0)$. Let $\hat{w}$ be the output of \textsc{Gmm-Sever}. Then with probability at least $9/10$, it holds that \[\norm{\hat{w} - w^*}_2 \leq \frac{4\gamma}{\lambda^2} + 2412(L^2/\lambda^2)R\sqrt{\epsilon} + 1102\sigma (L^{3/2}/\lambda^2)\sqrt{\epsilon}.\]
The time complexity of \textsc{Gmm-Sever} is $O(\poly(n,d,p, T_\gamma))$ where $T_\gamma$ is the time complexity of the $\gamma$-approximate learner $\mathcal{L}$.
Moreover, for any $\delta>0$ the success probability can be amplified to $1-\delta$ by repeating \textsc{Gmm-Sever} $O(\log 1/\delta)$ times, or until $|S| \geq (1-10\epsilon)n$ at termination. We call this \textsc{Amplified-Gmm-Sever}, and it has time complexity $O(\poly(n,d,p, T_\gamma) \cdot \log(1/\delta))$.
\end{theorem}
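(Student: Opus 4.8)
The plan is to reduce the probabilistic content of the theorem to the single claim that, with probability at least $9/10$, \textsc{Gmm-Sever} terminates with $|S| \geq (1-10\epsilon)n$; the stated $\ell_2$ error then follows deterministically from Lemma~\ref{lemma:termination-bound-final}, and the running-time bound is immediate from the loop structure. To control $|S|$ I would run the super-martingale argument of \cite{diakonikolas2019sever}. Index the \textsc{Filter} invocations in an execution by $t=1,2,\dots$, let $S_t$ be the sample set after the $t$-th invocation (so $S_0=[n]$), let $\mathcal{F}_t$ be the $\sigma$-algebra generated by the randomness of the first $t$ invocations, and define the potential $d_t = |S_t \triangle I_\text{good}|$, so $d_0 = n - |I_\text{good}| \leq \epsilon n$. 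Each \textsc{Filter} call either returns its input unchanged, in which case $d_{t+1}=d_t$, or strictly shrinks $S$, in which case Lemma~\ref{lemma:filter-soundness} gives $\EE[d_{t+1}\mid\mathcal{F}_t] \leq d_t$ provided $|S_t|\geq 2n/3$, $|I_\text{good}|\geq (5/6)n$ (which holds for $\epsilon\leq 1/6$), and the relevant covariance bound holds. This applies verbatim to \emph{both} \textsc{Filter} calls of Algorithm~\ref{alg:gmm-sever}, so as long as the current set never drops below $2n/3$, $(d_t)_t$ is a non-negative super-martingale started below $\epsilon n$.

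The covariance hypotheses are exactly where Assumption~\ref{assumption:gmm} enters, through Lemma~\ref{lemma:assumption-corollaries}. Since the $\gamma$-approximate learner is constrained to $B_R(w_0)$ and $B_R(w_0)\subseteq B_{2R_0}(0)$ by hypothesis, every iterate $w$ lies in $B_{2R_0}(0)$ and satisfies $\norm{w-w^*}_2\leq 2R$ (using $w^*\in B_R(w_0)$). For the gradient-filtering call, the first bullet of Lemma~\ref{lemma:assumption-corollaries} gives $\EE_{I_\text{good}}(a^T\Gradient g(w)u)^2 \leq 2L^2\norm{u}_2^2$ for every unit $a$, i.e. $\Cov_{I_\text{good}}(\Gradient g_i(w)\cdot u)\preceq 2L^2\norm{u}_2^2 I$, which matches the threshold $L^2\norm{u}_2^2$ up to an absolute constant; for the moment-filtering call, the second bullet gives $\Cov_{I_\text{good}}(g_i(w))\preceq (2\sigma^2 L + 4L^2\norm{w-w^*}_2^2)I \preceq O(\sigma^2 L + L^2R^2)I$, matching the threshold $\sigma^2 L + 4L^2R^2$ up to an absolute constant. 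These $O(1)$ mismatches are harmless — absorb them into the \textsc{Filter} threshold, or observe that Lemma~\ref{lemma:filter-soundness} is robust to inflating $M$ by a fixed factor — so the super-martingale step is valid whenever $|S_t|\geq 2n/3$.

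To break the circular dependency (the super-martingale step needs $|S_t|\geq 2n/3$, but we want the super-martingale conclusion to certify that), let $\tau$ be the first time $|S_t|$ falls below $2n/3$ and pass to the stopped process $\tilde d_t = d_{t\wedge\tau}$, which is a genuine non-negative super-martingale with $\EE\tilde d_0\leq\epsilon n$. Doob's maximal inequality then gives that, with probability at least $9/10$, $\tilde d_t$ stays $O(\epsilon n)$ for all $t$; but $|S_t| \geq |I_\text{good}| - d_t \geq (1-O(\epsilon))n > 2n/3$ as long as $t\leq\tau$ and $\epsilon$ is below an absolute constant, so on this event $\tau=\infty$, the super-martingale property held throughout, and at termination $|S| \geq (1-10\epsilon)n$. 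The algorithm terminates deterministically in at most $n+1$ passes of the repeat-loop, since every pass that does not terminate removes at least one sample; each pass costs one call to $\mathcal{L}$ plus $\poly(n,d,p)$ work for the two \textsc{Filter} calls, so the total cost is $O(\poly(n,d,p,T_\gamma))$. Combining with Lemma~\ref{lemma:termination-bound-final} yields the claimed bound with probability $\geq 9/10$. Amplification is then routine: the event ``$|S|\geq(1-10\epsilon)n$ at termination'' is observable and occurs with probability $\geq 9/10$ in each independent run, and on that event the output provably satisfies the bound; running $O(\log(1/\delta))$ times and returning the first run realizing it (Algorithm~\ref{alg:amplified-gmm-sever}) drives the failure probability to at most $(1/10)^{\Theta(\log(1/\delta))}\leq\delta$ at a multiplicative cost of $O(\log(1/\delta))$.

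The main obstacle is precisely this circularity in the super-martingale argument — simultaneously ensuring that $|S|$ stays above $2n/3$ throughout (so that Lemma~\ref{lemma:filter-soundness} keeps applying) and that $|S|$ is large at termination — which the stopped-process / maximal-inequality device resolves at the cost of requiring $\epsilon$ smaller than an absolute constant. A secondary, purely bookkeeping, difficulty is matching the two \textsc{Filter} thresholds in Algorithm~\ref{alg:gmm-sever} to the covariance estimates of Lemma~\ref{lemma:assumption-corollaries} while tracking the $O(1)$ and $R$-versus-$2R$ slack, and carrying the error terms of Lemmas~\ref{lemma:termination-bound} and \ref{lemma:criticality} through Lemma~\ref{lemma:termination-bound-final}.
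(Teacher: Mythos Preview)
Your proposal is correct and follows essentially the same route as the paper: define the potential $|S_t\triangle I_\text{good}|$, use Lemma~\ref{lemma:filter-soundness} to show it is a (stopped/sticky) non-negative super-martingale, apply Ville's/Doob's maximal inequality to get $|S|\geq(1-10\epsilon)n$ with probability at least $9/10$, then invoke Lemma~\ref{lemma:termination-bound-final} and amplify by independent repetition. Your stopped-process device is equivalent to the paper's ``sticky'' process (freeze $X_t$ once $|S_{t-1}|<2n/3$), and you are if anything more careful than the paper in explicitly checking the covariance hypotheses of Lemma~\ref{lemma:filter-soundness} against Lemma~\ref{lemma:assumption-corollaries} and flagging the harmless $O(1)$ slack.
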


With the above guarantee for \textsc{Gmm-Sever} and \textsc{Amplified-Gmm-Sever}, we can now analyze \textsc{Iterated-Gmm-Sever} (proof in Appendix~\ref{theorem:gmm-convergence-appendix}).

\begin{theorem}\label{theorem:gmm-convergence}
Suppose that the input to \textsc{Iterated-Gmm-Sever} consists of functions $g_1,\dots,g_n: \RR^d \to \RR$, a corruption parameter $\epsilon>0$, well-conditionedness parameters $\lambda$ and $L$, a Lipschitzness parameter $L_g$, a noise level parameter $\sigma^2$, a radius bound $R_0$, and an optimization error parameter $\gamma$, such that Assumption~\ref{assumption:gmm} is satisfied for some unknown parameter $w^* \in \RR^d$, and $(L^2/\lambda^2)\sqrt{\epsilon} \leq 1/9648$. \footnote{This constant may be improved; we focus in this paper on dependence on the parameters of the problem and do not optimize constants.} Suppose that the algorithm is also given a failure probability parameter $\delta>0$.

Then the output $\hat{w}$ of \textsc{Iterated-Gmm-Sever} satisfies \[\norm{\hat{w} - w^*}_2 \leq O(\sigma(L^{3/2}/\lambda^2)\sqrt{\epsilon})\]
with probability at least $1-\delta$. Moreover, the algorithm has time complexity $O(\poly(n,d,p, T_\gamma) \cdot \log(1/\delta) \cdot \log(R\sqrt{L}/(\sigma\sqrt{\epsilon})))$, where $T_\gamma$ is the time complexity of a $\gamma$-approximate learner and $\gamma = \sigma L^{3/2}\sqrt{\epsilon}$.
\end{theorem}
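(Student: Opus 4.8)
The plan is to analyze the outer loop of \textsc{Iterated-Gmm-Sever} as a contraction on the radius bound $R_t$, feeding the per-call guarantee of \textsc{Amplified-Gmm-Sever} (Theorem~\ref{theorem:sever}) into an induction on $t$. First I would set up the invariant: at the start of iteration $t$, we have $w^* \in B_{R_t}(w_t)$ and $B_{R_t}(w_t) \subseteq B_{2R_0}(0)$, so that Assumption~\ref{assumption:gmm} and hence Lemma~\ref{lemma:assumption-corollaries} and Theorem~\ref{theorem:sever} apply. The base case $t=1$ holds since $w_1 = 0$, $R_1 = R_0 \geq \norm{w^*}_2$, and trivially $B_{R_0}(0) \subseteq B_{2R_0}(0)$. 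For the inductive step, Theorem~\ref{theorem:sever} (in its amplified form, with failure probability $\delta'$) gives that $\hat w_t = w_{t+1}$ satisfies
\[
\norm{w_{t+1} - w^*}_2 \leq \frac{4\gamma}{\lambda^2} + 2412(L^2/\lambda^2)R_t\sqrt{\epsilon} + 1102\sigma(L^{3/2}/\lambda^2)\sqrt{\epsilon} =: R_{t+1},
\]
which is exactly the update on line 5 (with $C = 2412$, absorbing the $\gamma$ term via $\gamma = \sigma L^{3/2}\sqrt{\epsilon}$ so that $4\gamma/\lambda^2 = 4\sigma(L^{3/2}/\lambda^2)\sqrt{\epsilon}$). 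This re-establishes $w^* \in B_{R_{t+1}}(w_{t+1})$; I also need $B_{R_{t+1}}(w_{t+1}) \subseteq B_{2R_0}(0)$, which follows because $R_{t+1} \leq R_t/2 \leq R_0$ once we are in the contracting regime (see below) and $\norm{w_{t+1}}_2 \leq \norm{w^*}_2 + R_{t+1} \leq R_0 + R_0$.

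Next I would verify the contraction. Writing the update as $R_{t+1} = a + b R_t$ with $a = (4 + 1102C')\sigma(L^{3/2}/\lambda^2)\sqrt{\epsilon}$ for an appropriate absolute constant and $b = 2412(L^2/\lambda^2)\sqrt{\epsilon}$, the hypothesis $(L^2/\lambda^2)\sqrt{\epsilon} \leq 1/9648$ gives $b \leq 1/4$, so $R_{t+1} \leq a + R_t/4$. Hence as long as $R_t \geq 2a$ we get $R_{t+1} \leq R_t/2 + R_t/4 < R_t/2$; i.e., the loop keeps running and the radius halves each round. The number of iterations before $R_t < 2a$ is therefore at most $\log_2(R_1/(2a)) = O(\log(R_0\sqrt{L}/(\sigma\sqrt{\epsilon})))$ after substituting $a$ and noting $L^{3/2}/\lambda^2 \geq L^{1/2}/\cdot$-type bounds hold only up to the normalization in the stated complexity; in any case the fixed point of $R \mapsto a + R/4$ is $4a/3 = O(\sigma(L^{3/2}/\lambda^2)\sqrt{\epsilon})$, and when the loop terminates (first time $R_t > R_{t-1}/2$) we have $R_{t-1} < 2a$, hence $\norm{\hat w_{t-1} - w^*}_2 \leq R_t \leq a + R_{t-1}/4 \leq a + a/2 = O(\sigma(L^{3/2}/\lambda^2)\sqrt{\epsilon})$, which is the claimed bound. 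A small subtlety is that the termination condition is stated on $R_t$ versus $R_{t-1}/2$, so I would double-check that the returned iterate $\hat w_{t-1}$ is the last one produced while the invariant still held, and that one extra (non-contracting) call does not break the error bound — it does not, since that call still satisfies the Theorem~\ref{theorem:sever} guarantee with $R$ equal to the last small radius.

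Finally, the probability and running-time bookkeeping. Each outer iteration invokes \textsc{Amplified-Gmm-Sever} once with failure probability $\delta' = c\delta/\log(R\sqrt{L}/(\sigma\sqrt{\epsilon}))$; by a union bound over the $O(\log(R\sqrt{L}/(\sigma\sqrt{\epsilon})))$ iterations, all calls succeed with probability $\geq 1-\delta$, and on this event the induction above goes through deterministically. The per-call cost is $O(\poly(n,d,p,T_\gamma)\cdot\log(1/\delta'))$ by Theorem~\ref{theorem:sever}, and $\log(1/\delta') = O(\log(1/\delta) + \log\log(\cdots))$, so multiplying by the iteration count gives the stated $O(\poly(n,d,p,T_\gamma)\cdot\log(1/\delta)\cdot\log(R\sqrt{L}/(\sigma\sqrt{\epsilon})))$. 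The main obstacle I anticipate is not any single estimate but the careful chaining of the invariant: ensuring that at every iteration the ball $B_{R_t}(w_t)$ stays inside $B_{2R_0}(0)$ (so Assumption~\ref{assumption:gmm} remains literally applicable, since several of its clauses are only assumed on $B_{2R_0}(0)$), and reconciling the exact constants in the line-5 update with the constants $4\gamma/\lambda^2 + 2412(L^2/\lambda^2)R\sqrt{\epsilon} + 1102\sigma(L^{3/2}/\lambda^2)\sqrt{\epsilon}$ coming out of Lemma~\ref{lemma:termination-bound-final}/Theorem~\ref{theorem:sever}, including the choice $\gamma = \sigma L^{3/2}\sqrt{\epsilon}$ that makes the $\gamma$-term the same order as the irreducible noise term.
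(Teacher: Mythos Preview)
Your plan matches the paper's proof: maintain the invariant $w^*\in B_{R_t}(w_t)\subseteq B_{2R_0}(0)$, apply Theorem~\ref{theorem:sever} inductively with a union bound over the $O(\log(R_0\sqrt{L}/(\sigma\sqrt{\epsilon})))$ outer calls, and read off the final error from the termination condition together with the choice $\gamma=\sigma L^{3/2}\sqrt{\epsilon}$.

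There is one arithmetic slip to fix. The displayed step ``$R_{t+1}\le R_t/2 + R_t/4 < R_t/2$'' is false ($3R_t/4\not< R_t/2$), so your argument that ``$R_t\ge 2a$ implies the loop continues'' does not go through as written. In fact you do not need to \emph{prove} a contraction to get halving: the loop condition itself enforces $R_{t+1}\le R_t/2$ whenever the loop continues, which is exactly how the paper argues the iteration bound. What is needed for the runtime is the converse direction, namely that once $R_t$ is small the loop must stop: from $a+bR_t > R_t/2$ one gets $R_t < a/(1/2-b)\le 4a$ (not $2a$), and hence at termination $R_{t-1}<4a$ and $R_t = a+bR_{t-1} < 2a$. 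This also corrects your ``$R_{t-1}<2a$'' to $R_{t-1}<4a$; the final bound $\norm{\hat w_{t-1}-w^*}_2\le R_t = O(a)=O(\sigma(L^{3/2}/\lambda^2)\sqrt{\epsilon})$ is unaffected.
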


\section{Applications}

In this section, we apply \textsc{Iterated-Gmm-Sever} to solve linear and logistic instrumental variables regression in the strong contamination model. 

\paragraph{Robust IV Linear Regression}\label{section:iv} Let $Z$ be the vector of $p$ real-valued instruments, and let $X$ be the vector of $d$ real-valued covariates. Suppose that $Z$ and $X$ are mean-zero. Suppose that the response can be described as $Y = X^T w^* + \xi$ for some fixed $w^* \in \RR^d$. The distributional assumptions we will make about $X$, $Y$, and $Z$ are described below.

\begin{assumption}\label{assumption:iv-linear}
Given a corruption parameter $\epsilon > 0$, well-conditionedness parameters $\lambda$ and $L$, hypercontractivity parameter $\tau$, noise level parameter $\sigma^2$, and norm bound $R_0$, we assume the following: (i) \textbf{Valid instruments:} $\EE[\xi|Z] = 0$, (ii) \textbf{Bounded-variance noise:} $\EE[\xi^2|Z] \leq \sigma^2$, (iii) \textbf{Strong instruments:} $\sigma_\text{min}(\EE ZX^T) \geq \lambda$, (iv) \textbf{Boundedness:} $\norm{\Cov([Z; X]}_\text{op} \leq L$, (v) \textbf{Hypercontractivity:} $[Z; X]$ is $(4,2,\tau)$-hypercontractive, (vi) \textbf{Bounded 8th moments:} $\max_i X_i^8 \leq O(\tau^2 L^4)$ and $\max_i Z_i^8 \leq O(\tau^2 L^4)$ (vii) \textbf{Bounded norm parameter:} $\norm{w^*}_2 \leq R_0$.
\end{assumption}

Define \[g(w)= Z(Y - X^T w)\] for $w \in \RR^d$, and let $(X_i,Y_i,Z_i)$ be $n$ independent samples drawn according to $(X,Y,Z)$. Let $\epsilon > 0$. We prove that under the above assumption, if $n$ is sufficiently large, then with high probability, for any $\epsilon$-contamination $(X'_i, Y'_i, Z'_i)_{i=1}^n$ of $(X_i,Y_i, Z_i)_{i=1}^n$, the functions $g_i(w) = Z'_i(Y'_i - (X'_i)^T w)$ satisfy
Assumption~\ref{assumption:gmm}. Formally, we prove the following theorem (see Appendix~\ref{appendix:iv-convergence}):

\begin{theorem}\label{theorem:iv-convergence}
Let $\epsilon > 0$. Suppose that $\epsilon < c\min(\lambda^2/(\tau L^2), \lambda^4/L^4)$ for a sufficiently small constant $c>0$, and suppose that $n \geq C(d+p)^5 \tau \log((p+d)/\tau\epsilon)/\epsilon^2$ for a sufficiently large constant $C$. Then with probability at least $0.95$ over the samples $(X_i, Y_i, Z_i)_{i=1}^n$, the following holds: for any $\epsilon$-corruption of the samples and any upper bound $R_0 \geq \norm{w^*}_2$, Assumption~\ref{assumption:gmm} is satisfied. In that event, if $L$, $\lambda$, $\sigma$, and $\epsilon$ are known, then there is a $\poly(n,d,p,\log(1/\delta),\log(R_0/(\sigma\sqrt{\epsilon})))$-time algorithm which produces an estimate $\hat{w}$ satisfying, with probability at least $1-\delta$: \[\norm{\hat{w} - w^*}_2 \leq O(\sigma (L^{3/2}/\lambda^2)\sqrt{\epsilon})\]
\end{theorem}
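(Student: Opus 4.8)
The plan is to reduce Theorem~\ref{theorem:iv-convergence} to Theorem~\ref{theorem:gmm-convergence}. The algorithmic conclusion and the running time follow immediately from that theorem once we establish its only nontrivial hypothesis: with probability at least $0.95$ over the clean draws $(X_i,Y_i,Z_i)$, for \emph{every} $\epsilon$-corruption and every $R_0\ge\norm{w^*}_2$, the perturbed moments $g_i(w)=Z_i'(Y_i'-(X_i')^Tw)$ satisfy Assumption~\ref{assumption:gmm} for the true $w^*$, with $I_{\text{good}}$ the set of indices the adversary left untouched and with well-conditionedness and noise parameters equal to $(\lambda,L,\sigma)$ up to constant and $\poly(\tau)$ factors. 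Several of the bullets are essentially free: $g$ is affine in $w$, so $\Gradient g(w)=-Z X^T$ does not depend on $w$, hence the Lipschitz-gradient bullet holds with $L_g=0$ and the stability-of-gradient bullet is vacuous; assumption (vii) supplies $R_0\ge\norm{w^*}_2$; and, since $\norm{\EE_S g_i(\cdot)}_2^2$ is then a convex quadratic in $w$, the $\gamma$-approximate learner may be taken to be an exact minimizer running in $\poly(n,d,p)$ time, which together with the $O(\log(R_0/(\sigma\sqrt\epsilon)))$ outer iterations of \textsc{Iterated-Gmm-Sever} yields the claimed time complexity.

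The substance is verifying the four remaining bullets --- strong identifiability, bounded-variance gradient, bounded-variance noise, and well-specification --- for $I_{\text{good}}$. First I would record the population facts guaranteed by Assumption~\ref{assumption:iv-linear}: $\sigma_{\min}(\EE ZX^T)\ge\lambda$; $\EE[(u^TZ)^2(v^TX)^2]\le\sqrt{\EE(u^TZ)^4\,\EE(v^TX)^4}\le\tau L^2$ (hypercontractivity plus the covariance bound, using that a marginal of a hypercontractive vector is hypercontractive); $\EE[(v^TZ)^2\xi^2]=\EE[(v^TZ)^2\,\EE[\xi^2\mid Z]]\le\sigma^2 L$; and $\EE[Z\xi]=\EE[Z\,\EE[\xi\mid Z]]=0$. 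Next I would show that these are reproduced, up to small additive error and with total failure probability at most $0.05$, by the empirical averages over the $n$ clean samples: operator-norm concentration of $\EE_{[n]}[Z_iX_i^T]$, $\EE_{[n]}[Z_iZ_i^T]$, $\EE_{[n]}[X_iX_i^T]$ by matrix concentration; uniform-over-$(u,v)$ concentration of $\EE_{[n]}[(u^TZ_i)^2(v^TX_i)^2]$ and $\EE_{[n]}[(v^TZ_i)^2\xi_i^2]$ by an $\epsilon$-net over $S^{p-1}\times S^{d-1}$ combined with the $8$th-moment bounds (vi); and $\norm{\EE_{[n]}[Z_i\xi_i]}_2$ small. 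It is this uniform fourth-moment concentration that forces the $(d+p)^5$ and $\epsilon^{-2}$ factors in the sample size.

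The second step passes from the full clean sample to the unknown subset $I_{\text{good}}$ of size $(1-\epsilon)n$. Because the adversary inspects the data, $I_{\text{good}}$ is data-dependent, so instead of a union bound over subsets I would use a resilience argument. For any vector statistic $\zeta_i$, deleting an $\epsilon$-fraction shifts the empirical mean by at most $(1-\epsilon)^{-1}\sqrt{\epsilon\,\norm{\Cov_{[n]}(\zeta_i)}_{\text{op}}}$; applied to $\zeta_i=Z_i\xi_i$, whose empirical covariance has operator norm $O(\sigma^2 L)$, this gives well-specification $\norm{\EE_{I_{\text{good}}} g(w^*)}_2\le O(\sigma\sqrt{L\epsilon})$ since the population mean vanishes. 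For the matrix statistic $Z_iX_i^T$ and the rank-one quantities, the bound ``no $\epsilon$-fraction carries too much mass in any direction'' follows from bounded fourth moments: for $|T|=\epsilon n$, $\tfrac1{|T|}\sum_{i\in T}(u^TZ_i)^2\le\epsilon^{-1/2}\sqrt{\EE_{[n]}(u^TZ_i)^4}\le\epsilon^{-1/2}\sqrt{\tau}L$, whence (via Cauchy--Schwarz) $\norm{\EE_{[n]}[Z_iX_i^T]-\EE_{I_{\text{good}}}[Z_iX_i^T]}_{\text{op}}\le O(\sqrt{\tau}L\sqrt\epsilon)$ and the second-moment quantities change only by a $(1\pm O(\sqrt\epsilon))$ factor. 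The smallness conditions on $\epsilon$ in the hypothesis are exactly what is needed here: $\epsilon\lesssim\lambda^2/(\tau L^2)$ so that the $O(\sqrt\tau L\sqrt\epsilon)$ perturbation of $\EE Z X^T$ stays below $\lambda/2$ (giving strong identifiability with parameter $\lambda/2$), and $\epsilon\lesssim\lambda^4/L^4$ so that the precondition $(L^2/\lambda^2)\sqrt\epsilon\le1/9648$ of Theorem~\ref{theorem:gmm-convergence} holds for the chosen parameters. With all bullets verified, Theorem~\ref{theorem:gmm-convergence} yields $\norm{\hat w-w^*}_2=O(\sigma L^{3/2}\sqrt\epsilon/\lambda^2)$ after absorbing $\poly(\tau)$ into the constant.

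I expect the main obstacle to be the uniform-in-direction concentration of the empirical fourth-moment functional $\EE_{[n]}[(u^TZ_i)^2(v^TX_i)^2]$ together with its resilience under deletion of an $\epsilon$-fraction: this is the only place the $8$th-moment hypotheses genuinely enter, it is what drives the $(d+p)^5$ and $\epsilon^{-2}$ factors, and pushing clean constants through the net-plus-truncation argument is delicate. The remaining ingredients --- matrix concentration for $\EE_{[n]}[ZX^T]$, the conditional-moment bounds for the noise, and the resilience lemma itself --- are comparatively routine.
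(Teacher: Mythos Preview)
Your reduction to Theorem~\ref{theorem:gmm-convergence}, the observation that $\Gradient g_i(w)=-Z_i'X_i'^T$ is constant in $w$ (so $L_g=0$ and the last two bullets of Assumption~\ref{assumption:gmm} are free), and the overall architecture --- population facts, then finite-sample concentration, then stability under deletion of an $\epsilon$-fraction --- all match the paper exactly.

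There is one genuine gap. You claim uniform-in-$v$ concentration of $\EE_{[n]}[(v^T Z_i)^2\xi_i^2]$ to $O(\sigma^2 L)$, invoking the eighth-moment hypothesis (vi). But (vi) bounds only the eighth moments of the coordinates of $X$ and $Z$; the noise $\xi$ carries nothing beyond the conditional second-moment bound $\EE[\xi^2\mid Z]\le\sigma^2$. Without a fourth moment on $\xi$ you cannot get exponential-in-$n$ tails for $\EE_{[n]}[(v^T Z_i)^2\xi_i^2]$ at a fixed $v$, so you cannot union-bound over a net of directions. The same issue breaks your resilience step for well-specification: it needs $\norm{\Cov_{[n]}(Z_i\xi_i)}_{\text{op}}=O(\sigma^2 L)$, which is not guaranteed over the full sample. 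As a consequence, taking $I_{\text{good}}$ to be exactly the set of untouched indices does not work --- that set may still contain heavy-tailed draws of $Z_i\xi_i$ on which the bounded-variance-noise bullet fails.

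The paper repairs this with a preliminary truncation. For each quantity it constructs, deterministically from the clean data, a subset $I\subseteq[n]$ of size $(1-\epsilon/100)n$ by discarding samples of large norm; on $I$ the summands are bounded, so Hoeffding/Bernstein plus a net gives the desired uniform second-moment upper bound using only the population second-moment assumption (this is the content of Lemmas~\ref{lemma:covariance-upper-concentration} and~\ref{lemma:mean-zero-concentration}). One then sets $I_{\text{good}}$ to be the intersection of this $I$ with the uncorrupted indices --- still of size $\ge(1-O(\epsilon))n$ --- and applies exactly your resilience argument (Lemma~\ref{lemma:key} in the paper) to $I$ rather than to $[n]$. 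With that single modification --- truncate first, and let $I_{\text{good}}=I\cap\{\text{uncorrupted}\}$ rather than just $\{\text{uncorrupted}\}$ --- your outline becomes the paper's proof.
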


\paragraph{Robust IV Logistic Regression}\label{section:logistic-iv} Let $Z$ be a vector of $p$ real-valued instruments, and let $X$ be a vector of $d$ real-valued covariates. Suppose that $Z$ and $X$ are mean-zero. Suppose that the response can be described as $Y = G(X^T w^*) + \xi$ for some fixed $w^* \in \RR^d$, where $G$ is the (unscaled) logistic function. The proofs only use $1$-Lipschitzness of $G$ and $G'$, and that $G'(0)$ is bounded away from $0$.

As far as distributional assumptions, we assume in this section that Assumption~\ref{assumption:iv-linear} holds, and additionally assume that the norm bound satisfies $R_0 \leq c\min(\lambda^2/L, \lambda/\sqrt{\tau L^3})$ for an appropriate constant $c$, where $\lambda$, $L$, and $\tau$ are as required for the Assumption. We obtain the following algorithmic result (proof in Appendix~\ref{appendix:nonlinear-iv-convergence}):


\begin{theorem}\label{theorem:nonlinear-iv-convergence}
Let $\epsilon > 0$. Suppose that $\epsilon < c\min(\lambda^2/(\tau L^2), \lambda^4/L^4)$ for a sufficiently small constant $c>0$, and suppose that $n \geq C(d+p)^5 \tau \log((p+d)/\tau\epsilon)/\epsilon^2$ for a sufficiently large constant $C$. Suppose that $\norm{w^*}_2 \leq R_0 \leq c\min(\lambda^2/L,\lambda/\sqrt{\tau L^3})$. Then with probability at least $0.95$ over the samples $(X_i, Y_i, Z_i)_{i=1}^n$, the following holds: for any $\epsilon$-corruption of the samples, Assumption~\ref{assumption:gmm} is satisfied. In that event, if $R_0$, $L$, $\lambda$, $\sigma$, and $\epsilon$ are known, then there is a $\poly(n,d,p,\log(1/\delta),\log(R_0/(\sigma\sqrt{\epsilon})))$-time algorithm which produces an estimate $\hat{w}$ satisfying, with probability at least $1-\delta$: \[\norm{\hat{w} - w^*}_2 \leq O(\sigma (L^{3/2}/\lambda^2)\sqrt{\epsilon})\]
\end{theorem}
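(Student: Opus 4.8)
The plan is to show that, with probability at least $0.95$ over the clean draw $(X_i,Y_i,Z_i)_{i=1}^n$, \emph{every} $\epsilon$-corruption of the moments $g_i(w) = Z_i'(Y_i' - G((X_i')^T w))$ satisfies Assumption~\ref{assumption:gmm} for the true $w^*$ and norm bound $R_0$, with parameters standing in the same relation to the distributional parameters $(\lambda, L, \tau, \sigma^2)$ as in the IV linear case, and then to invoke Theorem~\ref{theorem:gmm-convergence}. The key observation that makes this largely a corollary of the IV linear analysis is that at the true parameter the logistic moment degenerates to the linear one: since $Y = G(X^T w^*) + \xi$, we have $g(w^*) = Z(Y - G(X^T w^*)) = Z\xi$, exactly as in Section~\ref{section:iv}. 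Hence the conditions of Assumption~\ref{assumption:gmm} that involve only $g(w^*)$ --- bounded-variance noise, well-specification, and the $\Cov_{I_\text{good}}(g(w^*))$ bound --- hold verbatim by the argument already used for Theorem~\ref{theorem:iv-convergence}: valid instruments give $\EE[Z\xi]=0$, the conditional-noise and boundedness hypotheses give the required second-moment bounds, and concentration plus stability under deletion of an $\epsilon$-fraction (using the bounded eighth moments) transfers these to the unknown good set. What is genuinely new is that the gradient is no longer constant: $\Gradient g(w) = -G'(X^T w)\, Z X^T$, a $p\times d$ matrix.

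\paragraph{The gradient conditions at $w^*$.}
For strong identifiability, write $\EE[\Gradient g(w^*)] = -G'(0)\,\EE[ZX^T] - \EE[(G'(X^T w^*)-G'(0))ZX^T]$. Since $G'(0) = 1/4 > 0$ and $\sigma_\text{min}(\EE[ZX^T]) \geq \lambda$ (strong instruments), it suffices to bound the second term in operator norm; by $1$-Lipschitzness of $G'$, $|G'(X^T w^*) - G'(0)| \leq |X^T w^*|$, so that term is at most $\norm{w^*}_2$ times $\sup \EE[|z^T X|\,|u^T Z|\,|v^T X|]$ over unit vectors $u,v,z$, which by Hölder with exponents $(4,2,4)$ and $(4,2,\tau)$-hypercontractivity (together with $\norm{\Cov([Z;X])}_\text{op}\le L$) is $O(\sqrt{\tau L^3})$. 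The hypothesis $R_0 \leq c\min(\lambda^2/L, \lambda/\sqrt{\tau L^3})$ then forces this to be $\le \lambda/8$, giving $\sigma_\text{min}(\EE[\Gradient g(w^*)]) \geq \lambda/8$. For the bounded-variance gradient, $|u^T \Gradient g(w^*) v| \leq \tfrac14 |u^T Z|\,|v^T X|$ since $0 \le G' \le 1/4$, whence $\EE(u^T \Gradient g(w^*) v)^2 \leq \tfrac{1}{16}\sqrt{\EE(u^T Z)^4\,\EE(v^T X)^4} = O(\tau L^2)$ by hypercontractivity. In both cases the population bound is transferred to $\EE_{I_\text{good}}$ by a covering argument over unit directions together with the deletion-stability estimate, both of which use the bounded eighth moments, exactly as in the IV linear proof.

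\paragraph{Lipschitz gradient, stability, and assembly.}
For the Lipschitz-gradient condition, $\Gradient g(w) - \Gradient g(w^*) = -(G'(X^T w) - G'(X^T w^*))ZX^T$, and $1$-Lipschitzness of $G'$ gives $|G'(X^T w) - G'(X^T w^*)| \leq |X^T(w - w^*)|$; hence $\norm{\EE_{I_\text{good}}[\Gradient g(w) - \Gradient g(w^*)]}_\text{op} \leq \norm{w-w^*}_2 \cdot \sup \EE_{I_\text{good}}[|z^T X|\,|u^T Z|\,|v^T X|]$ over unit $u,v,z$, and the same Hölder-plus-hypercontractivity estimate bounds the supremum by $L_g := O(\sqrt{\tau L^3})$ (again passing to the good set via a net over directions \emph{and} over $w\in B_{2R_0}(0)$, using the eighth-moment hypotheses). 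The stability-of-gradient condition $R_0 < \lambda'/(8 L_g)$, with $\lambda' = \lambda/8$, then follows from $R_0 \leq c\lambda/\sqrt{\tau L^3}$ for a small enough absolute constant $c$. Assembling, Assumption~\ref{assumption:gmm} holds with parameters $\lambda' = \Theta(\lambda)$, $L' = \Theta(\sqrt{\tau}\,L)$, $\sigma' = \Theta(\sigma/\tau^{1/4})$, and $L_g = \Theta(\sqrt{\tau L^3})$; the hypothesis $\epsilon < c\min(\lambda^2/(\tau L^2), \lambda^4/L^4)$ ensures the smallness condition of Theorem~\ref{theorem:gmm-convergence}, and a $\gamma$-approximate learner for $\norm{\EE_S g_i(\cdot)}_2^2$ on $B_R(w_0)$ is realized in polynomial time by projected gradient descent, since $G$ and $G'$ being Lipschitz and bounded makes the objective smooth on that ball. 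Theorem~\ref{theorem:gmm-convergence} then yields $\norm{\hat w - w^*}_2 = O(\sigma L^{3/2}/\lambda^2 \cdot \sqrt\epsilon)$ (treating $\tau$-factors as absolute constants, as in Theorem~\ref{theorem:iv-convergence}) within the stated running time. I expect the main obstacle to be precisely the uniform concentration and deletion-stability statements needed for the gradient conditions --- in particular the Lipschitz-gradient bound must hold \emph{uniformly over all $w \in B_{2R_0}(0)$}, an issue entirely absent in the IV linear case where $\Gradient g$ is constant and $L_g = 0$; everything else reduces to bookkeeping on constants and to the already-established IV linear estimates.
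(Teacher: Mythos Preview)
Your proposal is correct and follows essentially the same route as the paper: reduce the noise/well-specification conditions to the linear case via $g(w^*)=Z\xi$, handle the gradient conditions using boundedness and $1$-Lipschitzness of $G'$ together with $G'(0)=1/4$, obtain $L_g=\Theta(\sqrt{\tau L^3})$, and then invoke Theorem~\ref{theorem:gmm-convergence}.

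One small technical simplification worth noting: for the Lipschitz-gradient bound the paper does not use a H\"older $(4,2,4)$ estimate followed by a net over $w\in B_{2R_0}(0)$. Instead it applies Cauchy--Schwarz directly at the empirical level,
\[
\bigl|\EE_S u^T Z X^T v\,(G'(X^T w)-G'(X^T w^*))\bigr|\le \sqrt{\EE_S(u^T Z X^T v)^2}\;\sqrt{\EE_S(X^T(w-w^*))^2},
\]
bounding the first factor by the already-established empirical fourth-moment lemma (Lemma~\ref{lemma:linear-gradient-upper}) and the second by the empirical covariance bound (Corollary~\ref{corollary:empirical-covariance}). Since both factors are controlled uniformly over unit $u,v$ and all $w$ (the $w$-dependence is entirely through the scalar $\norm{w-w^*}_2$), no covering over $w$ is needed. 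Your H\"older approach also works, but the net over $w$ you flag as the main obstacle is in fact avoidable.
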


\section{Experiments}\label{section:experiments}

In this section we corroborate our theory by applying our algorithm \textsc{Iterated-Gmm-Sever} to several datasets for IV linear regression with heterogeneous treatment effects. This is a natural setting in which the instruments and covariates are high-dimensional, necessitating dimension-independent robust estimators.

\paragraph{IV Linear Regression with Heterogeneous Treatment Effects.} Consider a study in which each sample has a vector $X$ of characteristics, a scalar instrument $Z$, a scalar treatment $T$, and a response $Y$. Assuming that the average treatment effect is linear in the characteristics with unknown coefficients, and that the response noise is independent of the instrument, we can write a moment condition \[\EE[XZ(Y - T\langle X, w^* \rangle - \langle X, \beta \rangle)] = 0.\]
This can be interpreted as an IV linear regression, and therefore our algorithm applies to it.

\paragraph{Synthetic experiment.} For our first experiment, we generate a unknown $20$-dimensional parameter vector $\theta \sim N(0,I)$. We then generate $10000$ independent samples each with a $20$-dimensional characteristic vector $X$ drawn from $N(0, I)$. The instrument $Z$ is drawn from an unbiased Bernoulli distribution, and the binary treatment is drawn from a Bernoulli-$p$ distribution with \[p = \frac{1}{1 + \exp(-Z-\sqrt{20}U\bar{X})},\] where $U$ is a standard normal random variable and $\bar{X} = \frac{1}{20}\sum_{j=1}^{20} X_j$. Finally, the response is \[Y = \langle X, \theta\rangle T + U.\]
The treated samples ($T = 1$) with positive $\bar{X}$ tend to have larger $U$ (and hence larger response), and the treated samples with negative $\bar{X}$ tend to have smaller $U$ (and hence smaller response), so ordinary least-squares would tend to overestimate the treatment effect in the direction of the all-ones vector. However, $U$ is by construction independent of $XZ$, so $Z$ is a valid instrument. Indeed, in the absence of corruption, IV linear regression approximately recovers the true parameter $\theta$.

We then corrupt an $\epsilon$-fraction of the samples, by setting the characteristic vector equal to the all-ones vector (and leaving the instrument and response unchanged). We compute the $\ell_2$ recovery error of  \textsc{Iterated-Gmm-Sever}, classical IV, and two-stage Huber regression for $\epsilon$ varying between $0.01$ and $0.5$. For each choice of $\epsilon$, we repeat the experiment $10$ times and average the recovery errors for each algorithm. See Figure~\ref{fig:synthetic} for the results.

\begin{figure}[h]
    \centering
    \begin{subfigure}{.47\textwidth}
    \centering
    \includegraphics[width=1\textwidth]{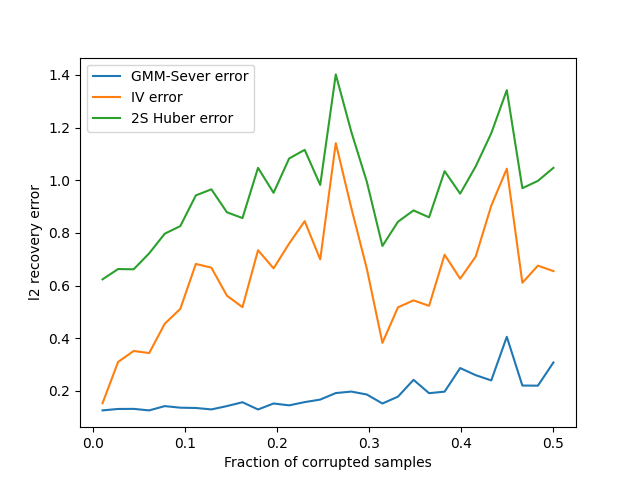}
    \caption{(Synthetic) $\ell_2$ recovery error of \textsc{Iterated-Gmm-Sever} versus baseline classical and robust approaches, for varied levels of corruption.}
    \label{fig:synthetic}
    \end{subfigure}
    ~~~
    \begin{subfigure}{.47\textwidth}
    \centering
    \includegraphics[width=1\textwidth]{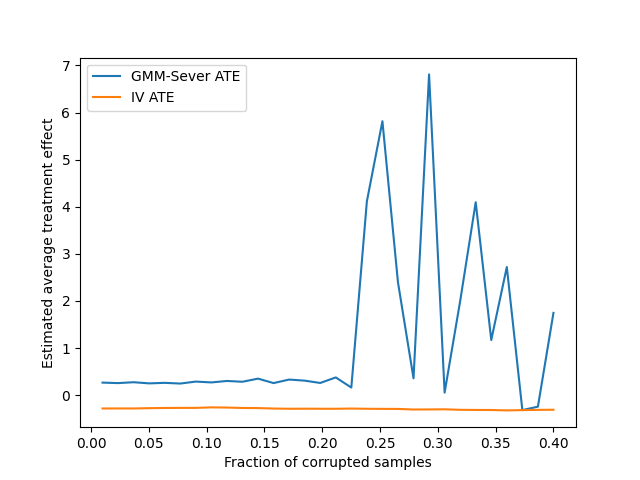}
    \caption{(Semi-synthetic) Average treatment effect (ATE) of \textsc{Iterated-Gmm-Sever} and IV regression for varied levels of corruption.}
    \label{fig:card}
    \end{subfigure}
    \caption{Experimental results for synthetic and semi-synthetic experiment.}
\end{figure}

\vspace{-1em}
\paragraph{Semi-synthetic experiment.} In this experiment, we use the data of Card \cite{card1993using} from the National Longitudinal Survey of Young Men for estimating the average treatment effect (ATE) of education on wages. The data consists of $3010$ samples with years of education as the treatment, log wages as the response, and proximity to a $4$-year college as the instrument, along with $22$ covariates (e.g. geographic and demographic indicator variables).
For simplicity, we restrict the model to only two covariates (years and squared years of labor force experience). We find that the ATE estimated by \textsc{Iterated-Gmm-Sever} is close to the positive ATE estimated by classical IV, suggesting that Card's inference may be robust. Next, we corrupt an $\epsilon$-fraction of the data. Specifically, we solve the IV regression on the uncorrupted data, and we alter the responses of a random $\epsilon$-fraction of the data in a way such that the parameter which satisfies the moment conditions is exactly negated (for $\epsilon n$ not too small, this can be done by solving an underdetermined linear system). This in particular negates the ATE inferred by classical IV regression.

In Figure~\ref{fig:card}, we plot the ATE inferred by IV and the ATE inferred by \textsc{Iterated-Gmm-Sever} as $\epsilon$ is varied from $0.01$ to $0.4$ (the \textsc{Iterated-Gmm-Sever} breaks down on this data for larger levels of corruption). We see that \textsc{Iterated-Gmm-Sever} approximately recovers the correct (positive) ATE of the uncorrupted data, for $\epsilon$ up to around $0.15$. For larger $\epsilon$, the estimate becomes unstable. This is due to the algorithm removing some but not all of the outliers, which are very large norm. We expect that an additional norm thresholding procedure could help remedy this instability, but further investigation may be required. 

\bibliographystyle{plain}
\bibliography{bib}

\newpage
\appendix

\section{Omitted proofs from Section 3}

\subsection{Proof of Lemma~\ref{lemma:assumption-corollaries}}\label{lemma:assumption-corollaries-appendix}

\begin{proof}
~\paragraph{First claim.} Note that
\begin{align*}
&|\EE_{I_\text{good}}(u^T \Gradient g(w)v)^2 - \EE_{I_\text{good}} (u^T \Gradient g(w^*)v)^2| \\
&\qquad= |\EE_{I_\text{good}}(u^T (\Gradient g(w) - \Gradient g(w^*)) v)(u^T (\Gradient g(w) + \Gradient g(w^*))v)| \\
&\qquad\leq \EE_{I_\text{good}}(u^T (\Gradient g(w) - \Gradient g(w^*)) v)^2 + 2\sqrt{\EE_{I_\text{good}}(u^T (\Gradient g(w) - \Gradient g(w^*)) v)^2 \EE_{I_\text{good}}(u^T \Gradient g(w^*) v)^2} \\
&\qquad\leq L_g^2 \norm{w-w^*}_2^2 + 2L_gL\norm{w-w^*}_2 \\
&\qquad\leq \lambda^2/16 + \lambda L/2 \\
&\qquad\leq L^2
\end{align*}
where the first inequality expands $\Gradient g(w) + \Gradient g(w^*)$ as $(\Gradient g(w) - \Gradient g(w^*)) + 2\Gradient g(w^*)$ and applies Cauchy-Schwarz to the resulting second term; the second inequality applies the Lipschitz gradient assumption and bounded-variance gradient assumption at $w^*$; the third inequality applies the stability of gradient assumption; and the fourth inequality uses that $\lambda \leq L$. It follows that
\[\EE_{I_\text{good}}(u^T \Gradient g(w)v)^2 \leq L^2 + \EE_{I_\text{good}}(u^T \Gradient g(w^*) v)^2 \leq 2L^2\] as claimed.

\paragraph{Second claim.} Observe that for any unit vector $v$, \[\EE_{I_\text{good}}(v\cdot g(w))^2 \leq 2\EE_{I_\text{good}}(v\cdot g(w^*))^2 + 2\EE_{I_\text{good}}(v\cdot(g(w)-g(w^*)))^2.\]
The first term is at most $2\sigma^2 L$ by the bounded-variance noise assumption. The second term can be written and bounded as
\begin{align*}
\EE_{I_\text{good}} (v\cdot (g(w) - g(w^*)))^2
&= \EE_{I_\text{good}} \left(\int_0^1 v^T \Gradient g(tw + (1-t)w^*)(w-w^*) \, dt \right)^2 \\
&\leq \int_0^1 \EE_{I_\text{good}} (v^T \Gradient g(tw + (1-t)w^*)(w-w^*))^2 \\
&\leq 2L^2 \norm{w-w^*}_2^2
\end{align*}
by the first claim. This proves the second claim. 

\paragraph{Third claim.} We have for any $w \in B_{2R_0}(0)$ that $\norm{w-w^*}_2 \leq 4R_0$, so
\begin{align*}
\sigma_\text{min}(\EE_{I_\text{good}}\Gradient g(w)) 
&\geq \sigma_\text{min}(\EE_{I_\text{good}}\Gradient g(w^*)) - \norm{\EE_{I_\text{good}}\Gradient g(w) - \EE_{I_\text{good}} \Gradient g(w^*)}_\text{op} \\
&\geq \lambda - L_g \cdot 4R_0 \\
&\geq \lambda/2
\end{align*}
as claimed, where the second inequality uses the strong identifiability assumption and Lipschitz gradient assumption, and the third inequality uses the stability of gradient assumption.

\paragraph{Fourth claim.} We note that
\[\EE_{I_\text{good}} g(w) - \EE_{I_\text{good}} g(w^*) = \int_0^1 \EE_{I_\text{good}} \Gradient g(tw + (1-t)w^*)(w-w^*) \, dt.\]
The expectation of the gradient has operator norm at most $L + L_g\norm{w-w^*}_2$ by bounded-variance and Lipschitzness of the gradient, and this is at most $2L$ by stability of the gradient and the inequality $\lambda \leq L$. As a result, \[\norm{\EE_{I_\text{good}} g(w) - \EE_{I_\text{good}} g(w^*)}_2 \leq 2L\norm{w-w^*}_2,\] so together with well-specification it follows that $\norm{\EE_{I_\text{good}} g(w)}_2 \leq \sigma\sqrt{L} + 2L\norm{w-w^*}_2$ as claimed.

\paragraph{Fifth claim.} This follows immediately from the first claim. Indeed, for any $w \in B_{2R_0}(0)$ and unit vectors $u \in \RR^p$ and $v \in \RR^d$,
\[(u^T \EE_{I_\text{good}} \Gradient g(w) v)^2 = (\EE_{I_\text{good}} u^T \Gradient g(w) v)^2 \leq \EE_{I_\text{good}} (u^T \Gradient g(w) v)^2 \leq 2L^2.\]
Taking the supremum over all $u,v$ we get that $\norm{\EE_{I_\text{good}}\Gradient g(w)}_\text{op} \leq L\sqrt{2}$ as claimed.
\end{proof}

\section{Omitted proofs from Section~4}

\subsection{Proof of Lemma~\ref{lemma:filter-correctness}}\label{lemma:filter-correctness-appendix}
\begin{proof}
If the algorithm does not remove any samples, then it holds that \[\norm{\Cov_S(\xi)}_\text{op} = \Var_S(v\cdot \xi) = \frac{1}{|S|}\sum_{i \in S} \tau_i \leq 24M.\]
The claim then follows from application of Lemma~\ref{lemma:key} to sets $S$ and $I$, since the total variation distance between the uniform distribution on $S$ and the uniform distribution on $I$ is at most $2\epsilon$.
\end{proof}

\subsubsection{Proof of Lemma~\ref{lemma:filter-soundness}}\label{lemma:filter-soundness-appendix}

\begin{proof}
If no elements are filtered out, then the inequality trivially holds. Suppose otherwise. The difference $|S'\xor I_\text{good}| - |S\xor I_\text{good}|$ is precisely the number of good elements (i.e. $i \in I_\text{good}$) filtered out in this iteration minus the number of bad elements filtered out in this iteration. Due to the random thresholding, the expectation of the former is $\frac{1}{\max \tau_i} \sum_{i \in S\cap I_\text{good}} \tau_i$, and the expectation of the latter is $\frac{1}{\max \tau_i} \sum_{i \in S\setminus I_\text{good}} \tau_i$. Thus, we need to show that $ \sum_{i \in S\cap I_\text{good}} \tau_i \leq  \sum_{i \in S\setminus I_\text{good}} \tau_i$.

Define $S_\text{good} = S\cap I_\text{good}$ and $S_\text{bad} = S \setminus I_\text{good}$. Let $v$ be the largest eigenvector of $\Cov_S(\xi_i)$. We have that
\begin{align*}
\Var_{S_\text{good}}(v\cdot \xi_i)
&= \EE_{S_\text{good}}(v\cdot \xi_i - \EE_{S_\text{good}} v\cdot \xi_i)^2 \\
&\leq \EE_{S_\text{good}}(v\cdot \xi_i - \EE_{I_\text{good}} v\cdot \xi_i)^2 \\
&\leq 2\EE_{I_\text{good}}(v\cdot \xi_i - \EE_{I_\text{good}} v\cdot \xi_i)^2 \\
&= 2\Var_{I_\text{good}}(v\cdot \xi_i) \\
&\leq 2M
\end{align*}
where the first inequality uses the fact that variance is the smallest second moment obtainable by shifting; the second inequality uses that $|S_\text{good}| \geq (2/3 - 1/6)n \geq |I_\text{good}|/2$ and $S_\text{good} \subseteq I_\text{good}$; and the third inequality is by the lemma's assumption.

On the other hand, since the algorithm doesn't terminate, it holds that \[\Var_S(v\cdot \xi_i) = \frac{1}{|S|} \sum_{i\in S} \tau_i \geq 24M.\]

Defining $\mu_\text{good} = \EE_{S_\text{good}} v\cdot \xi_i$, $\mu_\text{bad} = \EE_{S_\text{bad}} v\cdot \xi_i$, and $\mu = \EE_S v\cdot \xi_i$, it follows that \[\frac{1}{|S_\text{good}|} \sum_{i \in S_\text{good}} \tau_i = \EE_{S_\text{good}}(v\cdot \xi_i - \mu)^2 \leq 2M + (\mu - \mu_\text{good})^2.\]
There are two cases to consider:
\begin{enumerate}
    \item If $(\mu-\mu_\text{good})^2 \leq 8M$. Then \[\frac{1}{|S_\text{good}|} \sum_{i \in S_\text{good}} \tau_i \leq 12M \leq \frac{1}{2} \Var_S(v\cdot \xi_i) = \frac{1}{2|S|} \sum_{i \in S} \tau_i.\]
    Thus, \[\sum_{i \in S_\text{good}} \tau_i \leq \frac{1}{2} \sum_{i \in S} \tau_i \leq \sum_{i \in S_\text{bad}} \tau_i.\]
    \item If $(\mu - \mu_\text{good})^2 \geq 8M$. By the above calculation, \[\frac{1}{|S_\text{good}|}\sum_{i \in S_\text{good}} \tau_i \leq 1.5(\mu-\mu_\text{good})^2.\]
    On the other hand, \[\frac{1}{|S_\text{bad}|}\sum_{i\in S_\text{bad}}\tau_i = \EE_{S_\text{bad}}(v\cdot \xi_i - \mu)^2 \geq (\mu - \mu_\text{bad})^2.\]
    But $|\mu-\mu_\text{good}|\cdot |S_\text{good}| = |\mu - \mu_\text{bad}| \cdot |S_\text{bad}|$. As a result, \[\sum_{i \in S_\text{good}} \tau_i \leq 1.5 |S_\text{good}| \cdot (\mu - \mu_\text{good})^2 = 1.5 \frac{|S_\text{bad}|^2}{|S_\text{good}|} (\mu-\mu_\text{bad})^2 \leq 1.5\frac{|S_\text{bad}|}{|S_\text{good}|} \sum_{i \in S_\text{bad}} \tau_i.\]
    But $1.5|S_\text{bad}|/|S_\text{good}| \leq 1.5(n/6)/(2n/3 - n/6) \leq 1$.
\end{enumerate}
In either case, the desired claim holds.
\end{proof}

\section{Omitted proofs from Section~5}

\subsection{Proof of Lemma~\ref{lemma:termination-bound}}\label{lemma:termination-bound-appendix}

\begin{proof}
By the termination conditions of \textsc{Gmm-Sever}, no samples are filtered out in the last iteration. Thus, by Lemma~\ref{lemma:filter-correctness} and the bounds $|S|,|I| \geq (1-10\epsilon n)$, since no samples are filtered out on Step 3, it holds that
\begin{align*}
\norm{\EE_S \Gradient g(\hat{w})^T u - \EE_{I_\text{good}} \Gradient g(\hat{w})^T u}_2
&\leq 3\sqrt{48}\sqrt{(L^2\norm{u}_2^2 + \norm{\Cov_{I_\text{good}}(\Gradient g(\hat{w})^T u)}_\text{op}) \cdot 10\epsilon} \\
&\leq 36\sqrt{10}L\norm{u}_2\sqrt{\epsilon}
\end{align*}
where the last inequality uses the guarantee of Lemma~\ref{lemma:assumption-corollaries} that $\EE_{I_\text{good}}(u^T\Gradient g(\hat{w})v)^2 \leq 2L^2$ for unit vectors $u,v$.

In the second filter operation, since no samples are filtered out, Lemma~\ref{lemma:filter-correctness} implies that 
\begin{align*}
\norm{\EE_S g(\hat{w}) - \EE_{I_\text{good}} g(\hat{w})}_2
&\leq 3\sqrt{48}\sqrt{(\sigma^2L + 4L^2R^2 + \norm{\Cov_{I_\text{good}}(g(\hat{w}))}_\text{op}) \cdot 10\epsilon} \\
&\leq 36\sqrt{10} \sigma\sqrt{L\epsilon} + 120\sqrt{6} LR\sqrt{\epsilon}
\end{align*}
where the last inequality uses that $\Cov_{I_\text{good}}(g_i(\hat{w})) \preceq 2\sigma^2 L + 16L^2 R^2$ by Lemma~\ref{lemma:assumption-corollaries}. Next, since $\norm{\EE_{I_\text{good}} \Gradient g(\hat{w})}_\text{op} \leq \sqrt{2}L$ by Lemma~\ref{lemma:assumption-corollaries}, it follows that \[\norm{\EE_{I_\text{good}} \Gradient g(\hat{w})^T (\EE_S g(\hat{w}) - \EE_{I_\text{good}} g(\hat{w}))}_2 \leq 72\sqrt{5}\sigma L^{3/2}\sqrt{\epsilon} + 240\sqrt{3}L^2 R\sqrt{\epsilon}.\]
Together with the first inequality, we get that
\[\norm{\EE_S\Gradient g(\hat{w})^T \EE_S g(\hat{w}) - \EE_{I_\text{good}} \Gradient g(\hat{w}) \EE_{I_\text{good}} g(\hat{w})}_2 \leq 72\sqrt{5}\sigma L^{3/2}\sqrt{\epsilon} + 240\sqrt{3}L^2 R\sqrt{\epsilon} + 36\sqrt{10}L\norm{u}_2\sqrt{\epsilon}.\]
By assumption, $\norm{\EE_{I_\text{good}} g(\hat{w})}_2 \leq \sigma\sqrt{L\epsilon} + L\norm{\hat{w}-w^*}_2 \leq \sigma\sqrt{L\epsilon} + 2LR$. Therefore $\norm{u}_2 = \norm{\EE_S g(\hat{w})}_2 \leq \sigma\sqrt{L} + 3LR$ assuming that $\max(36\sqrt{10},120\sqrt{6})\sqrt{\epsilon} \leq 1$. Substituting this bound, we get \[\norm{\EE_S\Gradient g(\hat{w})^T \EE_S g(\hat{w}) - \EE_{I_\text{good}} \Gradient g(\hat{w}) \EE_{I_\text{good}} g(\hat{w})}_2 \leq (72\sqrt{5}+36\sqrt{10})\sigma L^{3/2}\sqrt{\epsilon} + (240\sqrt{3}+108\sqrt{3})L^2 R\sqrt{\epsilon}.\]
Now recall that $\hat{w}$ is a $\gamma$-critical point of $\norm{\EE_S g(w)}_2^2$ in the region $B_R(w_0)$. Since $w^* \in B_R(w_0)$, the line segment between $\hat{w}$ and $w^*$ is also contained in $B_R(w_0)$, so by definition of a $\gamma$-critical point, it holds that \[(w^* - \hat{w}) \cdot \EE_S \Gradient g(\hat{w})^T \EE_S g(\hat{w}) \geq -\gamma \norm{\hat{w} - w^*}_2.\]
So by the triangle inequality, and rounding up the above constants to integers,
\[(\hat{w}-w^*)^T \EE_{I_\text{good}} \Gradient g(\hat{w})^T \EE_{I_\text{good}} g(\hat{w}) \leq \gamma \norm{\hat{w}-w^*}_2 + (275\sigma L^{3/2}\sqrt{\epsilon} + 603L^2 R\sqrt{\epsilon}) \norm{\hat{w}-w^*}_2\]
as claimed.
\end{proof}

\subsection{Proof of Lemma~\ref{lemma:criticality}}\label{lemma:criticality-appendix}

\begin{proof}
Expanding $\EE_{I_\text{good}} g(w) - g(w^*)$ as an integral, we have that
\begin{align*}
&(w-w^*)^T \EE_{I_\text{good}} \Gradient g(w)^T \EE_{I_\text{good}} (g(w) - g(w^*)) \\
&= (w-w^*)^T \EE_{I_\text{good}} \Gradient g(w)^T \int_0^1 \EE_{I_\text{good}} \Gradient g(tw + (1-t)w^*)(w-w^*) \, dt \\
&= (w-w^*)^T \EE_{I_\text{good}} \Gradient g(w)^T \int_0^1 \EE \Gradient g(w)(w-w^*) \, dt \\
&\qquad + (w-w^*)^T \EE_{I_\text{good}} \Gradient g(w)^T \int_0^1 (\EE_{I_\text{good}} \Gradient g(tw + (1-t)w^*) - \EE_{I_\text{good}} \Gradient g(w)) (w-w^*) \, dt.
\end{align*}
Now, the first term is precisely $\norm{\EE_{I_\text{good}} \Gradient g(w) (w-w^*)}_2^2$. We bound the absolute value of the second term by Cauchy-Schwarz and the Lipschitzness of the gradient; it is at most \[\norm{\EE_{I_\text{good}} \Gradient g(w) (w-w^*)}_2 \cdot L_g \norm{w-w^*}_2^2.\]
As a result,
\begin{align*}
(w-w^*)^T \EE_{I_\text{good}} \Gradient g(w)^T \EE_{I_\text{good}} (g(w) - g(w^*)) 
&\geq \norm{\EE_{I_\text{good}} \Gradient g(w)(w-w^*)}_2^2 \\
&\qquad- L_g\norm{w-w^*}_2^2 \norm{\EE_{I_\text{good}} \Gradient g(w)(w-w^*)}_2.
\end{align*}
Suppose that $\norm{\EE_{I_\text{good}} \Gradient g(w) (w-w^*)}_2 \leq 2L_g\norm{w-w^*}_2^2$. Then by assumption that $\sigma_\text{min}(\EE_{I_\text{good}} \Gradient g(w)) \geq \lambda$, it follows that $\norm{w-w^*}_2 \leq (2L_g/\lambda)\norm{w-w^*}_2^2$. Thus $\norm{w-w^*}_2 \geq \lambda/(2L_g)$, which contradicts the assumptions that $R_0 < \lambda/(4L_g)$ and $w \in B_R(w_0) \subseteq B_{2R_0}(0)$. We conclude that in fact $\norm{\EE_{I_\text{good}} \Gradient g(w)(w-w^*)}_2 > 2L_g \norm{w-w^*}_2^2$, so that
\[(w-w^*)^T \EE_{I_\text{good}} \Gradient g(w)^T \EE_{I_\text{good}} (g(w)-g(w^*)) \geq \frac{1}{2}\norm{\EE_{I_\text{good}} \Gradient g(w)(w-w^*)}_2^2.\]
However, by Assumption~\ref{assumption:gmm} and Lemma~\ref{lemma:assumption-corollaries}, \[|(w-w^*)^T \EE_{I_\text{good}} \Gradient g(w)^T \EE_{I_\text{good}} g(w^*)| \leq \sqrt{2}\sigma L^{3/2} \norm{w-w^*}_2 \sqrt{\epsilon}.\]
So together with the lemma's assumption, \[(w-w^*)^T \EE_{I_\text{good}} \Gradient g(w)^T \EE_{I_\text{good}} (g(w)-g(w^*)) \leq (\kappa + \sqrt{2} \sigma L^{3/2}\sqrt{\epsilon}) \norm{w-w^*}_2.\]
As a result, \[\norm{\EE_{I_\text{good}} \Gradient g(w)(w-w^*)}_2^2 \leq 2(\kappa+\sqrt{2}\sigma L^{3/2} \sqrt{\epsilon}) \norm{w-w^*}_2,\]
so that by the least singular value bound in Lemma~\ref{lemma:assumption-corollaries}, $\norm{w-w^*}_2 \leq 4(\kappa+\sqrt{2}\sigma L^{3/2}\sqrt{\epsilon})/\lambda^2$ as claimed.
\end{proof}

\subsection{Proof of Theorem~\ref{theorem:sever}}\label{theorem:sever-appendix}

\begin{proof}
For $t\geq 1$ let $S_t$ be the algorithm's sample set at the beginning of the $t$-th iteration, so that $S_1 = [n]$. Define a ``sticky" stochastic process based on $|S_t \xor I_\text{good}|$: \[X_t = \begin{cases} |S_t \xor I_\text{good}| & \text{ if } |S_{t-1}| \geq 2n/3 \\ X_{t-1} & \text{ otherwise}\end{cases}.\]
By soundness of the filtering algorithm (Lemma~\ref{lemma:filter-soundness}), we know that $(X_t)_{t\geq 1}$ is a super-martingale. By Ville's maximal inequality \cite{durrett2019probability} and since $\EE X_1 = \epsilon n$, it holds with probability at least $8/9$ that $\sup_t X_t \leq 9\epsilon n$. In this event, $|S_t| \geq 2n/3$ for all $t$, so $\sup_t |S_t \xor I_\text{good}| \leq 9\epsilon n$, and therefore $\inf_t |S_t| \geq (1-10\epsilon)n$. In particular, $|S| \geq (1-10\epsilon)n$, where $S$ is the terminal sample set. Then by Lemma~\ref{lemma:termination-bound-final}, it follows that \[\norm{\hat{w} - w^*}_2 \leq \frac{4\gamma}{\lambda^2} + 2412(L^2/\lambda^2)R\sqrt{\epsilon} + 1102\sigma (L^{3/2}/\lambda^2)\sqrt{\epsilon}.\]
where $\hat{w}$ is the output of GMM-Sever. Since this bound holds deterministically whenever $|S| \geq (1-10\epsilon)n$, the failure probability can be decreased to $\delta$ by repeating GMM-Sever until either $|S| \geq (1-10\epsilon)n$, or $O(\log 1/\delta)$ repetitions have occurred.

The time complexity bound follows from observing that the \textsc{Filter} algorithm runs in polynomial time, and in each repetition at least one sample is removed from $S$, so the algorithm terminates after at most $n$ repetitions.
\end{proof}

\subsection{Proof of Theorem~\ref{theorem:gmm-convergence}}\label{theorem:gmm-convergence-appendix}

\begin{proof}
Formally, \textsc{Iterated-Gmm-Sever} does the following procedure:
\begin{enumerate}
    \item Initialize $t=1$, $w_1 = 0$, $R_1 = R_0$, $\delta' = c\delta/\log(R\sqrt{L}/(\sigma\sqrt{\epsilon}))$, and $\gamma = \sigma L^{3/2}\sqrt{\epsilon}$
    \item Compute $\hat{w}_t := \textsc{Amplified-Gmm-Sever}(\{g_1,\dots,g_n\}, w_t, R_t, \epsilon, \lambda, L, \sigma, \gamma, \delta')$
    \item Set $R'_t := 4\gamma/\lambda^2 + 2412((L^2/\lambda^2)R_t\sqrt{\epsilon} + \sigma (L^{3/2}/\lambda^2)\sqrt{\epsilon})$
    \item If $R'_t > R_t/2$, then terminate and return $\hat{w}_t$. Otherwise, set $w_{t+1} := \hat{w}_t$, $R_{t+1} := R'_t$, and return to step (2).
\end{enumerate}

First, note that by induction and the termination condition, $R$ is halved in every iteration, so it holds for all $t \geq 1$ that $R_t \leq R_0 / 2^{t-1}$.

\paragraph{Runtime.} The termination condition is deterministic. In particular, the algorithm will terminate once \[\frac{4\gamma}{\lambda^2} + 2412((L^2/\lambda^2) R_t \sqrt{\epsilon} + \sigma (L^{3/2}/\lambda^2)\sqrt{\epsilon}) > R_t/2.\]
This holds if $R_t < 4824\sigma(L^{3/2}/\lambda^2)\sqrt{\epsilon}$. Since $R_t$ halves in every iteration and $\lambda \leq L$, the algorithm will therefore terminate after at most $O(\log(R\sqrt{L}/(\sigma\sqrt{\epsilon})))$ iterations. By the runtime bound on \textsc{Amplified-Gmm-Sever}, it follows that \textsc{Iterated-Gmm-Sever} has time complexity $O(\poly(n,d,p, T_\gamma) \cdot \log(1/\delta) \cdot \log(R\sqrt{L}/(\sigma\sqrt{\epsilon}))$.

\paragraph{Correctness.} Next, we claim by induction that after the $t$-th call to \textsc{Amplified-Gmm-Sever}, it holds with probability at least $1 - \delta' t$ that $\norm{\hat{w}_t - w^*}_2 \leq R'_t$. For $t=1$, this follows from Theorem~\ref{theorem:sever} and the assumption that $w^* \in B_{R_0}(0)$ (which implies that $w^* \in B_{R_1}(w_1) \subseteq B_{2R_0}(0)$).

Now fix any $t > 1$ for which the algorithm has not yet terminated, and condition on $\norm{\hat{w}_{t-1} - w^*}_2 \leq R'_{t-1}$. Then by the triangle inequality, $$\norm{w_t}_2 = \norm{\hat{w}_{t-1}}_2 \leq \norm{w^*}_2 + R'_{t-1} \leq R_0 + R_0/2^{t-1}.$$
As a result, $B_{R_t}(w_t) \subseteq B_{R_0 + 2R_0/2^{t-1}}(0) \subseteq B_{2R_0}(0)$. In this event, by Theorem~\ref{theorem:sever}, it holds with probability at least $1-\delta'$ that $\norm{\hat{w}_t - w^*}_2 \leq R'_t$. By the induction hypothesis, the event we conditioned on occurs with probability at least $1-\delta'(t-1)$, so by a union bound, it holds that $\norm{\hat{w}_t - w^*}_2 \leq R'_t$ with probability at least $1-\delta't$, completing the induction.

Now consider the final iteration $t$. Restating the termination condition, we have \[\frac{4\gamma}{\lambda^2} + 2412((L^2/\lambda^2) R_t \sqrt{\epsilon} + \sigma (L^{3/2}/\lambda^2)\sqrt{\epsilon}) > R_t/2.\]
By assumption that $(L^2/\lambda^2)\sqrt{\epsilon} \leq 1/9648$, it follows that \[R_t \leq \frac{16\gamma}{\lambda^2} + 9648\sigma (L^{3/2}/\lambda^2)\sqrt{\epsilon}.\]
Thus, with probability at least $1-\delta't$, the output $\hat{w}_t$ of \textsc{Iterated-Gmm-Sever} satisfies
\begin{align*}
\norm{\hat{w}_t - w^*}_2 
&\leq R'_t \\
&\leq \frac{4\gamma}{\lambda^2} + \frac{R_t}{4} + 2412\sigma (L^{3/2}/\lambda^2)\sqrt{\epsilon} \\
&\leq \frac{4\gamma}{\lambda^2} + 4824\sigma(L^{3/2}/\lambda^2)\sqrt{\epsilon}.
\end{align*}
By the choice of $\gamma$, this bound is $O(\sigma(L^{3/2}/\lambda^2)\sqrt{\epsilon})$. By the iteration bound and choice of $\delta'$, the overall failure probability is at most $\delta$.
\end{proof}

\section{Proof of Theorem~\ref{theorem:iv-convergence}}\label{appendix:iv-convergence}

We need to prove that the contaminated samples $(X'_i,Y'_i,Z'_i)_{i=1}^n$ satisfy Assumption~\ref{assumption:gmm} with some set $I_\text{good}$ of size $(1-O(\epsilon))n$. To this end, it suffices to prove that with high probability over the original samples $(X_i,Y_i,Z_i)_{i=1}^n$, there is a subset $I$ of these original samples, with $|I| \geq (1-\epsilon)n$, such that for any subset $S \subseteq I$ of size at least $(1-2\epsilon)n$, the conditions of Assumption~\ref{assumption:gmm} are satisfied. In this event, the intersection of $I$ with the uncontaminated samples certifies the assumption.

In the subsequent lemmas, we verify one by one that each condition of Assumption~\ref{assumption:gmm} is satisfied with high probability for all subsets $S$ of size at least $(1-\epsilon)n$ of a set $I$ of size at least $(1-\epsilon/100)n)$; we then take the intersection of the sets $I$ to yield a set $I_\text{good}$ witnessing Assumption~\ref{assumption:gmm}.

\begin{lemma}\label{lemma:linear-least-singular}
Let $\epsilon>0$ be sufficiently small. If $n \geq C(d+p)^3\sqrt{\tau}\log(1/\tau\epsilon)/\epsilon^2$ then with probability at least $0.99$, there is a subset $I \subseteq [n]$ of size $|I| \geq (1-\epsilon/100)n$ such that for every subset $S \subseteq I$ with $|S| \geq (1-\epsilon)n$, it holds that \[\norm{\frac{1}{|S|} \sum_{i\in S} \begin{bmatrix} Z \\ X\end{bmatrix} \begin{bmatrix} Z^T & X^T \end{bmatrix} - \EE \begin{bmatrix} Z \\ X\end{bmatrix} \begin{bmatrix} Z^T & X^T \end{bmatrix}}_\text{op} \leq O(\sqrt{\tau\epsilon}L).\]
As a consequence, if $(L/\lambda)\sqrt{\tau\epsilon}$ is less than a sufficiently small constant, then \[\sigma_\text{min}\left(\frac{1}{|S|} \sum_{i \in S} Z_i X_i^T \right) \geq \lambda/2.\]
\end{lemma}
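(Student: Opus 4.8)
The plan is to establish the operator-norm concentration bound via a uniform (over large subsets) matrix concentration argument, and then derive the least-singular-value statement by a standard perturbation inequality. For the first part, I would consider the random matrices $M_i := \begin{bmatrix} Z_i \\ X_i \end{bmatrix}\begin{bmatrix} Z_i^T & X_i^T \end{bmatrix}$, which are PSD with mean $\Sigma := \EE M_i$ satisfying $\norm{\Sigma}_\text{op} \leq L$ by the boundedness assumption. The key point is that I need concentration that is robust to removing an $\epsilon$-fraction of samples, so a plain matrix Bernstein bound is not enough; instead I would use the hypercontractivity assumption on $[Z; X]$. The standard route (as in the robust statistics literature, e.g. via a one-dimensional argument in each direction followed by a net) is: first show that for a \emph{fixed} unit vector $v$, the scalars $\langle [Z_i;X_i], v\rangle^2$ have mean at most $L$ and, by $(4,2,\tau)$-hypercontractivity, fourth moment at most $\tau L^2$; hence by a truncation/Bernstein argument the empirical average over any $(1-\epsilon)$-fraction deviates from its mean by at most $O(\sqrt{\tau\epsilon}\,L)$, with failure probability exponentially small in $n\epsilon^2/(\text{poly factors})$. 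Then a union bound over a $1/4$-net of the sphere in $\RR^{d+p}$ (of size $9^{d+p}$) upgrades this to a uniform bound over all directions simultaneously, which by the standard net-to-operator-norm argument gives the claimed bound on $\norm{\EE_S M_i - \Sigma}_\text{op}$ for all large $S$ at once. The sample complexity $n \gtrsim (d+p)^3\sqrt{\tau}\log(1/\tau\epsilon)/\epsilon^2$ comes from balancing the net cardinality, the truncation level, and the required deviation.

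For the second (consequence) part, observe that $\EE_S Z_i X_i^T$ is exactly the off-diagonal block of $\EE_S M_i$, and likewise $\EE ZX^T$ is the off-diagonal block of $\Sigma$. Since extracting a submatrix (or block) of a matrix can only decrease the operator norm, the bound on $\norm{\EE_S M_i - \Sigma}_\text{op}$ immediately yields $\norm{\EE_S Z_iX_i^T - \EE ZX^T}_\text{op} \leq O(\sqrt{\tau\epsilon}\,L)$. Now apply Weyl's inequality (or the variational characterization of singular values): $\sigma_\text{min}(\EE_S Z_iX_i^T) \geq \sigma_\text{min}(\EE ZX^T) - \norm{\EE_S Z_iX_i^T - \EE ZX^T}_\text{op} \geq \lambda - O(\sqrt{\tau\epsilon}\,L)$, which is at least $\lambda/2$ once $(L/\lambda)\sqrt{\tau\epsilon}$ is smaller than a suitable absolute constant, by the strong-instruments assumption $\sigma_\text{min}(\EE ZX^T) \geq \lambda$.

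I expect the main obstacle to be the uniform-over-subsets concentration in the operator-norm sense: handling the fact that an adversary could choose which $\epsilon n$ samples to drop means I cannot simply apply concentration to a single empirical average. The cleanest fix is to note that $\sup_{|S|\geq(1-\epsilon)n}\abs{\EE_S \langle[Z_i;X_i],v\rangle^2 - \EE\langle[Z;X],v\rangle^2}$ is controlled by the sum of the $\epsilon n$ largest values of $\langle[Z_i;X_i],v\rangle^2$ together with the deviation of the full empirical average; the former is bounded using the fourth-moment (hypercontractivity) control via a Markov/truncation argument ($\epsilon n$ largest of $n$ samples with bounded fourth moment contribute at most $O(\sqrt{\tau\epsilon}\,L)\cdot n$ on average), and the latter by Bernstein after truncation. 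A secondary technical point is that the per-direction failure probability must beat the $9^{d+p}$ net size; this forces $n\epsilon^2 \gtrsim (d+p)\cdot(\text{poly}(\tau))$, and being slightly wasteful in the truncation analysis is what produces the $(d+p)^3\sqrt{\tau}$ factor rather than a tighter bound. Since the theorem only claims $O(\cdot)$ bounds and does not optimize constants or polynomial dependence, I would not attempt to sharpen these.
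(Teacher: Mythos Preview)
Your proposal is correct and follows essentially the same approach as the paper. The paper's proof simply invokes its Corollary~\ref{corollary:empirical-covariance} (robust empirical covariance concentration for hypercontractive vectors) for the first statement, and that corollary is proved exactly along the lines you sketch: truncate to control norms, apply Bernstein/Hoeffding in each fixed direction using the fourth-moment bound from hypercontractivity, pass to all directions via a net, and handle the adversarial subset by bounding the contribution of the $\epsilon n$ extremal samples through the fourth-moment control. For the consequence, the paper does precisely your block-extraction plus singular-value perturbation argument (phrased as $\norm{\hat{M}u - \EE ZX^T u}_2 \leq O(L\sqrt{\tau\epsilon})\norm{u}_2$ followed by the triangle inequality). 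The only cosmetic difference is that the paper first whitens by $\Sigma^{-1/2}$ to obtain a multiplicative bound $(1-O(\sqrt{\tau\epsilon}))\Sigma \preceq \EE_S M_i \preceq (1+O(\epsilon))\Sigma$ and then uses $\norm{\Sigma}_\text{op}\leq L$, whereas you argue the additive $O(\sqrt{\tau\epsilon}L)$ bound directly; both routes are equivalent here.
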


\begin{proof}
The first statement follows from Corollary~\ref{corollary:empirical-covariance}, $\tau$-hypercontractivity of $[Z;X]$, and the covariance upper bound on $[Z;X]$. Let $\hat{M} = \frac{1}{|S|}\sum_{i\in S} Z_iX_i^T$. It follows from the first statement, that for any $u$, \[\norm{\hat{M}u - \EE ZX^T u}_2 \leq O(L\sqrt{\tau\epsilon}) \norm{u}_2 \leq \frac{\lambda}{2}\norm{u}_2.\]
By assumption that $\sigma_\text{min}(\EE ZX^T) \geq \lambda$, it follows that $\norm{\hat{M}u}_2 \geq (\lambda/2)\norm{u}_2$. The second statement follows.
\end{proof}

\begin{lemma}\label{lemma:linear-gradient-upper}
Let $\epsilon>0$ and suppose that $n \geq C(p+d)^5\log((p+d)/\epsilon)/\epsilon^2$ for an appropriate constant $C$. Then with probability $0.98$, there is a set $I \subseteq [n]$ with $|I| \leq (1-\epsilon)n$ such that $\EE_I (u^T Z)^2(v^T X)^2 \preceq C\tau L^2 I$ for all unit vectors $u \in \RR^p$ and $v \in \RR^d$.
\end{lemma}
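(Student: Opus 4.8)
The plan is to produce, with probability at least $0.98$, a set $I \subseteq [n]$ with $|I| \geq (1-\epsilon/100)n$ such that $\EE_I(u^TZ_i)^2(v^TX_i)^2 \leq C\tau L^2$ for all unit $u \in \RR^p$, $v \in \RR^d$; since these quantities are nonnegative, any comparably-large subset $S \subseteq I$ inflates the average by only a constant factor, so this also certifies the bound on every $S \subseteq I$ of size $\geq (1-2\epsilon)n$ as needed in the master reduction. The first ingredient is the population bound: for unit $u,v$, Cauchy--Schwarz gives $\EE(u^TZ)^2(v^TX)^2 \leq \sqrt{\EE(u^TZ)^4 \cdot \EE(v^TX)^4}$, and by $(4,2,\tau)$-hypercontractivity of $[Z;X]$ together with $\norm{\Cov([Z;X])}_\text{op} \leq L$, each fourth moment is at most $\tau(\EE(u^TZ)^2)^2 \leq \tau L^2$ (and likewise for $X$), so $\EE(u^TZ)^2(v^TX)^2 \leq \tau L^2$.

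Next I would control heavy tails by truncation. Using the bounded-eighth-moment hypothesis, remove from $[n]$ every sample $i$ for which some coordinate of $X_i$ or $Z_i$ exceeds a threshold of order $(\poly(p,d)\,\tau^2 L^4/\epsilon)^{1/8}$; a Chernoff bound (using that $n$ is large) shows at most $\epsilon n/100$ samples are removed with probability $\geq 0.99$, yielding $I$. Writing $\hat Z_i,\hat X_i$ for the correspondingly truncated i.i.d.\ vectors and $\hat W_i := (u^T\hat Z_i)^2(v^T\hat X_i)^2$, we have $\hat W_i \leq R := O((pd)^{5/4}\tau L^2/\sqrt\epsilon)$ almost surely, and $\hat W_i = (u^TZ_i)^2(v^TX_i)^2$ for $i \in I$; hence by nonnegativity and $I \subseteq [n]$, $\EE_I(u^TZ_i)^2(v^TX_i)^2 \leq (1-\epsilon/100)^{-1}\,\EE_{[n]}\hat W_i$, so it suffices to bound the clean i.i.d.\ average $\EE_{[n]}\hat W_i$ uniformly over $u,v$. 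For the Bernstein variance proxy I would bound $\EE\hat W_i^2 \leq \sqrt{\EE(u^TZ)^8 \cdot \EE(v^TX)^8} \leq O((pd)^2\tau^2 L^4)$, using the coordinate eighth-moment bounds and the power-mean inequality $(\sum_j a_j)^4 \leq p^3\sum_j a_j^4$ to control $\EE\norm{Z}_2^8$ and $\EE\norm{X}_2^8$.

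The last step is a net-plus-Bernstein argument exploiting the Kronecker structure. Note $\EE_{[n]}\hat W_i = (u\otimes v)^T \hat A_n (u\otimes v)$ with $\hat A_n := \EE_{[n]}(\hat Z_i\hat Z_i^T)\otimes(\hat X_i\hat X_i^T) \succeq 0$, so $(u,v)\mapsto\EE_{[n]}\hat W_i$ is a nonnegative bi-quadratic form; replacing first $u$ and then $v$ by the nearest point of a $\tfrac12$-net of the sphere (and dominating the error form by the sphere-supremum) shows its supremum over unit $u,v$ is at most $16$ times its maximum over fixed $\tfrac12$-nets $\mathcal N_p \subseteq S^{p-1}$, $\mathcal N_d \subseteq S^{d-1}$ of sizes $\leq 5^p,5^d$. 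For each fixed pair in the nets, Bernstein's inequality with variance proxy $O((pd)^2\tau^2 L^4)$ and almost-sure bound $R$ gives $\EE_{[n]}\hat W_i \leq 2\tau L^2$ except with probability $5^{-(p+d)}/100$, provided $n \gtrsim (p+d)\big((pd)^2 + (pd)^{5/4}/\sqrt\epsilon\big)$, which is implied by $n \geq C(p+d)^5\log((p+d)/\epsilon)/\epsilon^2$. A union bound over the nets gives $\sup_{u,v}\EE_{[n]}\hat W_i \leq 32\tau L^2$ with probability $\geq 0.99$, and combining with the truncation step yields the claim with $C=O(1)$ and total failure probability $\leq 0.02$.

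The main obstacle, and the reason the proof is not a one-line matrix concentration, is that one \emph{cannot} simply bound $\norm{\hat A_n}_\text{op}$: the population matrix $\EE[(ZZ^T)\otimes(XX^T)]$ can have operator norm of order $dL^2$ (e.g.\ when $Z = X$), so controlling the quadratic form over all unit vectors of $\RR^{pd}$ is far too lossy, and one is forced to use \emph{separate} nets for $u$ and $v$ and work with the rank-one (Kronecker) directions only. A secondary subtlety is that the truncation enabling Bernstein must use a $(u,v)$-independent criterion (coordinate magnitudes), not the values $\hat W_i$ themselves; tracking how the resulting almost-sure bound $R$ and the variance proxy combine is what produces the stated $(p+d)^5/\epsilon^2$ sample complexity.
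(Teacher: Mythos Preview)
Your argument is correct in its overall structure (coordinate-wise truncation to enable Bernstein, then a net over $(u,v)$), but it takes a more involved route than the paper. The paper instead decouples via Cauchy--Schwarz \emph{on the empirical side}: it applies the single-vector fourth-moment concentration lemma (Lemma~\ref{lemma:fourth-moment-concentration}) separately to the rescaled $Z_i$'s and $X_i$'s, obtaining sets $I_1,I_2$ on which $\EE_{I_1}\langle Z_i,u\rangle^4 \leq C\tau L^2$ uniformly in $u$ and $\EE_{I_2}\langle X_i,v\rangle^4 \leq C\tau L^2$ uniformly in $v$; on $I=I_1\cap I_2$ one then simply writes
\[
\EE_I(u^TZ_i)^2(v^TX_i)^2 \;\leq\; \sqrt{\EE_I\langle Z_i,u\rangle^4\,\EE_I\langle X_i,v\rangle^4}\;\leq\; C\tau L^2.
\]
This buys modularity: the $(u,v)$-interaction is dispatched by a one-line inequality rather than a bi-quadratic net argument, and the concentration work is entirely encapsulated in a reusable single-vector lemma. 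Your direct approach also works and has the virtue of targeting exactly the quantity of interest, but the Kronecker/bi-quadratic net reduction is extra machinery you don't need here---and as written the $\tfrac12$-net with factor $16$ does not quite close, since the standard PSD quadratic-form doubling $\sup_u u^TBu \leq (1-2\delta)^{-1}\max_{\mathcal N}$ requires $2\delta<1$; taking $\delta=1/4$ fixes this at the cost of constants only.
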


\begin{proof}
By hypercontractivity, we have \[\EE\langle X,u\rangle^4 \leq \tau \left(\EE\langle X,u\rangle^2\right)^2 \leq \tau L^2 \norm{u}_2^4\] for any vector $u \in \RR^d$, and similarly for $Z$. Moreover, we have assumed that the coordinates of $X$ and $Z$ have $8$th moments bounded by $O(\tau^2 L^4)$. Thus, we can apply Lemma~\ref{lemma:fourth-moment-concentration} to $X/\sqrt[4]{\tau L^2}$ and $Z/\sqrt[4]{\tau L^2}$ to get sets $I_1, I_2 \subseteq [n]$ each of size at least $(1-\epsilon/2)n$, that with probability $0.98$ satisfy \[\frac{1}{|I_1|} \sum_{i \in I_1} \langle X_i, u\rangle^4 \leq C\tau L^2\] and \[\frac{1}{|I_2|} \sum_{i \in I_2} \langle Z_i, v\rangle^4 \leq C\tau L^2\] for all unit vectors $u \in \RR^d$ and $v \in \RR^p$. Let $I = |I_1 \cap I_2|$. Then $|I| \geq (1-\epsilon)n$, and the above bounds hold over $I$ as well up to a constant factor loss. Thus, 
\[\EE_I (u^T Z)^2 (X^T v)^2 \leq \sqrt{\EE_I\langle Z, u\rangle^4 \EE_I \langle X, v\rangle^4} \leq C\tau L^2.\]
The lemma follows.
\end{proof}

\begin{lemma}\label{lemma:linear-true-variance}
Let $\epsilon>0$ and suppose that $n \geq C(p+d)^3/\epsilon^2$. Then with probability $0.99$, there is a set $I \subseteq [n]$ with $|I| \geq (1-\epsilon)n$ such that $\EE_I (v^T Z\xi)^2 \leq C\sigma^2 L$ for every unit vector $v \in \RR^p$.
\end{lemma}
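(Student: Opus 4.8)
The plan is to translate the statement into a single operator-norm bound: writing $W_i := Z_i\xi_i \in \RR^p$, we have $\EE_I(v^T Z\xi)^2 = v^T\big(\tfrac{1}{|I|}\sum_{i\in I} W_iW_i^T\big)v$, so it suffices to produce a set $I$ with $|I|\geq(1-\epsilon)n$ on which the empirical second-moment matrix of the $W_i$ has operator norm $O(\sigma^2 L)$. The population fact driving everything is that $\EE[\xi^2\mid Z]\leq\sigma^2$ together with $\EE[ZZ^T]=\Cov(Z)\preceq L\,I$ gives $\EE[WW^T]=\EE[\xi^2 ZZ^T]\preceq\sigma^2\EE[ZZ^T]\preceq\sigma^2 L\,I$, and in particular $\EE\norm{W}_2^2=\tr(\EE[WW^T])\leq p\sigma^2 L$.

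The approach is a single truncation step followed by matrix concentration. I would fix a radius $\rho^2=\Theta(p\sigma^2 L/\epsilon)$, set $\tilde W_i:=W_i\,\mathbbm{1}[\norm{W_i}_2\leq\rho]$, and take $I:=\{i:\norm{W_i}_2\leq\rho\}$. Since $\EE\,\#\{i:\norm{W_i}_2>\rho\}=n\Pr[\norm W_2>\rho]\leq n\EE\norm W_2^2/\rho^2\leq np\sigma^2 L/\rho^2$, Markov's inequality gives $|I|\geq(1-\epsilon)n$ with probability at least $0.995$ once the constant in $\rho^2$ is taken large enough. On this event $W_i=\tilde W_i$ for every $i\in I$, so, using that all the matrices are PSD, $\tfrac{1}{|I|}\sum_{i\in I}W_iW_i^T=\tfrac{n}{|I|}\cdot\tfrac1n\sum_{i=1}^n\tilde W_i\tilde W_i^T\preceq\tfrac{1}{1-\epsilon}\cdot\tfrac1n\sum_{i=1}^n\tilde W_i\tilde W_i^T$.

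It then remains to bound $\tfrac1n\sum_{i=1}^n\tilde W_i\tilde W_i^T$. Now the $\tilde W_i$ are bounded ($\norm{\tilde W_i}_2\leq\rho$) with $\EE[\tilde W\tilde W^T]\preceq\EE[WW^T]\preceq\sigma^2 L\,I$, so the matrix Bernstein inequality applies: the per-term operator norm is $\rho^2$, the matrix variance proxy satisfies $\norm{\sum_i\EE[\norm{\tilde W}_2^2\,\tilde W\tilde W^T]}_{\mathrm{op}}\leq n\rho^2\sigma^2 L$, and hence $\norm{\tfrac1n\sum_i\tilde W_i\tilde W_i^T-\EE[\tilde W\tilde W^T]}_{\mathrm{op}}=O\big(\sqrt{\rho^2\sigma^2 L\log p/n}+\rho^2\log p/n\big)$ with probability at least $0.995$. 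With $\rho^2=\Theta(p\sigma^2 L/\epsilon)$ this error is $O(\sigma^2 L)$ provided $n\geq Cp\log p/\epsilon$, which is comfortably implied by $n\geq C(p+d)^3/\epsilon^2$. Chaining with the previous display and taking a union bound over the two failure events yields $\EE_I(v^T Z\xi)^2\leq C\sigma^2 L$ simultaneously for all unit $v$, with probability at least $0.99$. (The bound then passes to every $S\subseteq I$ with $|S|\geq(1-\epsilon)n$, since $\tfrac{1}{|S|}\sum_{i\in S}W_iW_i^T\preceq\tfrac{|I|}{|S|}\cdot\tfrac{1}{|I|}\sum_{i\in I}W_iW_i^T$ and $|I|/|S|=O(1)$, which is what the surrounding argument actually needs.)

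The point requiring care — and the reason this does not follow from a hypercontractive covariance bound in the style used for the $ZX^T$ and $(u^TZ)^2(v^TX)^2$ terms — is that $\xi$ is assumed to have only a bounded \emph{second} conditional moment, so $v^T Z\xi$ may have unbounded fourth moment and the raw empirical second-moment matrix need not concentrate; the truncation step is precisely what converts the bounded-second-moment hypothesis into the bounded-support condition that matrix Bernstein requires. The quantitative balancing act is to choose $\rho$ large enough that only $O(\epsilon n)$ samples are discarded yet small enough that $\rho^2\log p/n\lesssim\sigma^2 L$, and $\rho^2\asymp p\sigma^2 L/\epsilon$ threads this needle given the stated sample size; note that no hypercontractivity of $Z$ is needed for this particular condition, consistent with $\tau$ being absent from the sample-size hypothesis.
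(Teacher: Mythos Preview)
Your argument is correct and follows essentially the same route as the paper: bound the population second-moment matrix of $W=Z\xi$ by $\sigma^2 L\,I$ using $\EE[\xi^2\mid Z]\leq\sigma^2$, then truncate by $\norm{W}_2$ and apply a concentration inequality to the truncated empirical second-moment matrix. The paper compresses this into a one-line citation of Corollary~\ref{corollary:empirical-covariance} (whose upper-bound half is Lemma~\ref{lemma:covariance-upper-concentration}, proved via truncation plus scalar Hoeffding and a net), whereas you carry out the truncation explicitly and use matrix Bernstein instead; this is a cosmetic substitution and in fact yields a sharper sample requirement. Your remark that hypercontractivity is unavailable here (and hence that only the upper-bound half of the covariance-concentration machinery is being used) is well taken---the paper's citation of the full corollary, which nominally assumes hypercontractivity, is slightly loose on exactly this point.
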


\begin{proof}
Since $\EE[\xi^2|Z] \leq \sigma^2$, observe that $\EE (v^T Z\xi)^2 \leq \sigma^2 \EE (v^T Z)^2 \leq \sigma^2 L$ for every unit vector $v \in \RR^p$. The claim follows from Corollary~\ref{corollary:empirical-covariance}.
\end{proof}

\begin{lemma}\label{lemma:linear-true-norm}
Let $\epsilon > 0$, and suppose that $n \geq C(p^{3/2}/\epsilon)\log(p)$. With probability $0.99$, there is a subset $I \subseteq [n]$ with $|I| \geq (1-\epsilon/100)n$ such that for every $S \subseteq I$ with $|S| \geq (1-\epsilon)n$, it holds that \[\norm{\EE_S Z\xi}_2 \leq O(\sigma \sqrt{L\epsilon}).\]
\end{lemma}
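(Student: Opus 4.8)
The plan is to recognize that this lemma verifies the ``well-specification'' condition of Assumption~\ref{assumption:gmm} for the IV-linear instantiation, where $g_i(w^*) = Z_i\xi_i$ with $\xi_i = Y_i - X_i^T w^*$. The first step is to compute the population moments of $Z\xi$: by the valid-instruments assumption, $\EE[Z\xi] = \EE[Z\,\EE[\xi\mid Z]] = 0$, and for any unit vector $v$, by bounded-variance noise and boundedness, $\EE[(v\cdot Z\xi)^2] = \EE[(v\cdot Z)^2\,\EE[\xi^2\mid Z]] \leq \sigma^2\,\EE[(v\cdot Z)^2] \leq \sigma^2 L$. Thus $\{Z_i\xi_i\}_{i=1}^n$ are i.i.d.\ draws from a mean-zero distribution all of whose directional second moments are at most $\sigma^2 L$.

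Next I would invoke the same concentration tool used in the proof of Lemma~\ref{lemma:linear-true-variance}, namely Corollary~\ref{corollary:empirical-covariance}, applied to the random vector $Z\xi$ with second-moment bound $\sigma^2 L$. Because $n \geq C(p^{3/2}/\epsilon)\log p$ (in particular $n \gg p/\epsilon$), this yields, with probability at least $0.99$, a subset $I \subseteq [n]$ with $|I| \geq (1-\epsilon/100)n$ such that $\EE_I(v\cdot Z\xi)^2 \leq O(\sigma^2 L)$ for every unit vector $v$, and moreover $\norm{\EE_I Z\xi}_2 \leq O(\sigma\sqrt{L\epsilon})$. Passing to the subset $I$ is necessary here precisely because we have no control on moments of $Z\xi$ beyond the covariance, so the empirical second moment over all of $[n]$ need not be bounded; this truncation/stability step is the substantive content of the corollary.

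Finally I would bridge from $I$ to an arbitrary large subset $S \subseteq I$ by the standard ``bounded empirical covariance implies resilience'' argument. Fix $S$ with $|S| \geq (1-\epsilon)n$ and set $\eta := |I\setminus S|/|I|$, which is $O(\epsilon)$ since $|I| \geq (1-\epsilon/100)n$. Writing $\EE_I Z\xi = (1-\eta)\,\EE_S Z\xi + \eta\,\EE_{I\setminus S} Z\xi$ gives $\EE_S Z\xi - \EE_I Z\xi = -\tfrac{\eta}{1-\eta}\big(\EE_{I\setminus S} Z\xi - \EE_I Z\xi\big)$. Letting $v$ be the unit vector along $\EE_{I\setminus S} Z\xi - \EE_I Z\xi$ and using that the block $I\setminus S$ contributes at least an $\eta$-fraction of the empirical variance of $v\cdot Z\xi$ over $I$, we get $\big(v\cdot(\EE_{I\setminus S} Z\xi - \EE_I Z\xi)\big)^2 \leq \Var_I(v\cdot Z\xi)/\eta \leq O(\sigma^2 L)/\eta$, so $\norm{\EE_S Z\xi - \EE_I Z\xi}_2 \leq \tfrac{\eta}{1-\eta}\cdot O(\sigma\sqrt{L/\eta}) = O(\sigma\sqrt{L\epsilon})$. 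Combining with the bound $\norm{\EE_I Z\xi}_2 \leq O(\sigma\sqrt{L\epsilon})$ and the triangle inequality gives $\norm{\EE_S Z\xi}_2 \leq O(\sigma\sqrt{L\epsilon})$ for every such $S$, as claimed.

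I expect the main obstacle to be the second step: obtaining a subset $I$ on which the empirical second moment is bounded \emph{and} the empirical mean is $O(\sqrt\epsilon)$-close (not merely $O(1)$-close) to the population mean $0$, using only a covariance bound. A naive norm-truncation argument loses a $\sqrt{p}$ factor in the mean estimate, so the required statement genuinely leans on the more careful stability argument packaged into Corollary~\ref{corollary:empirical-covariance} (and this is what forces the $(p^{3/2}/\epsilon)\log p$ sample-size requirement). Everything else---the moment computation and the resilience bound---is elementary; indeed, if Corollary~\ref{corollary:empirical-covariance} is available in its ``for all large subsets $S$'' form, the third paragraph can be skipped and the conclusion follows essentially immediately from the first two steps.
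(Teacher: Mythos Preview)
Your plan has the right skeleton---compute the population moments of $Z\xi$, pass to a good subset $I$, then use resilience to handle all large $S\subseteq I$---and the first and third paragraphs are fine. The gap is in the second step: you appeal to Corollary~\ref{corollary:empirical-covariance}, but that corollary does not deliver what you claim. It only controls the empirical \emph{second-moment matrix} (sandwiching $\EE_S XX^T$ between multiples of $\Sigma$); it says nothing about $\norm{\EE_I X}_2$. Moreover, its hypotheses are not met here: it requires $(4,2,\tau)$-hypercontractivity of the vector in question and a positive-definite second moment, neither of which you verify for $Z\xi$ (only a second-moment upper bound is available), and its sample requirement is $Cd^3\sqrt{\tau}\log(1/\tau\epsilon)/\epsilon^2$, not the $p^{3/2}\log(p)/\epsilon$ that the lemma statement allows. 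So as written, the crucial ``$\norm{\EE_I Z\xi}_2\le O(\sigma\sqrt{L\epsilon})$'' claim is unsupported, and you yourself flag this as the obstacle.

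The paper closes exactly this gap with a different technical lemma, Lemma~\ref{lemma:mean-zero-concentration}, tailored to mean-zero vectors with only a covariance upper bound. That lemma truncates at norm $\Theta(\sigma\sqrt{d/\epsilon})$, bounds the truncation bias by Cauchy--Schwarz as $\sigma\sqrt{\epsilon}$ (this is what avoids the $\sqrt{p}$ loss you worried about), controls the empirical covariance on the truncated set via matrix Chernoff, and finally applies the resilience bound (Lemma~\ref{lemma:key}) to pass to all large $S$---i.e., precisely the argument you sketch in your third paragraph, just packaged differently. With that lemma in hand, the paper's proof is one line: $\EE Z\xi=0$, $\EE (Z\xi)(Z\xi)^T\preceq \sigma^2 L I$, apply Lemma~\ref{lemma:mean-zero-concentration} with $d=p$. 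Replace your invocation of Corollary~\ref{corollary:empirical-covariance} with Lemma~\ref{lemma:mean-zero-concentration} and your proof goes through; the third paragraph then becomes redundant, since resilience is already inside that lemma.
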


\begin{proof}
Observe that $\EE Z\xi = 0$ and $\EE ZZ^T \xi^2 \preceq \sigma^2 L I$ by assumption. The claim follows from Lemma~\ref{lemma:mean-zero-concentration}.
\end{proof}

\begin{corollary}
Let $\epsilon>0$. Suppose that $\epsilon < c\min(\lambda^2/(\tau L^2), \lambda^4/L^4)$ for a sufficiently small constant $c>0$, and suppose that $n \geq C(d+p)^5 \tau \log((p+d)/\tau\epsilon)/\epsilon^2$ for a sufficiently large constant $C$. Then with probability at least $0.95$, there is a set $I \subseteq [n]$ with $|I| \geq (1-\epsilon/2)n$ such that for every subset $S \subseteq I$ with $|S| \geq (1-\epsilon)n$, the following hold:
\begin{itemize}
    \item $\sigma_\text{min}(\EE_S \Gradient g(w^*)) \geq \Omega(\lambda)$
    \item $\EE_S (u^T \Gradient g(w) v)^2 \leq O(\tau L^2)$ for all unit vectors $u,v$ and all $w$
    \item $\EE_S (v^T g(w^*))^2 \leq O(\sigma^2 L)$ for all unit vectors $v$
    \item $\norm{\EE_S g(w^*)}_2 \leq O(\sigma \sqrt{L\epsilon})$
    \item $\Gradient g(w)$ is constant in $w$.
\end{itemize}
\end{corollary}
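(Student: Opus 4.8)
The plan is to read the five bullets off directly from Lemmas~\ref{lemma:linear-least-singular}--\ref{lemma:linear-true-norm}, after first recording the explicit form of the moment map. Since $g(w) = Z(Y - X^T w)$, its Jacobian $\Gradient g(w) = -Z X^T$ does not depend on $w$, which establishes the fifth bullet outright and means the second bullet need only be checked at one (arbitrary) $w$. Writing $\xi = Y - X^T w^*$, we also have $g(w^*) = Z\xi$, so $(v\cdot g(w^*))^2 = (v^T Z\xi)^2$, $\EE_S g(w^*) = \EE_S Z\xi$, and $(u^T\Gradient g(w) v)^2 = (u^T Z)^2 (v^T X)^2$. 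Thus all five conditions become statements about $ZX^T$ and $Z\xi$ that the four lemmas already control.

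Next I would invoke those four lemmas, rescaling $\epsilon$ by a fixed small constant where necessary so that the set $I$ each produces has size at least $(1-\epsilon/100)n$; the hypothesis $n \geq C(d+p)^5\tau\log((p+d)/\tau\epsilon)/\epsilon^2$ dominates every individual sample-size requirement. Lemma~\ref{lemma:linear-least-singular}, using $\epsilon < c\lambda^2/(\tau L^2)$ to force $(L/\lambda)\sqrt{\tau\epsilon}$ below the constant it needs, produces $I_1$ with $\sigma_\text{min}(\EE_S Z X^T) \geq \lambda/2$ for every large $S \subseteq I_1$ (first bullet), and Lemma~\ref{lemma:linear-true-norm} produces $I_4$ with $\norm{\EE_S Z\xi}_2 \leq O(\sigma\sqrt{L\epsilon})$ for every large $S \subseteq I_4$ (fourth bullet). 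Lemma~\ref{lemma:linear-gradient-upper} gives $I_2$ with $\EE_{I_2}(u^T Z)^2(v^T X)^2 \leq O(\tau L^2)$ and Lemma~\ref{lemma:linear-true-variance} gives $I_3$ with $\EE_{I_3}(v^T Z\xi)^2 \leq O(\sigma^2 L)$; to pass each of these from $I_j$ to an arbitrary $S \subseteq I_j$ with $|S| \geq (1-\epsilon)n$ I would use only nonnegativity of the summands, $\EE_S\phi \leq \tfrac{|I_j|}{|S|}\EE_{I_j}\phi \leq \tfrac{1}{1-\epsilon}\EE_{I_j}\phi \leq 2\,\EE_{I_j}\phi$, which yields the second and third bullets. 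Finally, set $I = I_1 \cap I_2 \cap I_3 \cap I_4$: a union bound over the four failure probabilities ($\leq 0.02 + 0.01 + 0.01 + 0.01$) gives success probability at least $0.95$, and $|I| \geq (1 - 4\cdot\tfrac{\epsilon}{100})n \geq (1-\epsilon/2)n$, while every $S \subseteq I$ with $|S| \geq (1-\epsilon)n$ is a large subset of each $I_j$ and hence satisfies all five bullets.

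The genuine content lives inside the four lemmas, which are taken as given; the only conceptual point is an asymmetry in how the conditions behave under passage to a subset. The two variance-type bounds (bullets 2 and 3) are one-sided inequalities on averages of nonnegative quantities, so they survive restriction to $S \subseteq I$ up to a harmless factor of $2$, which is why Lemmas~\ref{lemma:linear-gradient-upper} and~\ref{lemma:linear-true-variance} need only control $\EE_I$ itself. By contrast, the least-singular-value bound and the mean bound (bullets 1 and 4) are not monotone under deletions, so the Corollary genuinely relies on the stronger ``for every $S$'' conclusions of Lemmas~\ref{lemma:linear-least-singular} and~\ref{lemma:linear-true-norm}. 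One last bookkeeping remark: the hypothesis $\epsilon < c\lambda^4/L^4$ plays no role in this Corollary---only $\epsilon < c\lambda^2/(\tau L^2)$ is used, through Lemma~\ref{lemma:linear-least-singular}---and is carried along because it is needed later in Theorem~\ref{theorem:iv-convergence} to meet the $(L^2/\lambda^2)\sqrt{\epsilon}$ smallness condition required by Theorem~\ref{theorem:gmm-convergence}.
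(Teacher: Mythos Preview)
Your proof is correct and follows essentially the same route as the paper: intersect the four sets $I_1,\dots,I_4$ from Lemmas~\ref{lemma:linear-least-singular}--\ref{lemma:linear-true-norm} (with appropriately rescaled $\epsilon$), use the ``for every $S$'' conclusions of Lemmas~\ref{lemma:linear-least-singular} and~\ref{lemma:linear-true-norm} directly, and for the two nonnegative-summand bounds pass from $\EE_{I_j}$ to $\EE_S$ via $\EE_S\phi \le \tfrac{|I_j|}{|S|}\EE_{I_j}\phi \le 2\EE_{I_j}\phi$. Your additional remarks---on the asymmetry between the monotone and non-monotone bullets, and on the fact that only the hypothesis $\epsilon < c\lambda^2/(\tau L^2)$ is actually used here---are correct and go slightly beyond what the paper spells out.
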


\begin{proof}
Let $I_1,I_2,I_3,I_4 \subseteq [n]$ be the sets guaranteed by Lemma~\ref{lemma:linear-least-singular} (with parameter $\epsilon$), Lemma~\ref{lemma:linear-gradient-upper} (with parameter $\epsilon/100$), Lemma~\ref{lemma:linear-true-variance} (with parameter $\epsilon/100$), and~\ref{lemma:linear-true-norm} (with parameter $\epsilon$), which satisfy the claims of the respective lemmas with probability at least $0.95$. Let $I = I_1 \cap I_2 \cap I_3 \cap I_4$. We have that $|I_1|,|I_2|,|I_3|,|I_4| \geq (1-\epsilon/100)n$, so $I$ is a subset of each of $I_1,I_2,I_3,I_4$ of size at least $(1-\epsilon/2)n$. Let $S \subseteq I$ have $|S| \geq (1-\epsilon)n$. By Lemma~\ref{lemma:linear-least-singular} and since $S \subseteq I_1$, it holds that $\sigma_\text{min}(\EE_S \Gradient g(w^*)) \geq \lambda/2$. By Lemma~\ref{lemma:linear-gradient-upper} and since $S \subseteq I_2$, it holds that $\EE_S (u^T \Gradient g(w) v)^2 \leq 2\EE_{I_2} (u^T \Gradient g(w) v)^2 \leq O(\tau L^2)$. By Lemma~\ref{lemma:linear-true-variance} we have $\EE_S (v^T g(w^*))^2 \leq O(\sigma^2 L)$, and by Lemma~\ref{lemma:linear-true-norm} we have $\norm{\EE_S g(w^*)}_2 \leq o(\sigma\sqrt{L\epsilon})$. Finally, $\Gradient g(w) = ZX^T$ is clearly constant in $w$.
\end{proof}

The above corollary validates Assumption~\ref{assumption:gmm} for linear instrumental variables. Since $\Gradient g(w)$ is constant in $w$, the Assumption holds for any bound $R_0$ on the norm of the true solution $w^*$. Formally, we can instantiate Theorem~\ref{theorem:gmm-convergence} to get a provably robust estimator for instrumental variables linear regression, as stated in Theorem~\ref{theorem:iv-convergence}.


\begin{remark}
Although Theorem~\ref{theorem:iv-convergence} is stated with a constant probability of failure, this is only for simplicity of presentation; in fact, the probabilities of failure all decay exponentially with $n$, once $n$ exceeds the sample complexity stated in the theorem.
\end{remark}

\section{Proof of Theorem~\ref{theorem:nonlinear-iv-convergence}}\label{appendix:nonlinear-iv-convergence}

Let $(X_i, Y_i, Z_i)$ be $n$ independent samples drawn according to $(X, Y, Z)$. Let $\epsilon>0$. We prove that under the above assumptions, if $n$ is sufficiently large, then with high probability, for any $\epsilon$-contamination $(X'_i, Y'_i, Z'_i)_{i=1}^n$ of $(X_i, Y_i, Z_i)_{i=1}^n$, the functions $g_i(w) = Z_i'(Y_i' - G((X_i')^T w))$ satisfy Assumption~\ref{assumption:gmm}. The proof is similar to the previous section, with slight complications introduced by the non-linearity of the non-linear function $G$.

\begin{lemma}\label{lemma:nonlinear-lipschitz-gradient}
Let $\epsilon > 0$. Suppose that $n \geq C(p+d)^5\log((p+d)/\epsilon)/\epsilon^4$ for an appropriate constant $C$. Then with probability at least $0.97$, there is a set $I \subseteq [n]$ with $|I| \geq (1-\epsilon/100)n$ such that for every $S \subseteq I$ with $|S| \geq (1-\epsilon)n$, it holds that \[\norm{\EE_S ZX^T G'(X^T w) - \EE_S ZX^T G'(X^T w^*)}_\text{op} \leq O(\sqrt{\tau L^3} \norm{w-w^*}_2).\]
\end{lemma}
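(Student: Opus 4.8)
The plan is to reduce this Lipschitz-gradient bound to a fourth-moment concentration statement of the kind already established in the linear case. If $w = w^*$ the claim is trivial, so set $u := (w-w^*)/\norm{w-w^*}_2$. Since $\Gradient g(w) = -G'(X^T w) Z X^T$, for any unit vectors $a \in \RR^p$ and $b \in \RR^d$ we have
\[
a^T \left( \EE_S ZX^T G'(X^T w) - \EE_S ZX^T G'(X^T w^*)\right) b = \EE_S (a^T Z)(b^T X)\left(G'(X^T w) - G'(X^T w^*)\right),
\]
and by $1$-Lipschitzness of $G'$ (valid for the logistic function, as $\norm{G''}_\infty \le 1/4$) we get $|G'(X^T w) - G'(X^T w^*)| \le |X^T(w-w^*)| = |u^T X|\,\norm{w-w^*}_2$. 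Hence the operator norm in the statement is at most $\norm{w-w^*}_2 \cdot \sup_{a,b,u} \EE_S |a^T Z||b^T X||u^T X|$, with the supremum over unit vectors, and it suffices to show this supremum is $O(\sqrt{\tau L^3})$ for every admissible $S$. Note that no restriction of $w$ to a ball is needed here, precisely because $G'$ is globally Lipschitz; this is why the lemma statement quantifies over all $w$.

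For the moment bound I would apply the generalized H\"older inequality with exponents $(4,4,2)$:
\[
\EE_S |a^T Z||b^T X||u^T X| \le \left(\EE_S (a^T Z)^4\right)^{1/4}\left(\EE_S (b^T X)^4\right)^{1/4}\left(\EE_S (u^T X)^2\right)^{1/2}.
\]
The first two factors are controlled exactly as in the proof of Lemma~\ref{lemma:linear-gradient-upper}: using $(4,2,\tau)$-hypercontractivity of $[Z;X]$ and the $O(\tau^2 L^4)$ bound on the eighth moments of the coordinates of $X$ and $Z$, the fourth-moment concentration lemma (Lemma~\ref{lemma:fourth-moment-concentration}) yields sets $I_1,I_2\subseteq[n]$ of size $\ge (1-\epsilon/300)n$ with $\EE_S(a^T Z)^4 \le C\tau L^2$ for all unit $a$ and all $S\subseteq I_1$, $|S|\ge(1-\epsilon)n$, and likewise $\EE_S(b^T X)^4 \le C\tau L^2$ on $I_2$. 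The third factor is handled by Corollary~\ref{corollary:empirical-covariance} with the bound $\norm{\Cov([Z;X])}_\text{op}\le L$, giving $I_3$ of size $\ge(1-\epsilon/300)n$ with $\EE_S(u^T X)^2 \le O(L)$ uniformly over unit $u$ and over the relevant $S$. Taking $I = I_1\cap I_2\cap I_3$ (so $|I|\ge(1-\epsilon/100)n$) and multiplying, $\sup_{a,b,u}\EE_S|a^T Z||b^T X||u^T X| \le (C\tau L^2)^{1/4}(C\tau L^2)^{1/4}(O(L))^{1/2} = O(\sqrt{\tau L^3})$, with failure probability at most $0.03$ by a union bound over the three invocations.

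The substance of the argument is entirely in making every bound \emph{uniform} over all subsets $S\subseteq I$ with $|S|\ge(1-\epsilon)n$ and over all direction vectors $a,b,u$ at once; this is exactly what the concentration primitives deliver, so the only genuine choice is to route the estimate through the H\"older split so that each factor matches one primitive. The main obstacle — and the reason the sample complexity here carries $\epsilon^{-4}$ rather than the $\epsilon^{-2}$ of the linear-case lemmas — is that we now need the \emph{fourth}-moment statistics of $X$ and $Z$, not just their covariances, to concentrate within a constant factor over all such subsets, which is the most sample-hungry of the ingredients; the Lipschitz reduction, the H\"older step, and the exponent arithmetic that produces $\sqrt{\tau L^3}$ are all routine.
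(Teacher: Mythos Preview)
Your proof is correct and essentially mirrors the paper's: the paper also reduces via $1$-Lipschitzness of $G'$ and then splits by Cauchy--Schwarz into $\EE_S(u^T Z X^T v)^2 \le O(\tau L^2)$ (via Lemma~\ref{lemma:linear-gradient-upper}) and $\EE_S(X^T(w-w^*))^2 \le O(L\norm{w-w^*}_2^2)$ (via Corollary~\ref{corollary:empirical-covariance}), which after unpacking the first lemma is exactly your $(4,4,2)$ H\"older split. One minor correction to your closing remark: the ingredients you invoke (Lemma~\ref{lemma:fourth-moment-concentration} and Corollary~\ref{corollary:empirical-covariance}) each require only $n \gtrsim \epsilon^{-2}$ even when instantiated at parameter $\epsilon/300$, so the $\epsilon^{-4}$ in the lemma's hypothesis is slack in the statement rather than a genuine cost of fourth-moment control.
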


\begin{proof}
Let $I_1$ be the set guaranteed by Lemma~\ref{lemma:linear-gradient-upper} with parameter $\epsilon/200$, and let $I_2$ be the set guaranteed by applying Corollary~\ref{corollary:empirical-covariance} to $X_1,\dots,X_n$ with parameter $\epsilon/200$. Take $I = I_1 \cap I_2$, so that $|I| \geq (1-\epsilon/100)n$. Let $S \subseteq I$ with $|S| \geq (1-\epsilon)n$. By Cauchy-Schwarz, we have that
\begin{align*}
\norm{\EE_S ZX^T (G'(X^T w) - G'(X^T w^*))}_\text{op}
&= \sup_{\norm{u}=\norm{v}=1} \EE_S u^T Z X^T v (G'(X^T w) - G'(X^T w^*)) \\
&\leq \sup_{\norm{u}=\norm{v}=1} \sqrt{\EE_S (u^T Z X^T v)^2 \EE_S (G'(X^T w) - G'(X^T w^*))^2}.
\end{align*}
First, by the guarantee of Lemma~\ref{lemma:linear-gradient-upper}, we have \[\EE_S (u^T Z X^T v)^2 \leq O(\tau L^2).\]
Second, by the guarantee of Corollary~\ref{corollary:empirical-covariance} and Lipschitzness of $G'$, we have \[\EE_S (G'(X^T w) - G'(X^T w^*))^2 \leq \EE_S (X^T (w - w^*))^2 \leq O(L \norm{w-w^*}_2^2).\]
Together, \[\norm{\EE_S ZX^T (G'(X^T w) - G'(X^T w^*))}_\text{op} \leq O(\sqrt{\tau L^3} \norm{w-w^*}_2^2)\] as claimed.
\end{proof}

\begin{lemma}\label{lemma:nonlinear-gradient-upper}
Let $\epsilon > 0$ and suppose that $n \geq C(p+d)^5\log((p+d)/\epsilon)/\epsilon^2$ for an appropriate constant $C$. Then with probability $0.98$, there is a set $I \subseteq [n]$ with $|I| \geq (1-\epsilon)n$ such that \[\EE_I(u^T ZX^T G'(X^T w) v)^2 \leq O(\tau L^2)\] for all $w \in \RR^d$ and unit vectors $u,v$.
\end{lemma}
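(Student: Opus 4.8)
The plan is to reduce this statement directly to Lemma~\ref{lemma:linear-gradient-upper} by exploiting the uniform boundedness of $G'$. Since the logistic function $G$ is $1$-Lipschitz, we have $|G'(t)| \leq 1$ for every $t \in \RR$, hence $G'(X^T w)^2 \leq 1$ for all $w \in \RR^d$ and every realization of $X$. The quantity inside the expectation factors as a product of scalars: for unit vectors $u \in \RR^p$, $v \in \RR^d$ and any sample index $i$,
\[
(u^T Z_i X_i^T G'(X_i^T w) v)^2 = (u^T Z_i)^2 (X_i^T v)^2 G'(X_i^T w)^2 \leq (u^T Z_i)^2 (X_i^T v)^2.
\]
Averaging over $i \in I$ gives $\EE_I (u^T Z X^T G'(X^T w) v)^2 \leq \EE_I (u^T Z)^2 (X^T v)^2$ for every $w$, with the right-hand side now independent of $w$.

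I would then take $I$ to be exactly the set produced by Lemma~\ref{lemma:linear-gradient-upper} applied with parameter $\epsilon$; note the sample-complexity hypothesis $n \geq C(p+d)^5\log((p+d)/\epsilon)/\epsilon^2$ matches. That lemma guarantees, with probability at least $0.98$, a set $I$ with $|I| \geq (1-\epsilon)n$ such that $\EE_I(u^T Z)^2(X^T v)^2 \leq C\tau L^2$ for all unit $u, v$. Combining with the displayed inequality yields $\EE_I(u^T Z X^T G'(X^T w) v)^2 \leq C\tau L^2 = O(\tau L^2)$ for all $w \in \RR^d$ and all unit vectors $u, v$, which is the claim.

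There is essentially no obstacle here: the only point worth flagging is the uniformity over $w \in \RR^d$, which at first glance seems to demand a covering argument over parameter space together with a Lipschitz-in-$w$ control of $ZX^T G'(X^Tw)$. In fact this uniformity is obtained for free from the pointwise bound $|G'| \leq 1$, so no concentration machinery beyond Lemma~\ref{lemma:linear-gradient-upper} is needed. (Had one only $1$-Lipschitzness of $G'$ without a global bound on $G'$ itself, one would instead bound $\sup_w G'(X^Tw)^2$ via $G'(0)^2$ plus a Lipschitz term in $\|X\| \|w\|$; but for the logistic link the former suffices.)
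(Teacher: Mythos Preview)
Your proposal is correct and matches the paper's proof essentially line for line: the paper also invokes Lemma~\ref{lemma:linear-gradient-upper} to obtain $I$, and then uses $|G'|\leq 1$ (from $G$ being $1$-Lipschitz) to bound $(u^T Z X^T G'(X^T w) v)^2 \leq (u^T Z)^2(X^T v)^2$ pointwise, yielding the uniform-in-$w$ conclusion without any covering argument.
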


\begin{proof}
Let $I$ be the set guaranteed by Lemma~\ref{lemma:linear-gradient-upper}. Simply note that since $G$ is $1$-Lipschitz, \[\EE_I(u^T Z X^T G'(X^T w) v)^2 \leq \EE(u^T Z)^2(X^T v)^2 \leq O(\tau L^2)\] for all unit vectors $u,v$.
\end{proof}

\begin{lemma}
Let $\epsilon>0$. Suppose that $n \geq C(p+d)^5\tau \log((p+d)/\tau\epsilon)/\epsilon^2$ for an appropriate constant $C$. There is a set $I \subseteq [n]$ with $|I| \geq (1-\epsilon/50)n$ such that for every $S \subseteq I$ with $|S| \geq (1-\epsilon)n$, it holds that \[\sigma_\text{min}(\EE_S ZX^T G'(X^T w^*)) \geq \lambda/16.\]
\end{lemma}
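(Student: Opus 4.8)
The plan is to exploit that on the ball $B_{R_0}(0)$ the scalar weight $G'(X^Tw^*)$ is only a small, Lipschitz perturbation of the fixed positive constant $G'(0)$ (for the logistic function, $G'(0)=1/4$), so that $\EE_S ZX^T G'(X^Tw^*)$ stays close in operator norm to $G'(0)\,\EE_S ZX^T$, whose least singular value was already controlled in the linear analysis. Concretely I would write
\[
\EE_S ZX^T G'(X^Tw^*) = G'(0)\,\EE_S ZX^T + \EE_S ZX^T\bigl(G'(X^Tw^*) - G'(0)\bigr)
\]
and estimate the least singular value via $\sigma_\text{min}(A+B)\ge\sigma_\text{min}(A)-\norm{B}_\text{op}$.

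For the leading term, applying Lemma~\ref{lemma:linear-least-singular} with corruption parameter $\epsilon$ (its sample-complexity requirement is subsumed by the hypothesis here) produces a set $I_1$ with $|I_1|\ge(1-\epsilon/100)n$ such that every $S\subseteq I_1$ with $|S|\ge(1-\epsilon)n$ has $\sigma_\text{min}(\EE_S ZX^T)\ge\lambda/2$; this uses that $(L/\lambda)\sqrt{\tau\epsilon}$ is below a sufficiently small constant, which follows from $\epsilon<c\lambda^2/(\tau L^2)$. Since $G'(0)=1/4$, we get $\sigma_\text{min}(G'(0)\EE_S ZX^T)\ge\lambda/8$. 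For the error term I would bound the operator norm by Cauchy--Schwarz, exactly as in the proof of Lemma~\ref{lemma:nonlinear-lipschitz-gradient}:
\[
\norm{\EE_S ZX^T\bigl(G'(X^Tw^*)-G'(0)\bigr)}_\text{op}\le\sup_{\norm{u}=\norm{v}=1}\sqrt{\EE_S(u^TZX^Tv)^2\cdot\EE_S\bigl(G'(X^Tw^*)-G'(0)\bigr)^2}.
\]
The first factor is $O(\tau L^2)$ by Lemma~\ref{lemma:linear-gradient-upper} (run with a small corruption parameter to get a set $I_2$, with the bound transferring to subsets $S$ up to a constant factor as done elsewhere in this section). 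For the second factor, $1$-Lipschitzness of $G'$ gives $|G'(X^Tw^*)-G'(0)|\le|X^Tw^*|$, and Corollary~\ref{corollary:empirical-covariance} applied to $X_1,\dots,X_n$ (giving a set $I_3$) then yields $\EE_S(X^Tw^*)^2\le O(L\norm{w^*}_2^2)\le O(LR_0^2)$. Hence the error term is $O(\sqrt{\tau L^3}\,R_0)$, and the hypothesis $R_0\le c\min(\lambda^2/L,\lambda/\sqrt{\tau L^3})$ forces it to be $O(c\lambda)$, hence at most $\lambda/16$ once $c$ is small enough.

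Finally I would take $I=I_1\cap I_2\cap I_3$: feeding Lemma~\ref{lemma:linear-gradient-upper} and Corollary~\ref{corollary:empirical-covariance} a small enough corruption parameter (say $\epsilon/1000$), all three sets have size at least $(1-\epsilon/100)n$ and one can arrange $|I|\ge(1-\epsilon/50)n$, while a union bound over the three constant-probability events gives $I$ with constant probability. For any $S\subseteq I$ with $|S|\ge(1-\epsilon)n$, combining the two estimates gives $\sigma_\text{min}(\EE_S ZX^T G'(X^Tw^*))\ge\lambda/8-\lambda/16=\lambda/16$. I do not expect a genuine obstacle; the two things requiring care are the routine bookkeeping of set sizes across the triple intersection (and transferring the ``for all $S$'' guarantees from each $I_j$ to $S\subseteq I$), and the fact that we really do need $\sqrt{\tau L^3}\,R_0\ll\lambda$ --- which is exactly the norm bound on $R_0$ imposed for the logistic case, and is the only place that logistic-specific hypothesis enters.
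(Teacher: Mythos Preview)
Your proposal is correct and follows essentially the same approach as the paper. The only cosmetic difference is that the paper packages your error-term bound (Cauchy--Schwarz combined with Lemma~\ref{lemma:linear-gradient-upper} and Corollary~\ref{corollary:empirical-covariance}) into the separate Lemma~\ref{lemma:nonlinear-lipschitz-gradient} and then invokes it with $w=w^*$ and $w^*$ replaced by $0$, whereas you inline that argument; consequently the paper intersects only two sets ($I_1$ from Lemma~\ref{lemma:linear-least-singular} and $I_2$ from Lemma~\ref{lemma:nonlinear-lipschitz-gradient}) rather than your three, but the underlying estimates and the final bound $\lambda/8-O(\sqrt{\tau L^3}\,R_0)\ge\lambda/16$ are identical.
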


\begin{proof}
Let $I_1$ be the set guaranteed by Lemma~\ref{lemma:linear-least-singular} with parameter $\epsilon$, and let $I_2$ be the set guaranteed by Lemma~\ref{lemma:nonlinear-lipschitz-gradient} with parameter $\epsilon$. Let $I = I_1 \cap I_2$, so that $|I| \geq (1-\epsilon/50)n$. Pick any $S \subseteq I$ with $|S| \geq (1-\epsilon)n$. Then $\sigma_\text{min}(\EE_S ZX^T G'(X^T 0)) \geq \lambda/8$ by Lemma~\ref{lemma:linear-least-singular} (and since $G'(0) = 1/4$), and $\norm{\EE_S ZX^T (G'(0) - G'(X^T w^*))}_\text{op} \leq O(\tau L^2 \norm{w^*}_2)$ by Lemma~\ref{lemma:nonlinear-lipschitz-gradient}. It follows that \[\sigma_\text{min}(\EE_S ZX^T G'(X^T w^*)) \geq \lambda/8 - O(\sqrt{\tau L^3} \norm{w^*}_2) \geq \lambda/16,\] where the last inequality is by assumption that $\norm{w^*}_2 \leq R_0 \leq O(\lambda/\sqrt{\tau L^3}))$.
\end{proof}

\begin{lemma}\label{lemma:nonlinear-true-variance}
Let $\epsilon>0$ and suppose that $n \geq C(p+d)^3/\epsilon^2$. Then with probability $0.99$, there is a set $I \subseteq [n]$ with $|I| \geq (1-\epsilon)n$ such that \[\EE_I (v^T Z\xi)^2 \leq O(\sigma^2 L)\] for all unit vectors $v$.
\end{lemma}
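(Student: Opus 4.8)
The plan is to observe that, as in the linear case, the logistic moment evaluated at the true parameter is pure noise: $g(w^*) = Z(Y - G(X^T w^*)) = Z\xi$. So this lemma has exactly the same content as Lemma~\ref{lemma:linear-true-variance}, and the two-step argument there carries over verbatim; the nonlinearity of $G$ plays no role.

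\textbf{Step 1 (population bound).} Since $Z$ is mean-zero, $\EE ZZ^T = \Cov(Z) \preceq \norm{\Cov([Z;X])}_\text{op} I \preceq LI$ by the boundedness part of Assumption~\ref{assumption:iv-linear}. Conditioning on $Z$ and using bounded-variance noise $\EE[\xi^2 \mid Z] \leq \sigma^2$, for any unit vector $v$,
\[\EE (v^T Z\xi)^2 = \EE\left[(v^T Z)^2 \,\EE[\xi^2 \mid Z]\right] \leq \sigma^2 \EE (v^T Z)^2 \leq \sigma^2 L,\]
i.e. $\EE ZZ^T \xi^2 \preceq \sigma^2 L \, I$.

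\textbf{Step 2 (uniform empirical concentration after deletions).} Apply Corollary~\ref{corollary:empirical-covariance} to the $n$ i.i.d.\ vectors $\{Z_i\xi_i\}_{i=1}^n \subseteq \RR^p$. For $n \geq C(p+d)^3/\epsilon^2$, with probability at least $0.99$ there is a subset $I \subseteq [n]$ with $|I| \geq (1-\epsilon)n$ on which the empirical second-moment matrix satisfies $\norm{\EE_I Z_i Z_i^T \xi_i^2}_\text{op} \leq O(1)\cdot \norm{\EE ZZ^T \xi^2}_\text{op} = O(\sigma^2 L)$. Hence $\EE_I (v^T Z\xi)^2 \leq O(\sigma^2 L)$ uniformly over unit vectors $v$, which is the claim.

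\textbf{Main obstacle.} There is essentially none: the only point to verify is that the moment hypotheses needed to invoke Corollary~\ref{corollary:empirical-covariance} on $Z\xi$ hold, and these follow exactly as in the proof of Lemma~\ref{lemma:linear-true-variance} (from hypercontractivity of $[Z;X]$ together with the conditional moment control on $\xi$), since the random vector $Z\xi$ is literally the same as in the linear IV model. Thus the only ``work'' is bookkeeping of constants, and no new difficulty arises from the logistic link.
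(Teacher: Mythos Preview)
Your proposal is correct and matches the paper's proof essentially verbatim: the paper also observes that $\EE(v^T Z\xi)^2 \leq \sigma^2 L$ from the conditional variance bound and the covariance bound on $Z$, and then invokes Corollary~\ref{corollary:empirical-covariance} to pass to the truncated empirical second moment. As you note, $g(w^*)=Z\xi$ regardless of the link, so the argument is literally the same as Lemma~\ref{lemma:linear-true-variance}.
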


\begin{proof}
By assumption, $\EE(v^T Z\xi)^2 = \EE (v^T Z)^2 (Y - G(X^T w^*))^2 \leq \sigma^2 L$. So we can apply Corollary~\ref{corollary:empirical-covariance} to conclude.
\end{proof}

\begin{lemma}\label{lemma:nonlinear-true-norm}
Let $\epsilon > 0$, and suppose that $n \geq C(p^{3/2}/\epsilon)\log(p)$. With probability $0.99$, there is a subset $I \subseteq [n]$ with $|I| \geq (1-\epsilon/100)n$ such that for every $S \subseteq I$ with $|S| \geq (1-\epsilon)n$, it holds that \[\norm{\EE_S Z\xi}_2 \leq O(\sigma \sqrt{L\epsilon}).\]
\end{lemma}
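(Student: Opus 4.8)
The plan is to observe that this lemma is essentially identical to its linear counterpart, Lemma~\ref{lemma:linear-true-norm}: the nonlinearity of $G$ has already been absorbed into the noise variable $\xi = Y - G(X^T w^*)$, so the argument goes through verbatim once the relevant distributional facts are recorded. First I would verify the two hypotheses needed to invoke the concentration bound for mean-zero random vectors. By the valid-instruments condition $\EE[\xi \mid Z] = 0$, the tower property gives $\EE[Z\xi] = \EE[Z\,\EE[\xi \mid Z]] = 0$. By the bounded-variance-noise condition $\EE[\xi^2 \mid Z] \leq \sigma^2$ together with the boundedness condition $\norm{\Cov([Z;X])}_\text{op} \leq L$ (which, since $Z$ is mean-zero and $\Cov(Z)$ is a principal submatrix of $\Cov([Z;X])$, yields $\EE[ZZ^T] = \Cov(Z) \preceq L I$), we get
\[
\EE[ZZ^T \xi^2] = \EE\bigl[ZZ^T\,\EE[\xi^2 \mid Z]\bigr] \preceq \sigma^2\,\EE[ZZ^T] \preceq \sigma^2 L I.
\]

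With these two facts established, I would apply Lemma~\ref{lemma:mean-zero-concentration} to the vectors $\{Z_i\xi_i\}_{i=1}^n$ with the stated sample-complexity bound $n \geq C(p^{3/2}/\epsilon)\log p$. This directly yields, with probability at least $0.99$, a subset $I \subseteq [n]$ with $|I| \geq (1-\epsilon/100)n$ such that every further subset $S \subseteq I$ with $|S| \geq (1-\epsilon)n$ satisfies $\norm{\EE_S Z\xi}_2 \leq O(\sigma\sqrt{L\epsilon})$, which is exactly the claim.

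I do not expect any real obstacle here: unlike the other lemmas in this section (e.g.\ Lemma~\ref{lemma:nonlinear-lipschitz-gradient}), where the nonlinearity of $G$ interacts with the gradient $ZX^T G'(X^T w)$ and forces a Cauchy--Schwarz split, the quantity $Z\xi$ in this lemma has the same structure as in the linear case. The only point that warrants a line of checking is that Assumption~\ref{assumption:iv-linear}, as inherited by the logistic setting, genuinely supplies the mean-zero property and the $\sigma^2 L I$ second-moment bound in the precise form Lemma~\ref{lemma:mean-zero-concentration} requires — and the computation above confirms this.
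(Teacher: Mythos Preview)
Your proposal is correct and matches the paper's own proof essentially verbatim: the paper simply records that $\EE Z\xi = 0$ and $\EE ZZ^T\xi^2 \preceq \sigma^2 L I$ and then invokes Lemma~\ref{lemma:mean-zero-concentration}. Your write-up is in fact more explicit than the paper's, since you spell out via the tower property and the principal-submatrix observation exactly why those two distributional facts hold.
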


\begin{proof}
Observe that $\EE Z\xi = 0$ and $\EE ZZ^T \xi^2 \preceq \sigma^2 L I$ by assumption. The claim follows from Lemma~\ref{lemma:mean-zero-concentration}.
\end{proof}

As a result of the above lemmas, we get the following corollary, just as in the previous section.

\begin{corollary}\label{corollary:nonlinear-assumption}
Let $\epsilon>0$. Suppose that $\epsilon < c\min(\lambda^2/(\tau L^2), \lambda^4/L^4)$ for a sufficiently small constant $c>0$, and suppose that $n \geq C(d+p)^5 \tau \log((p+d)/\tau\epsilon)/\epsilon^2$ for a sufficiently large constant $C$. Suppose that $R_0 \leq c\min(\lambda^2/L,\lambda/(\tau L^2))$. Then with probability at least $0.95$, there is a set $I \subseteq [n]$ with $|I| \geq (1-\epsilon/2)n$ such that for every subset $S \subseteq I$ with $|S| \geq (1-\epsilon)n$, the following hold:
\begin{itemize}
    \item $\sigma_\text{min}(\EE_S \Gradient g(w^*)) \geq \Omega(\lambda)$
    \item $\EE_S (u^T \Gradient g(w) v)^2 \leq O(\tau L^2)$ for all unit vectors $u,v$ and all $w \in B_{R_0}(0)$
    \item $\EE_S (v^T g(w^*))^2 \leq O(\sigma^2 L)$ for all unit vectors $v$
    \item $\norm{\EE_S g(w^*)}_2 \leq O(\sigma \sqrt{L\epsilon})$
    \item $\norm{\EE_S \Gradient g(w) - \EE_S \Gradient g(w^*)}_2 \leq O(\sqrt{\tau L^3} \norm{w-w^*}_2)$ for all $w \in B_{R_0}(0)$
\end{itemize}
\end{corollary}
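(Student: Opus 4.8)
The plan is to assemble this corollary from the five concentration lemmas of this section in exactly the manner of the linear case. I would invoke: the (unlabeled) lemma bounding $\sigma_\text{min}(\EE_S ZX^T G'(X^T w^*))$ for strong identifiability, Lemma~\ref{lemma:nonlinear-gradient-upper} for the bounded-variance gradient (already uniform in $w$ and in the unit directions), Lemma~\ref{lemma:nonlinear-true-variance} for the bounded-variance noise, Lemma~\ref{lemma:nonlinear-true-norm} for well-specification, and Lemma~\ref{lemma:nonlinear-lipschitz-gradient} for the Lipschitz-gradient bullet. Each is applied with corruption parameter a small constant times $\epsilon$ (so that the set $I_j$ it produces has size at least $(1-\epsilon/200)n$, say), and I would check that the corollary's hypotheses $\epsilon < c\min(\lambda^2/(\tau L^2),\lambda^4/L^4)$ and $n \geq C(d+p)^5\tau\log((p+d)/\tau\epsilon)/\epsilon^2$ dominate the requirements of all five lemmas simultaneously. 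Setting $I = I_1 \cap \cdots \cap I_5$ gives $|I| \geq (1-\epsilon/2)n$, and a union bound over the constant failure probabilities leaves overall success probability at least $0.95$.

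Next I would fix an arbitrary $S \subseteq I$ with $|S| \geq (1-\epsilon)n$ and read off the five bullets one at a time. The recurring technical point is that passing from $I_j$ to the smaller set $S$, with $|S|/|I_j| \geq 1 - O(\epsilon)$, changes $\EE_S$ of any nonnegative quantity (in particular each relevant squared form, such as $\langle Z,u\rangle^2\langle X,v\rangle^2$, $(v^T Z\xi)^2$, and $(X^T(w-w^*))^2$) by at most a constant factor; this is the same elementary fact used throughout the linear section. With it: the $\sigma_\text{min}$ lemma gives $\sigma_\text{min}(\EE_S \Gradient g(w^*)) \geq \Omega(\lambda)$; Lemma~\ref{lemma:nonlinear-gradient-upper} gives $\EE_S(u^T\Gradient g(w)v)^2 \leq O(\tau L^2)$ for all $w \in B_{R_0}(0)$ and all unit $u,v$; Lemma~\ref{lemma:nonlinear-true-variance} gives $\EE_S(v^T g(w^*))^2 \leq O(\sigma^2 L)$; Lemma~\ref{lemma:nonlinear-true-norm} gives $\norm{\EE_S g(w^*)}_2 \leq O(\sigma\sqrt{L\epsilon})$; and Lemma~\ref{lemma:nonlinear-lipschitz-gradient} gives $\norm{\EE_S \Gradient g(w) - \EE_S \Gradient g(w^*)}_\text{op} \leq O(\sqrt{\tau L^3}\norm{w-w^*}_2)$ for $w \in B_{R_0}(0)$. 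Each of these is the corresponding bullet, so the corollary follows.

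I do not expect a serious obstacle here: the genuine work of this section lives in the individual lemmas, especially Lemma~\ref{lemma:nonlinear-lipschitz-gradient} and the $\sigma_\text{min}$ lemma (where the non-linearity of $G$ and the smallness of $\norm{w^*}_2$ are actually exploited), not in the corollary. The only thing to be careful about is consistency of constants: the $\sigma_\text{min}$ lemma's proof used a bound of the form $R_0 \lesssim \lambda/\sqrt{\tau L^3}$, and I would verify that the corollary's hypothesis $R_0 \leq c\min(\lambda^2/L,\lambda/(\tau L^2))$ implies it for a suitable constant $c$, as well as that the $\epsilon$- and $n$-hypotheses of the corollary imply those of each of the five lemmas. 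Having verified the five bullets, Assumption~\ref{assumption:gmm} is then witnessed by taking $I_\text{good}$ to be the intersection of $I$ with the (unknown) set of uncorrupted samples, which has size at least $(1-\epsilon/2)n - \epsilon n \geq (1-O(\epsilon))n$, with $L_g = O(\sqrt{\tau L^3})$; the stability condition $R_0 < \lambda/(8L_g)$ again follows from the assumed upper bound on $R_0$. This is the reduction indicated at the start of Appendix~\ref{appendix:iv-convergence}, now carried out in the non-linear setting.
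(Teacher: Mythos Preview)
Your proposal is correct and mirrors the paper's approach exactly: the paper simply writes ``As a result of the above lemmas, we get the following corollary, just as in the previous section,'' and your sketch spells out precisely that intersection-of-sets and union-bound argument from the linear case. Your flag about the $R_0$ hypothesis is apt: the $\sigma_{\min}$ lemma and Theorem~\ref{theorem:nonlinear-iv-convergence} use $R_0 \lesssim \lambda/\sqrt{\tau L^3}$, while the corollary as stated has $R_0 \leq c\,\lambda/(\tau L^2)$; these agree only when $\tau L \geq 1$, so this looks like a typographical inconsistency in the paper rather than a gap in your argument.
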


This corollary validates Assumption~\ref{assumption:gmm} for logistic instrumental variables, and proves Theorem~\ref{theorem:nonlinear-iv-convergence}.

\section{Technical lemmas}

In this section we collect technical lemmas that are needed for our proof. Most of these results are standard in the robust statistics literature (see, e.g., \cite{jambulapati2021robust}).

The following fact is key to the filtering algorithm and various other bounds.

\begin{lemma}\label{lemma:key}
Let $P, Q$ be distributions on $\RR^d$. Let $\epsilon \in [0, 1/2)$ and suppose that $\TV(P,Q) = \epsilon$ and $\norm{\Cov_P}_\text{op}, \norm{\Cov_Q}_\text{op} \leq \sigma^2$. Then if $X \sim P$ and $Y \sim Q$, it holds that \[\norm{\EE X - \EE Y}_2 \leq C_\epsilon\sigma\sqrt{\epsilon}\] where $C_\epsilon = \sqrt{6/(1-4\epsilon^2)}$.
\end{lemma}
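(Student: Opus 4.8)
The plan is to reduce to a one-dimensional statement and then exploit the fact that two distributions at total-variation distance $\epsilon$ share a common component of mass $1-\epsilon$. First I would dispose of the trivial case $\EE X = \EE Y$, and otherwise set $v := (\EE X - \EE Y)/\norm{\EE X - \EE Y}_2$, letting $P_v, Q_v$ denote the laws of $\langle v, X\rangle$ and $\langle v, Y\rangle$. Then $\norm{\EE X - \EE Y}_2 = \EE_{P_v}[t] - \EE_{Q_v}[t]$; moreover pushforward cannot increase total variation, so $\epsilon' := \TV(P_v,Q_v) \le \epsilon$, and $\Var_{P_v}(t) = v^T\Cov_P v \le \norm{\Cov_P}_\text{op} \le \sigma^2$, likewise for $Q_v$. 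Since $\epsilon \mapsto C_\epsilon\sqrt\epsilon = \sqrt{6\epsilon/(1-4\epsilon^2)}$ is increasing on $[0,1/2)$, it then suffices to prove the scalar bound $|\EE_{P_v}[t] - \EE_{Q_v}[t]| \le C_{\epsilon'}\,\sigma\sqrt{\epsilon'}$.

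For the scalar bound I would use the decomposition $P_v = (1-\epsilon')R + \epsilon' P_1$ and $Q_v = (1-\epsilon')R + \epsilon' Q_1$, where $R$ is the normalization of $\min(p_v, q_v)$ (which has total mass exactly $1 - \TV(P_v,Q_v) = 1-\epsilon'$) and $P_1, Q_1$ are the normalized residuals; all three are genuine probability distributions. This gives $\EE_{P_v}[t] - \EE_{Q_v}[t] = \epsilon'(\EE_{P_1}[t] - \EE_{Q_1}[t])$, so it remains to control $\EE_{P_1}[t]$ and $\EE_{Q_1}[t]$. The key point is that the variance bound pins these down: the law of total variance applied to the mixture $P_v = (1-\epsilon')R + \epsilon'P_1$ gives $\sigma^2 \ge \Var_{P_v}(t) \ge \epsilon'(1-\epsilon')(\EE_R[t] - \EE_{P_1}[t])^2$, hence $|\EE_R[t] - \EE_{P_1}[t]| \le \sigma/\sqrt{\epsilon'(1-\epsilon')}$, and symmetrically for $Q_1$. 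By the triangle inequality $|\EE_{P_1}[t] - \EE_{Q_1}[t]| \le 2\sigma/\sqrt{\epsilon'(1-\epsilon')}$, so $\norm{\EE X - \EE Y}_2 \le 2\sigma\sqrt{\epsilon'/(1-\epsilon')}$. A one-line check that $4/(1-\epsilon') \le 6/(1-4\epsilon'^2)$ on $[0,1/2)$ (equivalently $16\epsilon'^2 - 6\epsilon' + 2 \ge 0$, which has negative discriminant) upgrades this to $C_{\epsilon'}\sigma\sqrt{\epsilon'}$, and the monotonicity noted above yields the claimed $C_\epsilon\sigma\sqrt\epsilon$.

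There is no deep obstacle here — this is a standard fact from the robust-statistics toolkit — so the only things to watch are that the pushforward may strictly decrease the total-variation distance (handled by monotonicity of $C_\epsilon\sqrt\epsilon$), that $R, P_1, Q_1$ are bona fide distributions (immediate from $\int\min(p_v,q_v) = 1-\epsilon'$), and the modest bookkeeping needed to land on the precise constant $C_\epsilon$ rather than something slightly larger. An alternative route, avoiding the explicit decomposition, is to take a maximal coupling of $X$ and $Y$ with $\Pr[X \ne Y] = \TV(P,Q) \le \epsilon$ and write $\norm{\EE X - \EE Y}_2 = \EE[\langle v, X - Y\rangle\,\mathbbm{1}[X\ne Y]]$ (the integrand vanishes on $\{X=Y\}$); Cauchy--Schwarz together with $\Var(\langle v, X - Y\rangle) \le 4\sigma^2$ then produces a quadratic inequality in $\norm{\EE X - \EE Y}_2$ that gives the same constant $2\sigma\sqrt{\epsilon/(1-\epsilon)}$.
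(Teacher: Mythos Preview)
Your proof is correct. Both the primary mixture-decomposition argument and the sketched coupling alternative go through cleanly, and the constant check $4/(1-\epsilon') \le 6/(1-4\epsilon'^2)$ is valid on $[0,1/2)$.

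The route, however, differs from the paper's. The paper works directly with a maximal coupling $\Pr[X\neq Y]=\epsilon$, writes $\EE X-\EE Y=\epsilon\bigl(\EE[X\mid X\neq Y]-\EE[Y\mid X\neq Y]\bigr)$, and bounds the conditional second moments of $v\cdot(X-\EE X)$ and $v\cdot(Y-\EE X)$ separately via $\EE[\,\cdot\mid X\neq Y]\le \epsilon^{-1}\EE[\,\cdot\,]$; this produces a self-referential inequality $\norm{\EE X-\EE Y}_2^2\le 6\sigma^2\epsilon+4\epsilon^2\norm{\EE X-\EE Y}_2^2$, which is then solved to give exactly $C_\epsilon\sigma\sqrt\epsilon$. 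Your primary approach instead passes to one dimension, uses the explicit common-component decomposition $P_v=(1-\epsilon')R+\epsilon'P_1$, and invokes the law of total variance to control $|\EE_R t-\EE_{P_1}t|$. This is arguably tidier and in fact yields the sharper bound $2\sigma\sqrt{\epsilon/(1-\epsilon)}$ before you relax it to $C_\epsilon\sigma\sqrt\epsilon$; the price is a small amount of bookkeeping (the pushforward may shrink the TV distance, handled by your monotonicity remark, and the degenerate case $\epsilon'=0$). Your alternative coupling sketch is close in spirit to the paper's argument but uses Cauchy--Schwarz on $\langle v,X-Y\rangle\mathbbm{1}[X\neq Y]$ in one shot rather than splitting the $X$ and $Y$ contributions, which again lands on the better constant $2$ in place of the paper's $\sqrt 6$.
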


\begin{proof}
Since $\TV(P,Q) = \epsilon$ there is some coupling under which $\Pr(X\neq Y) = \epsilon$. As a result, \[\EE[X] - \EE[Y] = \epsilon(\EE[X|X\neq Y] - \EE[Y|X\neq Y]).\]
Thus we have that:
\begin{align*}
\norm{\EE X-\EE Y}_2^2 =~& \epsilon^2 \norm{\EE[X|X\neq Y] - \EE[Y|X\neq Y]}_2^2\\
\leq~& \epsilon^2 \sup_{v\in \RR^d: \norm{v}_2=1} \left(v\cdot \left(\EE[X|X\neq Y] - \EE[Y|X\neq Y]\right)\right)^2
\end{align*}

Let $v \in \RR^d$ be a unit vector. Bounding the means of $X|X\neq Y$ and $Y|X\neq Y$ by second moments around $\EE X$, we have that
\begin{align*}
(\EE[v\cdot X|X\neq Y] - \EE[v\cdot Y|X\neq Y])^2
&= (\EE[v\cdot(X - \EE X)|X \neq Y] - \EE[v\cdot(Y-\EE X)|X \neq Y])^2 \\
&\leq 2\EE[v\cdot(X-\EE X)|X\neq Y]^2 + 2\EE[v\cdot(Y-\EE X)|X\neq Y]^2 \\
&\leq 2\EE[(v\cdot(X-\EE X))^2|X\neq Y] + 2\EE[(v\cdot(Y - \EE X))^2|X\neq Y]
\end{align*}
By law of total probability, \[\EE[(v\cdot(X-\EE X))^2|X\neq Y] \leq \epsilon^{-1}\EE[(v\cdot(X-\EE X))^2] = \epsilon^{-1} v^T\Cov(X)v \leq \sigma^2/\epsilon.\]
Similarly,
\begin{align*}
\EE[(v\cdot(Y - \EE X))^2|X \neq Y]
&\leq 2\EE[(v\cdot(Y - \EE Y))^2|X\neq Y] + 2(v\cdot(\EE Y - \EE X))^2 \\
&\leq 2\epsilon^{-1} \EE[(v\cdot(Y - \EE Y))^2] + 2\norm{\EE Y - \EE X}_2^2 \\
&\leq 2\sigma^2/\epsilon + 2\norm{\EE Y - \EE X}_2^2.
\end{align*}
As a result, we get that:
\begin{align*}
(\EE[v\cdot X|X\neq Y] - \EE[v\cdot Y|X\neq Y])^2
&\leq 6\sigma^2/\epsilon + 4\norm{\EE Y - \EE X}_2^2.
\end{align*}
We conclude that 
\begin{align*}
\norm{\EE X-\EE Y}_2^2
\leq~& 6\sigma^2\epsilon + 4\epsilon^2 \norm{\EE X - \EE Y}_2^2
\end{align*}

Re-arranging we get the desired inequality.
\end{proof}

The above lemma implies that if an adversary is allowed to corrupt an $\epsilon$-fraction of data, and the original distribution has variance no more than $\sigma^2$ in any direction, then the corrupted mean must be within $O(\sigma\sqrt{\epsilon})$ of the original mean, unless the corrupted distribution has significantly larger variance.

\begin{lemma}\label{lemma:covariance-upper-concentration}
Let $\epsilon,\delta>0$. Suppose that $X_1,\dots,X_n,X$ are independent and identically distributed with $\EE XX^T = I_d$. Suppose that $n \geq Cd^3\log(3/\delta)/\epsilon^2$. Then with probability $0.99$ there is a subset $I \subseteq [n]$ with $|I| \geq (1-\epsilon)n$ such that \[\frac{1}{n} \sum_{i\in I} X_i X_i^T \preceq (1+\delta)I_d\]
and as a consequence \[\frac{1}{|I|}\sum_{i\in I} X_i X_i^T \preceq \frac{1+\delta}{1-\epsilon}I_d.\]
\end{lemma}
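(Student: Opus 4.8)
The statement to prove is Lemma~\ref{lemma:covariance-upper-concentration}: a finite-sample upper bound on the empirical second moment matrix over a large subset of i.i.d.\ isotropic samples. The plan is to reduce this to a one-dimensional empirical tail bound via a net argument, and then extract the large subset by a truncation (``throw away the samples that are too large in some direction'') argument. Concretely, for each unit vector $v$ we want $\frac{1}{n}\sum_{i \in I} \langle X_i, v\rangle^2 \leq 1 + \delta$; the obstacle is that $\langle X_i, v\rangle^2$ is not bounded, so we cannot directly apply a Bernstein/Chernoff bound to the full sum $\frac1n\sum_i \langle X_i,v\rangle^2$.

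\textbf{Step 1 (truncation).} Fix a threshold $M = \Theta(d/\epsilon)$. Since $\EE\|X\|_2^2 = \tr(I_d) = d$, by Markov's inequality each sample satisfies $\|X_i\|_2^2 > M$ with probability at most $d/M = \Theta(\epsilon)$; choosing the constant right, in expectation at most $(\epsilon/2) n$ samples exceed the threshold, and by a Chernoff bound at most $\epsilon n$ do so with probability at least, say, $0.999$ (this is where $n \gtrsim 1/\epsilon^2$ or even $1/\epsilon$ suffices for this part). Let $I$ be the set of samples with $\|X_i\|_2^2 \leq M$, so $|I| \geq (1-\epsilon)n$, and on $I$ the random vectors $X_iX_i^T$ have operator norm at most $M$.

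\textbf{Step 2 (concentration of the truncated sum, fixed direction).} For a fixed unit vector $v$, the variables $Y_i := \langle X_i, v\rangle^2 \bone[\|X_i\|_2^2 \le M]$ lie in $[0, M]$ and have mean at most $\EE\langle X_i, v\rangle^2 = 1$. By Bernstein's inequality (using variance proxy $\le M \cdot \EE Y_i \le M$), $\frac1n \sum_{i} Y_i \le 1 + \delta$ with failure probability $\exp(-\Omega(n\delta^2/M)) = \exp(-\Omega(n\delta^2 \epsilon / d))$. Since $I$ consists exactly of the indices where the truncation indicator is $1$, $\frac{1}{n}\sum_{i\in I}\langle X_i, v\rangle^2 = \frac1n\sum_i Y_i$.

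\textbf{Step 3 (net argument).} Take a $(1/4)$-net $\mathcal{N}$ of the unit sphere $S^{d-1}$ of size $\le 9^d$. Union-bounding Step~2 over $\mathcal{N}$ costs a factor $9^d$, so we need $\exp(-\Omega(n\delta^2\epsilon/d)) \cdot 9^d \le \delta'$, i.e.\ $n \gtrsim d^2 \log(1/\delta)/(\delta^2\epsilon)$ roughly — I should double-check against the stated $n \ge C d^3 \log(3/\delta)/\epsilon^2$ bound, which is more generous (the cube and the $\epsilon^{-2}$ give slack, presumably because a cruder net-to-sup conversion for the PSD matrix $\frac1n\sum_{i\in I}X_iX_i^T - I$ is used, turning a net bound at scale $1/4$ into a sup bound by the standard $\|A\|_{op} \le 2\sup_{v\in\mathcal N} |v^TAv|$ trick). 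Applying that trick upgrades the net bound to $\frac1n\sum_{i\in I} X_iX_i^T \preceq (1+\delta)I_d$ for all directions simultaneously. The ``as a consequence'' clause is then immediate: $\frac{1}{|I|}\sum_{i\in I}X_iX_i^T = \frac{n}{|I|} \cdot \frac1n\sum_{i\in I}X_iX_i^T \preceq \frac{1+\delta}{1-\epsilon}I_d$ since $|I| \ge (1-\epsilon)n$ and the matrices are PSD.

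\textbf{Main obstacle.} The main subtlety is that the truncation in Step~1 must be done \emph{by norm} (uniformly over directions), not per-direction — otherwise the ``bad set'' $I$ would depend on $v$ and we could not get one subset working for all directions. Doing it by norm means the per-direction Bernstein bound in Step~2 uses the crude variance proxy $M$ rather than the true variance of $\langle X,v\rangle^2$ (which we have not assumed is bounded — only the second moment is controlled), and this is exactly what forces the polynomial-in-$d$ and $\epsilon^{-2}$ dependence in $n$. A secondary point is that the excerpt's lemma does not assume hypercontractivity, so we genuinely cannot do better than this second-moment-based truncation; the cube in $d^3$ is the price of that. I would not expect to need anything beyond Bernstein, Markov, Chernoff, and a net.
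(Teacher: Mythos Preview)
Your proposal is correct and matches the paper's approach: truncate by norm at threshold $2d/\epsilon$, bound $|I|$ by Chernoff, apply a per-direction concentration inequality to the bounded truncated summands (the paper uses Hoeffding, giving the cruder tail $\exp(-\Omega(n\delta^2\epsilon^2/d^2))$, which is what produces the $d^3/\epsilon^2$ sample size; your Bernstein is sharper), and then pass from a net to all directions. One small cleanup: the matrix $\tfrac{1}{n}\sum_{i\in I}X_iX_i^T - I$ is not PSD and the $\|A\|_\mathrm{op}\le 2\sup_{v\in\mathcal N}|v^TAv|$ trick needs a two-sided net bound, so either apply it to the PSD matrix $B=\tfrac{1}{n}\sum_{i\in I}X_iX_i^T$ directly (accepting a constant factor and absorbing it into $\delta$) or, as the paper does, take a $\delta$-dependent net resolution together with the perturbation bound $|u^TBu-v^TBv|\le\sqrt{(u-v)^TB(u-v)}\sqrt{(u+v)^TB(u+v)}$.
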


\begin{proof}
Since $\EE \norm{X}_2^2 = \text{Tr}(I_d) = d$, we have that $\Pr[\norm{X}_2^2 \geq 2d/\epsilon] \leq \epsilon/2$. Define $I = \{i \in [n]: \norm{X_i}_2^2 \leq 2d/\epsilon\}$. By a Chernoff bound, we have $|I| \geq (1-\epsilon)n$ with probability $1-\exp(-\Omega(\epsilon n))$. Fix a unit vector $u \in \RR^d$ and define \[A_i = \langle X_i, u \rangle^2 \mathbbm{1}[\norm{X_i}_2^2 \leq 2d/\epsilon]\] for $i \in [n]$. We have that $\EE[A_i] \leq \EE \langle X_i,u\rangle^2 = 1$, and also $A_1,\dots,A_n$ are independent and uniformly bounded by $2d/\epsilon$. Thus, Hoeffding's inequality implies that \[\Pr\left[\frac{1}{n}\sum_{i=1}^n A_i \geq 1+\delta\right] \leq \exp(-2n\delta^2/(2d/\epsilon)^2).\]
Define \[f(u) = \frac{1}{n}\sum_{i\in I} \langle X_i,u\rangle^2 = \frac{1}{n}\sum_{i=1}^n A_i.\]
For any fixed unit vector $u$ we've shown that $f(u) \leq 1+\delta$ with probability $1-\exp(-\Omega(n\delta^2\epsilon^2/d^2))$. Let $\mathcal{N}$ be a net of the unit ball in $\RR^d$ with resolution $\alpha$ and cardinality at most $(3/\alpha)^d$. By a union bound, it holds that $f(u) \leq 1+\delta$ for all $u \in \mathcal{N}$ with probability $1-\exp(d\log(3/\alpha) - \Omega(n\delta^2\epsilon^2/d^2))$. But now \[|f(u)-f(v)| \leq \frac{1}{n}\sum_{i\in I} |\langle X_i,u-v\rangle| \cdot |\langle X_i,u+v\rangle| \leq \sqrt{f(u-v)f(u+v)|}\] for any vectors $u,v$. Define \[M = \norm{\frac{1}{n}\sum_{i \in I} X_i X_i^T}_\text{op} = \max_{\norm{u}=1} f(u).\] Then \[M \leq 1 + \delta + \sqrt{\alpha M \cdot 2M}.\]
Taking $\alpha = \delta^2/2$, we get that $M \leq (1+\delta)/(1-\delta) \leq 1+4\delta$. So long as $n \geq Cd^3\log(3/\delta)/\epsilon^2$ for a large enough constant $C$, it holds with probablity at least $0.99$ that \[\frac{1}{n}\sum_{i \in I} X_i X_i^T \preceq (1+4\delta)I_d\] and moreover $|I| \geq (1-\epsilon)n$. By the latter inequality it also follows that \[\frac{1}{|I|} \sum_{i \in I} X_i X_i^T \preceq \frac{1+4\delta}{1-\epsilon}I_d\] as claimed.
\end{proof}

\begin{lemma}\label{lemma:covariance-lower-concentration}
Let $\epsilon,\tau>0$. Suppose that $X_1,\dots,X_n,X$ are independent and identically distributed with $\EE XX^T = I_d$. Suppose that $\EE\langle u,X\rangle^4 \leq \tau(\EE \langle u,X\rangle^2)^2$ for all $u \in \RR^d$. Suppose that $n \geq Cd\sqrt{\tau}\log(1/(\tau\epsilon))/\epsilon^{3/2}$ for an appropriate absolute constant $C$. Then with probability $0.99$, there is a subset $I \subseteq [n]$ with $|I| \geq (1-\epsilon/100)n$ such that for any $S \subseteq I$ with $|S| \geq (1-\epsilon)n$ it holds that \[\frac{1}{|S|} \sum_{i\in S} X_i X_i^T \succeq (1-7\sqrt{\tau\epsilon})I_d.\]
\end{lemma}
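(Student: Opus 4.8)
The plan is to prove this in the standard ``stability (resilience) of the empirical second moment'' way: truncate the samples to a large subset with bounded norms, record the purely population-level fact that $(4,2,\tau)$-hypercontractivity makes the second moment insensitive to deleting an $\epsilon$-fraction of probability mass, and then transfer this to the empirical measure uniformly over all directions $u$ and all sub-samples $S$. For the truncation, set $R \asymp \tau d/\epsilon$ and $I = \{i : \norm{X_i}_2^2 \le R\}$; since $\EE\norm{X}_2^2 = d$, Markov gives $\Pr[\norm{X}_2^2 > R] \le d/R \ll \epsilon$, so a Chernoff bound yields $|I| \ge (1-\epsilon/100)n$ except with probability $e^{-\Omega(\epsilon n)}$. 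The only distributional input is the elementary consequence of Cauchy--Schwarz, hypercontractivity, and $\EE XX^T = I$ (so $\EE\langle X,u\rangle^2 = 1$): for any event $A$ and unit $u$, $\EE[\langle X,u\rangle^2 \mathbbm{1}_A] \le \sqrt{\EE\langle X,u\rangle^4}\sqrt{\Pr[A]} \le \sqrt{\tau\Pr[A]}$. Taking $A = \{\norm{X}_2^2 > R\}$ gives $\EE[XX^T\mathbbm{1}[\norm{X}_2^2 \le R]] \succeq (1-\sqrt{\epsilon})I$; taking $A = \{\langle X,u\rangle^2 > T\}$ for the threshold $T := \sqrt{\tau/\epsilon}$ (so $\Pr[A] \le \tau/T^2 = \epsilon$) gives $\EE[\langle X,u\rangle^2\mathbbm{1}[\langle X,u\rangle^2 > T]] \le \sqrt{\tau\epsilon}$, and note $\epsilon T = \sqrt{\tau\epsilon}$.

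Next, the deterministic reduction. For $S \subseteq I$ with $|S| \ge (1-\epsilon)n$ we have $|I\setminus S| = |I| - |S| \le \epsilon n$, so for every unit $u$,
\[
\frac{1}{|S|}\sum_{i\in S}\langle X_i,u\rangle^2 \ \ge\ \frac{1}{n}\sum_{i\in I}\langle X_i,u\rangle^2 \ -\ \frac{1}{1-\epsilon}\cdot\frac{1}{n}\sum_{i\in I\setminus S}\langle X_i,u\rangle^2,
\]
and since $I\setminus S$ consists of at most $\epsilon n$ elements of $I$, the last sum is at most the sum of the $\epsilon n$ largest values $\langle X_i,u\rangle^2$ over $i\in I$, which is in turn at most $\sum_{i\in I}\langle X_i,u\rangle^2\mathbbm{1}[\langle X_i,u\rangle^2 > T] + \epsilon n\,T$. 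Hence it suffices to establish, with probability $0.99$ and uniformly over unit $u$: (i) $\frac1n\sum_{i\in I}\langle X_i,u\rangle^2 \ge 1 - O(\sqrt{\tau\epsilon})$, and (ii) $\frac1n\sum_{i\in I}\langle X_i,u\rangle^2\mathbbm{1}[\langle X_i,u\rangle^2 > T] \le O(\sqrt{\tau\epsilon})$; combined with $\epsilon T = \sqrt{\tau\epsilon}$ these give $\frac{1}{|S|}\sum_{i\in S}\langle X_i,u\rangle^2 \ge 1 - 7\sqrt{\tau\epsilon}$ for $\epsilon$ small (using $\tau \ge 1$ to absorb the $1/(1-\epsilon)$ factor and lower-order terms), i.e. the claimed PSD bound, after a union bound over the $O(1)$ good events.

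For the two concentration estimates: in (i), $\frac1n\sum_{i\in I}X_iX_i^T = \frac1n\sum_{i=1}^n X_iX_i^T\mathbbm{1}[\norm{X_i}_2^2 \le R]$ is an average of i.i.d.\ PSD matrices of operator norm $\le R$, with mean $\succeq (1-\sqrt{\epsilon})I$ by the above and matrix variance $\le \norm{\EE[\norm{X}_2^2 XX^T]}_\text{op} \le d\tau$ (Cauchy--Schwarz, using $\EE\norm{X}_2^4 \le d^2\tau$, which follows coordinatewise from hypercontractivity); matrix Bernstein gives deviation $O(\sqrt{d\tau\log d/n}) + O(R\log d/n)$, which is $O(\sqrt{\tau\epsilon})$ precisely when $n \ge Cd\sqrt{\tau}\log(1/(\tau\epsilon))/\epsilon^{3/2}$ --- the ``linear in $R$'' Bernstein term, with $R \asymp \tau d/\epsilon$, is what forces the $\sqrt{\tau}$ and $\epsilon^{-3/2}$. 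In (ii), the quantity is the supremum over unit $u$ of an empirical average over $i\in I$ of $f_u(x) = \langle x,u\rangle^2\mathbbm{1}[\langle x,u\rangle^2 > T]$; on $I$ each $f_u$ takes values in $[0,R]$, has mean $\le \sqrt{\tau\epsilon}$ and variance $\le \EE\langle X,u\rangle^4 \le \tau$, and $\{f_u\}$ is a thresholded quadratic form in $u$, hence has VC/pseudo-dimension $O(d)$. A Bernstein-type uniform-convergence (Talagrand) inequality then bounds the supremum by $\sup_u\EE f_u + O(\sqrt{d\tau\log d/n}) + O(R\log d/n) = O(\sqrt{\tau\epsilon})$ for the same $n$.

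The main obstacle is step (ii): with only a fourth-moment assumption one must control, uniformly over all directions, a truncated tail-mass, and the sample-efficient route is \emph{not} a naive $\epsilon$-net plus Bernstein --- that loses a factor of about $d$, since Bernstein's linear-in-$R$ term gets multiplied by the $\Theta(d)$ log-cardinality of the net --- but a localized / Talagrand empirical-process bound exploiting the $O(d)$ pseudo-dimension of the function class and the smallness of each function's variance. Everything else (the norm truncation, the Cauchy--Schwarz population bounds, matrix Bernstein for (i), and the deterministic ``delete the top $\epsilon n$ coordinates'' reduction) is routine, and the explicit constant $7$ falls out of bookkeeping the constants above.
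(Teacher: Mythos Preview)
Your truncation, the Cauchy--Schwarz population bounds, the deterministic ``delete the top $\epsilon n$'' reduction, and step~(i) via matrix Bernstein are all fine. The gap is in step~(ii). The bound you quote, $\sup_u Pf_u + O(\sqrt{d\tau\log d/n}) + O(R\log d/n)$, is not what Talagrand gives: Talagrand's inequality controls the deviation of $\sup_u(P_n f_u - Pf_u)$ from its \emph{expectation} $\EE\sup_u(P_n f_u - Pf_u)$, and it is this expected supremum (essentially the Rademacher complexity) that carries the pseudo-dimension. For a VC-subgraph class of pseudo-dimension $D \asymp d$, bounded by $B = R$ and with $L_2(P)$ radius $\sqrt{\tau}$, the standard chaining / entropy-integral bound is $\EE\sup \lesssim \sqrt{d\tau\log(\cdot)/n} + Rd\log(\cdot)/n$; the linear term carries a factor $d$, not $\log d$. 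With $R \asymp d/\epsilon$ (or $\tau d/\epsilon$) this forces $n \gtrsim d^2/\epsilon^{3/2}$ up to $\tau$ and log factors --- a full factor of $d$ worse than the lemma claims. So the factor of $d$ you hoped to remove by replacing the net with Talagrand simply migrates from the union-bound exponent into the Rademacher complexity; it does not disappear. (Bounding the Rademacher complexity instead via Ledoux--Talagrand contraction and matrix Khintchine gives $O(\sqrt{R\log d/n})$, which requires $n \gtrsim d/\epsilon^2$ --- also not the target.)

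The paper avoids the tail process~(ii) altogether. For a fixed direction $u$ it works with the direction-dependent quantile $q = q(u)$ satisfying $\Pr[\langle X,u\rangle^2 \ge q] = 4\epsilon$ (so $q \le \sqrt{\tau/(4\epsilon)}$); a Chernoff bound makes $Q(u) = \{i:\langle X_i,u\rangle^2 < q\}$ have size $\le (1-\epsilon)n$, whence deterministically $\sum_{i\in S}\langle X_i,u\rangle^2 \ge \sum_{i\in Q}\langle X_i,u\rangle^2$ for \emph{every} $S$ of size $\ge (1-\epsilon)n$. The right-hand side is a sum of i.i.d.\ terms bounded by $q$ (not by $R$), so a single scalar Bernstein at deviation $\sqrt{\tau\epsilon}$ costs only $\exp(-\Omega(n\epsilon^{3/2}/\sqrt{\tau}))$ per direction. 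The crucial point is that the resulting per-direction conclusion, ``$\sum_{i\in S}\langle X_i,u\rangle^2 \ge (1-3\sqrt{\tau\epsilon})n$ for all $S$,'' involves no indicator and is therefore $O(n)$-Lipschitz in $u$ whenever $S \subseteq I$ (using only the covariance upper bound on $I$ from the preceding lemma). Hence a coarse $\sqrt{\tau\epsilon}$-net of cardinality $e^{O(d\log(1/\tau\epsilon))}$ suffices, and this is precisely where the argument stays linear in $d$.
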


\begin{proof}
Let $I \subseteq [n]$ be the subset guaranteed by Lemma~\ref{lemma:covariance-upper-concentration}, with the properties that $|I| \geq (1-\epsilon/100)n$ and $\frac{1}{|I|}\sum_{i\in I} X_iX_i^T \preceq (1+\epsilon)I_d$.

Fix a unit vector $u \in \RR^d$. Let $q$ be such that $\Pr(\langle X_i, u \rangle^2 \geq q) = 4\epsilon$. Define $B_i = \mathbbm{1}[\langle X_i,u\rangle^2 \geq q]$. By a Chernoff bound, it holds with probability $1 - \exp(-\Omega(\epsilon n))$ that $\sum_{i=1}^n B_i \geq \epsilon n$. Thus the size of the set $Q=\{i\in [n]: \mathbbm{1}[\langle X_i,u\rangle^2 < q]\}$ is at most $(1-\epsilon)n$. As a result, any $S \subseteq [n]$ with $|S| \geq (1-\epsilon)n$, must either contain all elements from the set $Q$ or elements from its complement, whose values $\langle X_i,u\rangle^2$ dominate the value of any element in $Q$. More formally: note that $|S\cap Q| + |Q-S| = |Q| \leq |S| = |S\cap Q| + |S\cap Q^c| \implies |Q-S| \leq |S\cap Q^c|$. Since every element in $S\cap Q^c$ has value $\langle X_i,u\rangle^2$ larger than any element in $Q-S$, we thus have: $\sum_{i\in S\cap Q^c} \langle X_i,u\rangle^2 \geq \sum_{i\in Q-S} \langle X_i,u\rangle^2$. Thus, it holds that \[\sum_{i\in S} \langle X_i, u\rangle^2 \geq \sum_{i\in Q} \langle X_i, u\rangle^2 = \sum_{i=1}^n \langle X_i, u \rangle^2 \mathbbm{1}[\langle X_i, u\rangle^2 < q].\]
Next, note that $\langle X_i, u\rangle^2\mathbbm{1}[\langle X_i,u\rangle^2 < q]$ is bounded by $q^2$. Since \[4\epsilon = \Pr(\langle X,u\rangle^2 \geq q) \leq q^{-2} \EE\langle X,u\rangle^4 \leq q^{-2}\tau \left(\EE \langle X,u\rangle^2\right)^2 \leq \tau/q^2,\] we have that $q^2 \leq \tau/(4\epsilon)$. Therefore by Bernstein's inequality, with probability \[1- \exp\left(-\Omega(n \frac{\tau \epsilon}{\EE[\langle X_i, u\rangle^4] + q^2 \sqrt{\tau \epsilon}})\right)= 1- \exp\left(-\Omega(n \frac{\tau \epsilon}{\tau + \frac{\tau}{4\epsilon} \sqrt{\tau \epsilon}})\right)= 1- \exp\left(-\Omega(n \frac{\epsilon^{3/2}}{\sqrt{\tau}})\right)\]
we have that \[\frac{1}{n}\sum_{i=1}^n \langle X_i,u\rangle^2 \mathbbm{1}[\langle X_i,u\rangle^2 < q] \geq \EE \langle X,u\rangle^2 \mathbbm{1}[\langle X,u\rangle^2 < q] - \sqrt{\tau\epsilon}.\]
But now
\begin{align*}
\EE \langle X,u\rangle^2 \mathbbm{1}[\langle X,u\rangle^2 < q]
&= \EE \langle X,u\rangle^2 - \EE \langle X,u\rangle^2 \mathbbm{1}[\langle X,u\rangle^2 \geq q] \\
&\geq 1 - \sqrt{\EE \langle X,u\rangle^4 \Pr(\langle X,u\rangle^2 \geq q)} \\
&\geq 1 - \sqrt{4\tau\epsilon}.
\end{align*}
Thus, with probability $1-\exp(-\Omega(\epsilon n)) - \exp(-\Omega(n\epsilon^{3/2}/\sqrt{\tau}))$, for all $S \subseteq [n]$ with $|S| \geq (1-\epsilon)n$, we have that \[\sum_{i\in S} \langle X_i,u\rangle^2 \geq (1 - 3\sqrt{\tau\epsilon})n.\]
Assume moreover that $S \subseteq I$. Define $f(u) = \sum_{i\in S} \langle X_i,u\rangle^2$. Then for any vectors $u,v$, we have by Cauchy-Schwarz that \[|f(u)-f(v)| = \sum_{i\in S} \langle X_i,u\rangle^2 - \langle X_i,v\rangle^2 \leq \sqrt{f(u-v)f(u+v)}.\]
Since $S\subseteq I$ we have that $\sum_{i\in S} X_iX_i^T \preceq 2nI_d$. So \[|f(u)-f(v)| \leq 2n \norm{u-v}_2 \norm{u+v}_2.\]
Fix a net on the unit sphere in $\RR^d$, with resolution $\sqrt{\tau\epsilon}$ and cardinality $(O(1)/\sqrt{\tau\epsilon})^d$. Then with probability $1 - \exp(O(d\log(1/(\tau\epsilon))) - \Omega(n\epsilon^{3/2}/\sqrt{\tau}))$ the lower bound holds for all $u$ in the net and all $S \subseteq I$ of size $|S| \geq (1-\epsilon)n$. As a result, for any unit vector $v \in \RR^d$ and any such $S$, it holds that \[\sum_{i \in S} \langle X_i,u\rangle^2 \geq (1-3\sqrt{\tau\epsilon})n - 4n\sqrt{\tau\epsilon}.\]
We conclude that \[\frac{1}{|S|}\sum_{i \in S} X_i X_i^T \succeq (1-7\sqrt{\tau\epsilon})I_d.\]
So long as $n \geq Cd\sqrt{\tau}\log(1/(\tau\epsilon))/\epsilon^{3/2}$ for a sufficiently large constant $C$, this holds with probability at least $0.99$ as claimed.
\end{proof}

\begin{corollary}\label{corollary:empirical-covariance}
Let $\epsilon,\tau>0$ be sufficiently small. Suppose that $X_1,\dots,X_n$ are independent and identically distributed $d$-dimensional random vectors, with positive-definite covariance $\EE XX^T = \Sigma$. Suppose that $\EE \langle u,X\rangle^4 \leq \tau(\EE\langle u,X\rangle^2)^2$ for all $u$. Suppose that $n \geq Cd^3\sqrt{\tau}\log(1/\tau\epsilon)/\epsilon^2$ for a large constant $C$. Then with probability $0.99$ there is a subset $I \subseteq [n]$ with $|I| \geq (1-\epsilon/100)n$ such that for every subset $S \subseteq I$ with $|S| \geq (1-\epsilon)n$, it holds that \[(1-O(\sqrt{\tau\epsilon}))\Sigma \preceq \frac{1}{|S|} \sum_{i\in S} X_iX_i^T \preceq (1+O(\epsilon))\Sigma.\]
\end{corollary}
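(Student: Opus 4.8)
The plan is to whiten the samples and reduce to the two isotropic concentration lemmas already established. Since $\Sigma \succ 0$, set $Y_i := \Sigma^{-1/2} X_i$, so that $\EE Y_i Y_i^T = I_d$ and $\frac{1}{|S|}\sum_{i \in S} X_i X_i^T = \Sigma^{1/2}\bigl(\frac{1}{|S|}\sum_{i\in S} Y_i Y_i^T\bigr)\Sigma^{1/2}$ for every $S$. Hypercontractivity is preserved exactly: for any $u$, $\langle u, Y_i\rangle = \langle \Sigma^{-1/2}u, X_i\rangle$, so $\EE\langle u,Y_i\rangle^2 = \norm{u}_2^2$ and $\EE\langle u,Y_i\rangle^4 = \EE\langle\Sigma^{-1/2}u, X_i\rangle^4 \le \tau(\EE\langle\Sigma^{-1/2}u,X_i\rangle^2)^2 = \tau\norm{u}_2^4 = \tau(\EE\langle u,Y_i\rangle^2)^2$. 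Thus the $Y_i$ satisfy the hypotheses of both Lemma~\ref{lemma:covariance-upper-concentration} and Lemma~\ref{lemma:covariance-lower-concentration}, and since conjugation by $\Sigma^{1/2}$ preserves the Loewner order, it suffices to produce $I \subseteq [n]$ with $|I| \ge (1-\epsilon/100)n$ such that $(1-O(\sqrt{\tau\epsilon}))I_d \preceq \frac{1}{|S|}\sum_{i\in S}Y_iY_i^T \preceq (1+O(\epsilon))I_d$ for all $S \subseteq I$ with $|S| \ge (1-\epsilon)n$.

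For the lower bound, apply Lemma~\ref{lemma:covariance-lower-concentration} to the $Y_i$ with parameter $\epsilon$: its sample-size requirement $n \gtrsim d\sqrt{\tau}\log(1/\tau\epsilon)/\epsilon^{3/2}$ is implied by the hypothesis, and it yields, with probability $0.99$, a set $I_2$ with $|I_2| \ge (1-\epsilon/100)n$ such that $\frac{1}{|S|}\sum_{i\in S}Y_iY_i^T \succeq (1-7\sqrt{\tau\epsilon})I_d$ for every $S \subseteq I_2$ with $|S| \ge (1-\epsilon)n$; note this lemma already quantifies over all such $S$ via its net argument, so no additional work is needed. For the upper bound, apply Lemma~\ref{lemma:covariance-upper-concentration} to the $Y_i$ with discard parameter $\epsilon/100$ and accuracy $\delta = \epsilon$ (the requirement $n \gtrsim d^3\log(1/\epsilon)/\epsilon^2$ is again implied), obtaining a set $I_1$ with $|I_1| \ge (1-\epsilon/100)n$ and $\frac{1}{n}\sum_{i\in I_1}Y_iY_i^T \preceq (1+\epsilon)I_d$. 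For any $S \subseteq I_1$ with $|S| \ge (1-\epsilon)n$, dropping the rank-one terms indexed by $I_1 \setminus S$ only decreases the sum, so $\sum_{i\in S}Y_iY_i^T \preceq \sum_{i\in I_1}Y_iY_i^T \preceq (1+\epsilon)n\,I_d$, whence $\frac{1}{|S|}\sum_{i\in S}Y_iY_i^T \preceq \frac{1+\epsilon}{1-\epsilon}I_d = (1+O(\epsilon))I_d$.

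Finally, take $I := I_1 \cap I_2$. By a union bound both events hold with the stated probability (adjusting the constant $C$ to absorb the two $0.01$'s and the loss from intersecting — equivalently, one may observe that the set produced by Lemma~\ref{lemma:covariance-lower-concentration} is literally the set produced internally by Lemma~\ref{lemma:covariance-upper-concentration}, so both bounds already hold on a common set of size $(1-\epsilon/100)n$). For any $S \subseteq I$ with $|S| \ge (1-\epsilon)n$ we have $S \subseteq I_1$ and $S \subseteq I_2$, so both displayed bounds on $\frac{1}{|S|}\sum_{i\in S}Y_iY_i^T$ apply; conjugating by $\Sigma^{1/2}$ gives $(1-O(\sqrt{\tau\epsilon}))\Sigma \preceq \frac{1}{|S|}\sum_{i\in S}X_iX_i^T \preceq (1+O(\epsilon))\Sigma$, as claimed. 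There is no genuinely hard step here: the only points requiring care are (i) that whitening preserves hypercontractivity exactly, (ii) that the corollary's sample-size bound dominates both underlying requirements in the regime $\tau\epsilon \ll 1$ where the conclusion is non-vacuous, and (iii) bookkeeping the $\epsilon/100$-type constants so the intersection remains large enough — the asymmetry between the $\sqrt{\tau\epsilon}$ loss below and the $\epsilon$ loss above is inherited verbatim from the two isotropic lemmas.
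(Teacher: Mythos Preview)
Your proposal is correct and follows essentially the same approach as the paper: whiten by $\Sigma^{-1/2}$, note that hypercontractivity is preserved, apply Lemma~\ref{lemma:covariance-lower-concentration} for the lower bound (which already quantifies over all large $S$), and apply Lemma~\ref{lemma:covariance-upper-concentration} for the upper bound together with the observation that removing PSD summands only decreases the matrix, so the bound for $I$ transfers to every large $S\subseteq I$ at a $1+O(\epsilon)$ cost. Your write-up is more detailed than the paper's, and your parenthetical remark that the set $I$ produced inside Lemma~\ref{lemma:covariance-lower-concentration} coincides with that of Lemma~\ref{lemma:covariance-upper-concentration} (so no intersection is really needed) is a nice tightening.
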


\begin{proof}
We apply Lemmas~\ref{lemma:covariance-upper-concentration} and~\ref{lemma:covariance-lower-concentration} to $\Sigma^{-1/2}X_1,\dots,\Sigma^{-1/2}X_n$. For the upper bound, we observe that if it holds for $I$ then it holds for every large subset $S$ with only an additional factor of $1+O(\epsilon)$. For the lower bound, we note that hypercontractivity is preserved under this linear transformation.
\end{proof}

\begin{lemma}\label{lemma:mean-zero-concentration}
Let $\epsilon,\sigma>0$. Let $X_1,\dots,X_n,X$ be i.i.d. $d$-dimensional random vectors with $\EE X = 0$ and $\EE XX^T \preceq \sigma^2 I$. If $n \geq C (d^{3/2}/\epsilon)\log(d)$ for a sufficiently large constant $C$, then with probability at least $0.99$, there is a subset $I \subseteq [n]$ with $|I| \geq (1-\epsilon/100)n$ such that for every $S \subseteq I$ with $|S| \geq (1-\epsilon)n$, it holds that $\norm{\EE_S X}_2 \leq O(\sigma\sqrt{\epsilon})$.
\end{lemma}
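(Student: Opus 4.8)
The plan is to reduce the ``stability for every large subset'' statement to a single truncation step that pins down $I$, followed by two concentration estimates. First I would rescale so that $\sigma=1$, i.e.\ $\EE XX^T\preceq I$, and set $B^2:=200d/\epsilon$, $I:=\{i\in[n]:\norm{X_i}_2\le B\}$. Since $\EE\norm{X}_2^2=\tr(\EE XX^T)\le d$, Markov gives $\Pr[\norm{X}_2>B]\le\epsilon/200$, so $\EE[\,n-|I|\,]\le\epsilon n/200$ and a Chernoff bound yields $|I|\ge(1-\epsilon/100)n$ except with probability $e^{-\Omega(\epsilon n)}$. Putting $Y_i:=X_i\mathbbm{1}[i\in I]$, we have $\norm{Y_i}_2\le B$, $\EE[Y_iY_i^T]\preceq I$, and (using mean-zeroness) $\norm{\EE Y}_2=\norm{\EE[X\,\mathbbm{1}[\norm{X}_2>B]]}_2\le\sqrt{\Pr[\norm{X}_2>B]}=O(\sqrt\epsilon)$; crucially $Y_i=X_i$ on $I$ and $Y_i=0$ off $I$.

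\textbf{Reduction.} For any $S\subseteq I$ with $|S|\ge(1-\epsilon)n$, since $S\subseteq I$ we have $\EE_S X=\tfrac1{|S|}\sum_{i\in S}Y_i$, and since $Y_i=0$ off $I$, $\sum_{i\in S}Y_i=\sum_{i\in[n]}Y_i-\sum_{i\in I\setminus S}Y_i$ with $|I\setminus S|=|I|-|S|\le\epsilon n$. Hence
\[\norm{\EE_S X}_2\le\frac1{(1-\epsilon)n}\Big(\big\|\textstyle\sum_{i\in[n]}Y_i\big\|_2\;+\;\max_{T\subseteq[n],\,|T|\le\epsilon n}\big\|\textstyle\sum_{i\in T}Y_i\big\|_2\Big),\]
so it suffices to bound each term on the right by $O(\sqrt\epsilon)\,n$ with high probability, simultaneously over all such $T$.

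\textbf{Full-sample term.} Split $\sum_i Y_i=n\,\EE Y+\sum_i(Y_i-\EE Y)$; the first piece has norm $O(\sqrt\epsilon)n$ by the bias bound. For the second, fix a unit vector $v$: $v\cdot(Y_i-\EE Y)$ has magnitude $\le 2B$ and variance $\le\EE[(v\cdot Y)^2]\le1$, so Bernstein gives $\Pr[|\tfrac1n\sum_i v\cdot(Y_i-\EE Y)|>\sqrt\epsilon]\le2\exp(-\Omega(n\epsilon/(1+B\sqrt\epsilon)))=2\exp(-\Omega(n\epsilon/\sqrt d))$. Since $\norm{\tfrac1n\sum_i Y_i}_2\le B$ deterministically, a union bound over an $\eta$-net of the sphere with $\eta\asymp\sqrt\epsilon/B$ (resolution chosen so the discretization error is $O(\sqrt\epsilon)$) and cardinality $(3/\eta)^d$ gives $\norm{\sum_i(Y_i-\EE Y)}_2=O(\sqrt\epsilon)n$ provided $n\epsilon/\sqrt d\gtrsim d\log(d/\epsilon)$, i.e.\ $n\gtrsim d^{3/2}\log d/\epsilon$.

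\textbf{Removed-samples term and conclusion.} For $|T|\le\epsilon n$, Cauchy--Schwarz gives $\norm{\sum_{i\in T}Y_i}_2\le\sqrt{|T|}\,\sqrt{\norm{\sum_{i\in T}Y_iY_i^T}_\text{op}}\le\sqrt{\epsilon n}\,\sqrt{\Lambda}$ where $\Lambda:=\max_{|T|\le\epsilon n}\norm{\sum_{i\in T}Y_iY_i^T}_\text{op}$. I would bound $\Lambda$ by splitting each summand according to whether $\norm{Y_i}_2^2\le1/\epsilon$: the ``light'' part is $\preceq\epsilon n\cdot(1/\epsilon)I=nI$, and the ``heavy'' part is dominated, uniformly over $T$, by $A:=\sum_{i\in[n]}Y_iY_i^T\mathbbm{1}[\norm{Y_i}_2^2>1/\epsilon]$, which satisfies $\EE A\preceq n\,\EE[YY^T]\preceq nI$, $\norm{Y_iY_i^T\mathbbm{1}[\cdot]}_\text{op}\le B^2=O(d/\epsilon)$, and $\EE[(Y_iY_i^T\mathbbm{1}[\cdot])^2]\preceq B^2\EE[YY^T]\preceq B^2 I$; matrix Bernstein then gives $\norm{A-\EE A}_\text{op}\le n$ once $n\gtrsim d\log d/\epsilon$, so $\Lambda\le3n$ and the term is $O(\sqrt\epsilon)n$. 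Combining the three estimates and a union bound over the (constantly many) failure events gives $\norm{\EE_S X}_2=O(\sqrt\epsilon)$ for every admissible $S$ at once; undoing the rescaling yields $O(\sigma\sqrt\epsilon)$.

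\textbf{Main obstacle.} The delicate step is the full-sample term: with only a covariance bound (no higher moments) the per-direction Bernstein tail is merely $e^{-\Omega(n\epsilon/\sqrt d)}$ after the norm truncation at scale $B\asymp\sqrt{d/\epsilon}$, and this must beat the $e^{O(d\log(1/\eta))}$ cost of a sphere net whose resolution is itself forced down to $\asymp\sqrt\epsilon/B$ — it is exactly this trade-off that produces the $d^{3/2}\log d/\epsilon$ sample complexity. The second-order bound $\Lambda=O(n)$ is the other place where the absence of higher moments bites, since one cannot concentrate the heavy part cheaply in all directions; the second truncation at scale $1/\epsilon$ together with matrix Bernstein (which costs only a $\log d$, not a $d$) is what keeps that step within budget. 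Everything else is routine bookkeeping with constants.
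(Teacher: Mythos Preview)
Your argument is correct. The overall architecture matches the paper's proof: define $I$ by the same norm truncation at level $B^2\asymp d/\epsilon$, establish $|I|\ge(1-\epsilon/100)n$ by Chernoff, control the truncation bias by Cauchy--Schwarz, and control the full-sample mean of the truncated vectors by scalar Bernstein plus a net over the sphere. The two proofs diverge only in how the ``uniformity over all large $S\subseteq I$'' is obtained.

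The paper proceeds more abstractly: it first bounds $\norm{\frac1n\sum_{i\in I}X_iX_i^T}_\text{op}\le O(\sigma^2)$ by a direct matrix Chernoff application (no light/heavy split), deduces $\Cov_I(X)\preceq O(\sigma^2)I$, and then invokes its Lemma~\ref{lemma:key} (the TV--covariance stability lemma) to pass from $I$ to every large $S\subseteq I$ in one stroke. Your route instead writes $\EE_S X$ explicitly as the full-sample mean minus a ``removed'' piece, and bounds the removed piece by Cauchy--Schwarz together with a bound on $\Lambda=\max_{|T|\le\epsilon n}\norm{\sum_{i\in T}Y_iY_i^T}_\text{op}$. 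Your approach is self-contained and avoids appealing to Lemma~\ref{lemma:key}; the paper's is shorter once that lemma is available (and the lemma is reused elsewhere). Two minor simplifications of your version: (i) the light/heavy split is unnecessary, since $\sum_{i\in T}Y_iY_i^T\preceq\sum_{i\in[n]}Y_iY_i^T$ (all summands are PSD), so a single matrix Chernoff/Bernstein bound on the full sum already gives $\Lambda=O(n)$; and (ii) for the scalar-Bernstein net you can take constant resolution $\eta=1/100$ and use the standard $\norm{v}_2\le(1-\eta)^{-1}\sup_{u\in\text{net}}|u\cdot v|$ trick, as the paper does, which keeps the net size at $e^{O(d)}$ and removes the spurious $\log(1/\epsilon)$ factor in your sample-complexity accounting.
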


\begin{proof}
Since $\EE\norm{X}_2^2 \leq \sigma^2 d$, we have that $\Pr[\norm{X}_2^2 \geq 200\sigma^2d/\epsilon] \leq \epsilon/200$. Define $I = \{i \in [n]: \norm{X_i}_2^2 \leq 200\sigma^2d/\epsilon\}$. By a Chernoff bound, we have $|I| \geq (1-\epsilon/100)n$ with probability $1-\exp(-\Omega(\epsilon n))$. Now \[\EE XX^T \mathbbm{1}[\norm{X}_2^2 \geq 200\sigma^2 d/\epsilon] \preceq \EE XX^T \preceq \sigma^2 I,\]
and the random variables $X_i X_i^T \mathbbm{1}[\norm{X_i}_2^2 \geq 200\sigma^2d/\epsilon]$ are independent and bounded in operator norm by $200\sigma^2 d/\epsilon$. Thus, we can apply the Matrix Chernoff bound \cite{tropp2015introduction} to get
\begin{equation}\Pr\left[\frac{1}{n}\sum_{i \in I} X_i X_i^T \preceq 2e\sigma^2 I\right] \geq 1 - d\exp(-2e\sigma^2 n (\epsilon / 200\sigma^2 d) \log(2)) \geq 0.999 \label{eq:matrix-chernoff}\end{equation}
so long as $n \geq C(d/\epsilon)\log(d)$ for a sufficiently large constant $C$. Moreover, for any unit vector $v$,
\begin{align*}
\EE v^T X \mathbbm{1}[\norm{X}_2^2 \leq 200\sigma^2 d/\epsilon]
&= -\EE v^T X \mathbbm{1}[\norm{X}_2^2 > 200\sigma^2 d/\epsilon] \\
&\leq \sqrt{\EE(v^T X)^2 \Pr(\norm{X}_2^2 > 200\sigma^2 d/\epsilon)} \\
&\leq \sigma \sqrt{\epsilon}.
\end{align*}

Since $X\mathbbm{1}[\norm{X}_2^2 \leq 200\sigma^2 d/\epsilon]$ is bounded in norm by $\sigma \sqrt{200d/\epsilon}$, a Bernstein bound implies that for any unit vector $v$,
\begin{align*}
\Pr\left(\left|v \cdot \frac{1}{n}\sum_{i=1}^n X_i\mathbbm{1}[\norm{X_i}_2^2 \leq 200\sigma^2d/\epsilon]\right| > 1.5\sigma \sqrt{\epsilon}\right) \leq~& \exp\left(-\Omega\left(\frac{n\sigma^2 \epsilon}{\EE[(v^T X)^2] +  (\sigma\sqrt{d/\epsilon})(\sigma\sqrt{\epsilon})}\right)\right).\\
\leq~& \exp\left(-\Omega\left(\frac{n\sigma^2 \epsilon}{\sigma^2 +  \sigma^2 \sqrt{d}}\right)\right).
\end{align*}
Take a net over unit vectors in $\RR^d$ of granularity $1/100$ and cardinality $\exp(O(d))$. Then the above inequality holds for all $v$ in the net, with probability $\exp(O(d) - \Omega(n\epsilon/(\sqrt{d}))$, which is at least $0.999$ if $n \geq Cd^{3/2}/\epsilon$ for an appropriate constant $C$.

Let $N$ denote the aforementioned net of the unit ball in $\RR^d$. We have that in the aforementioned event:
\begin{align*}
    \norm{\frac{1}{n}\sum_{i=1}^n X_i\mathbbm{1}[\norm{X_i}_2^2 \leq 200\sigma^2d/\epsilon]}_2 =~& \sup_{w\in \RR^d: \norm{w}_2=1} \left|w \cdot \frac{1}{n}\sum_{i=1}^n X_i\mathbbm{1}[\norm{X_i}_2^2 \leq 200\sigma^2d/\epsilon]\right|\\ 
    \leq~& \sup_{v\in N} \left|w \cdot \frac{1}{n}\sum_{i=1}^n X_i\mathbbm{1}[\norm{X_i}_2^2 \leq 200\sigma^2d/\epsilon]\right|\\
    &~~ + \frac{1}{100} \norm{\frac{1}{n}\sum_{i=1}^n X_i\mathbbm{1}[\norm{X_i}_2^2 \leq 200\sigma^2d/\epsilon]}_2
\end{align*}
Re-arranging yields:
\begin{align*}
    \norm{\frac{1}{n}\sum_{i=1}^n X_i\mathbbm{1}[\norm{X_i}_2^2 \leq 200\sigma^2d/\epsilon]}_2 \leq \frac{100}{99} 1.5 \sigma \sqrt{\epsilon} \leq 2\sigma \sqrt{\epsilon}
\end{align*}


Therefore \begin{equation}\Pr\left(\norm{\frac{1}{n}\sum_{i=1}^n X_i\mathbbm{1}[\norm{X_i}_2^2 \leq 200\sigma^2d/\epsilon]}_2 \leq 2\sigma\sqrt{\epsilon}\right) \geq 0.999\label{eq:trunc-norm}\end{equation}
In the intersection of the above events described by Equations~\ref{eq:matrix-chernoff} and~\ref{eq:trunc-norm}, and the event that $|I| \geq (1-\epsilon/100)n$, which together occur with probability at least $0.99$, we get that $\Cov_I(X) \preceq 4e\sigma^2 I$ and $\norm{\EE_I X}_2 \leq 4\sigma\sqrt{\epsilon}$. By Lemma~\ref{lemma:key}, for any $S \subseteq I$ with $|S| \geq (1-\epsilon)n$, it holds that $\norm{\EE_I X - \EE_S X}_2 \leq O(\sigma \sqrt{\epsilon})$ so in fact $\norm{\EE_S X}_2 \leq O(\sigma\sqrt{\epsilon})$.
\end{proof}

\begin{lemma}\label{lemma:fourth-moment-concentration}
Let $\epsilon>0$. Let $X_1,\dots,X_n,X$ be independent and identically distributed $d$-dimensional random vectors with $\EE \langle X,u\rangle^4 \leq \norm{u}_2^4$ for all $u \in \RR^d$ and coordinate-wise bounded $8$-th moments, i.e. $\max_{i=1}^d \EE X_i^8 \leq C_8$. Suppose that $n \geq Cd^5\log(d/\epsilon)/\epsilon^2$ for a sufficiently large constant $C$. With probability at least $0.99$, there is a set $I \subseteq [n]$ with $|I| \geq (1-\epsilon)n$ such that \[\frac{1}{|I|} \sum_{i \in I} \langle X_i, u \rangle^4 \leq c\norm{u}_2^4\] for all $u \in \RR^p$ and an absolute constant $c$.
\end{lemma}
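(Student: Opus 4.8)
The plan is a standard truncate‑then‑net argument: discard the heavy samples to form the set $I$, and then union bound over a net of the sphere for the $4$‑homogeneous empirical form $f(u):=\frac1n\sum_{i\in I}\langle X_i,u\rangle^4$. First I would bound $\EE\|X\|_2^2$‑type moments: taking $u=e_i$ in the hypothesis gives $\EE X_i^4\le 1$, so by Cauchy–Schwarz $\EE\|X\|_2^4\le d\sum_i\EE X_i^4\le d^2$ (and the coordinate‑wise eighth‑moment hypothesis analogously yields $\EE\|X\|_2^8\le C_8 d^4$ via $(\sum_i X_i^2)^4\le d^3\sum_i X_i^8$, which gives a tighter truncation radius if desired). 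By Markov, $\Pr[\|X\|_2^2>R]\le\epsilon/2$ for $R:=\sqrt2\,d/\sqrt\epsilon$, so with $I:=\{i:\|X_i\|_2^2\le R\}$ a Chernoff bound gives $|I|\ge(1-\epsilon)n$ with probability $1-e^{-\Omega(\epsilon n)}$. Since $\frac{1}{|I|}\sum_{i\in I}\langle X_i,u\rangle^4\le(1-\epsilon)^{-1}f(u)$ and $f$ is $4$‑homogeneous, it suffices to show $f(u)\le O(1)$ for all unit $u$ with high probability.

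For a fixed unit vector $u$, set $A_i:=\langle X_i,u\rangle^4\,\mathbbm{1}[\|X_i\|_2^2\le R]$, so that $0\le A_i\le R^2$, $\EE A_i\le\EE\langle X,u\rangle^4\le 1$, and $f(u)\le\frac1n\sum_{i=1}^n A_i$ (the summands outside $I$ vanish). Hoeffding's inequality gives $\Pr[\frac1n\sum_i A_i\ge 2]\le\exp(-2n/R^4)=\exp(-\Omega(n\epsilon^2/d^4))$. To make this uniform over the sphere I would use that $f$ is Lipschitz there: factoring $\langle X_i,u\rangle^4-\langle X_i,v\rangle^4=\langle X_i,u-v\rangle\langle X_i,u+v\rangle(\langle X_i,u\rangle^2+\langle X_i,v\rangle^2)$ and using $\|X_i\|_2^2\le R$ on $I$ gives $|f(u)-f(v)|\le 4R^2\|u-v\|_2$. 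Hence for an $\alpha$‑net $\mathcal N$ of the unit sphere with $\alpha=1/(8R^2)$ and $|\mathcal N|\le(3/\alpha)^d=e^{O(d\log(d/\epsilon))}$ we get $\sup_{\|u\|=1}f(u)\le\max_{v\in\mathcal N}f(v)+\tfrac12$; a union bound of the Hoeffding estimate over $\mathcal N$, together with $n\ge Cd^5\log(d/\epsilon)/\epsilon^2$, shows $\max_{v\in\mathcal N}f(v)\le 2$ with probability $\ge 0.99$. Combining, $f(u)\le\tfrac52$ for all unit $u$, hence $\frac{1}{|I|}\sum_{i\in I}\langle X_i,u\rangle^4\le c$ for an absolute $c$, and the general statement follows by homogeneity.

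The only non‑routine point is the net argument for a degree‑$4$ form: one must pick the truncation radius $R$ so that simultaneously (i) only an $\epsilon$‑fraction of the mass is discarded, (ii) the per‑vector Hoeffding exponent $n/R^4$ dominates the net log‑cardinality $O(d\log(1/\alpha))$ with $\alpha\asymp 1/R^2$ — and it is precisely this balance that forces the $d^5\log(d/\epsilon)/\epsilon^2$ sample bound. (Using a Bernstein bound with the directional eighth‑moment estimate $\EE\langle X,u\rangle^8\le C_8 d^4$, obtained from the coordinate‑wise bounds via $(\sum_j|u_j|)^8\le d^4$, or the sharper truncation radius from $\EE\|X\|_2^8$, would improve the polynomial factors; since we do not optimize these, Hoeffding suffices, and the eighth‑moment hypothesis is invoked only to control the tail of $\|X\|_2$.) Everything else — the Chernoff bound on $|I|$, the Lipschitz estimate, and the passage from the sphere to all of $\RR^d$ — is routine.
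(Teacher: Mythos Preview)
Your proposal is correct and follows essentially the same truncate--then--net strategy as the paper: discard the heavy samples to define $I$, apply a concentration inequality for a fixed direction, and union bound over a net of the sphere with a Lipschitz estimate on the degree-$4$ form. The only notable variation is that you truncate at the tighter level $R=\sqrt{2}\,d/\sqrt{\epsilon}$ (via Markov on $\EE\|X\|_2^4\le d^2$) and then apply Hoeffding, whereas the paper truncates at $2d/\epsilon$ (via Markov on $\EE\|X\|_2^2$) and uses Bernstein together with the bound $\EE\langle X,u\rangle^8\le\EE\|X\|_2^8\le C_8 d^4$ coming from the coordinate-wise eighth moments. Your version is slightly cleaner in that the eighth-moment hypothesis is not actually needed for your argument (your parenthetical remark to the contrary notwithstanding---you only used the directional fourth-moment bound to control the tail of $\|X\|_2$); both routes land on the same $d^5\log(d/\epsilon)/\epsilon^2$ sample requirement.
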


\begin{proof}
Since $\EE \norm{X}_2^2 = \text{Tr}(\EE XX^T) \leq d$ (since $(\EE \langle u,X\rangle^2)^2 \leq \EE \langle X,u\rangle^4 \leq 1$ for any unit vector $u$), we have that
\[\Pr[\norm{X}_2^2 \geq 2d/\epsilon] \leq \epsilon/2.\]
By a Chernoff bound, we have that $|\{i \in [n]: \norm{X_i}_2^2 \geq 2d/\epsilon\}| \leq \epsilon n$ with probability at least $1 - \exp(-\Omega(\epsilon n))$. Now fix a unit vector $u \in \RR^d$ and define \[A_i = \langle X_i, u \rangle^4 \mathbbm{1}[\norm{X_i}_2^2 \leq 2d/\epsilon]\] for $i \in [n]$. We have that \[\EE[A_i] \leq \EE \langle X_i, u \rangle^4 \leq 1\]
and also $A_1,\dots,A_n$ are independent and uniformly bounded by $(2d/\epsilon)^2$. Thus, the Bernstein bound implies that 

\[\Pr\left[\frac{1}{n}\sum_{i=1}^n A_i > 2 \right] \leq \exp\left(-c_1\frac{n}{\EE\langle X, u\rangle^8 + (2d/\epsilon)^2}\right).\]
for some universal constant $c_1$. Note that:
\begin{align*}
    \EE\langle X, u\rangle^8 &\leq \EE\norm{X}_2^8 = \EE\left(\sum_{i=1}^d X_i^2\right)^4 = d^4 \EE\left(\frac{1}{d} \sum_i X_i^2 \right)^4 \\
    &\leq d^4 \EE \frac{1}{d} \sum_i X_i^8 \leq d^4 \max_{i=1}^d \EE X_i^8 \leq d^4 C
\end{align*}
Thus:
\[\Pr\left[\frac{1}{n}\sum_{i=1}^n A_i > 2 \right] \leq \exp\left(-c_1\frac{n}{C d^4 + (2d/\epsilon)^2}\right).\]

Take $I = \{i: \norm{X_i}_2^2 \leq 2d/\epsilon\}$. For any fixed unit vector $u \in \RR^d$, it holds that $\frac{1}{n}\sum_{i \in I} \langle X_i,u\rangle^4 \leq 2$ with probability $\exp(-\Omega(n/(d^4/\epsilon^2)))$. Take $\delta = \epsilon^2/d^2$. We can union bound over a $\delta$-net of the unit ball in $\RR^d$, which has cardinality at most $(3/\delta)^d$, and note that \[\left|\frac{1}{n}\sum_{i \in I}\langle X_i, u \rangle^4 - \frac{1}{n} \sum_{i \in I} \langle X_i, v \rangle^4\right| \leq C(2d/\epsilon)^2 \norm{u-v}_2,\] so in fact it holds that \[\frac{1}{n}\sum_{i \in I} \langle X_i, u \rangle^4 \leq 2 + C(2d/\epsilon)^2 \delta \leq C'\] for all unit vectors $u \in \RR^d$, with probability \[1-\exp\left(O(d\log(d/\epsilon)) - \Omega\left(\frac{n}{C d^4 + (d/\epsilon)^2}\right)\right) \geq 0.999\] since $n \geq C' d^5\log(d/\epsilon)/\epsilon^2$ for a sufficiently large constant $C'$. Finally, it also holds that $|I| \geq (1-\epsilon)n$ with probability $1-\exp(-\Omega(\epsilon n))$. It therefore holds with probability at least $0.99$ that for all unit vectors $u \in \RR^d$, \[\frac{1}{|I|} \sum_{i \in I} \langle X_i, u \rangle^4 \leq C''\] as claimed.
\end{proof}

\end{document}